\icmltitlerunning{Frequency Bias in Neural Networks for Input of Non-Uniform Density}
\newtheorem{theorem}[]{Theorem}
\newtheorem{lemma}[]{Lemma}
\newtheorem{proposition}[]{Proposition}
\newcommand{\Real}{\mathbb{R}}
\newcommand{\Sphere}{\mathbb{S}}
\newcommand{\norm}[1]{\left\lVert#1\right\rVert}
\newcommand{\abs}[1]{\left\vert#1\right\rvert}
\newcommand{\aw}{\mathbf{a}}
\newcommand{\bias}{\mathbf{b}}
\newcommand{\h}{\mathbf{h}}
\newcommand{\uu}{\mathbf{u}}
\newcommand{\vv}{\mathbf{v}}
\newcommand{\w}{\mathbf{w}}
\newcommand{\x}{\mathbf{x}}
\newcommand{\y}{\mathbf{y}}
\newcommand{\rb}[1]{\textcolor{blue}{[Ronen: #1]}}
\definecolor{mypink1}{rgb}{1, 0.6, 0}
\begin{document}

\twocolumn[
\icmltitle{Frequency Bias in Neural Networks for Input of Non-Uniform Density}




\begin{icmlauthorlist}
\icmlauthor{Ronen Basri}{wis}
\icmlauthor{Meirav Galun}{wis}
\icmlauthor{Amnon Geifman}{wis}
\icmlauthor{David Jacobs}{umd}
\icmlauthor{Yoni Kasten}{wis}
\icmlauthor{Shira Kritchman}{wis}
\end{icmlauthorlist}

\icmlaffiliation{wis}{Department of Computer Science and Applied Mathematics, Weizmann Institute of Science, Rehovot, Israel}
\icmlaffiliation{umd}{Department of Computer Science, Univeristy of Maryland, College Park, MD, USA}

\icmlcorrespondingauthor{Ronen Basri}{ronen.basri@weizmann.ac.il}

\icmlkeywords{Machine Learning, ICML}

\vskip 0.3in
]



\printAffiliationsAndNotice{}  

\begin{abstract}
Recent works have partly attributed the generalization ability of over-parameterized neural networks to frequency bias -- networks trained with gradient descent on data drawn from a uniform distribution find a low frequency fit before high frequency ones. As realistic training sets are not drawn from a uniform distribution, we here use the Neural Tangent Kernel (NTK) model to explore the effect of variable density on training dynamics. Our results, which combine analytic and empirical observations, show that when learning a pure harmonic function of frequency $\kappa$, convergence at a point $\x \in \Sphere^{d-1}$ occurs in time $O(\kappa^d/p(\x))$ where $p(\x)$ denotes the local density at $\x$. Specifically, for data in $\Sphere^1$ we analytically derive the eigenfunctions of the kernel associated with the NTK for two-layer networks. We further prove convergence results for deep, fully connected networks with respect to the spectral decomposition of the NTK. Our empirical study highlights similarities and differences between deep and shallow networks in this model.
\end{abstract}

\section{Introduction}

A key question in understanding the success of neural networks is: what makes over-parameterized networks generalize so well, avoiding solutions that overfit the training data? In search of an explanation, a number of recent papers \cite{farnia2018spectral,rahaman2019spectral,Xu2019} have suggested that training with gradient descent (GD) (as well as SGD) yields a frequency bias -- in early epochs training a neural net yields a low frequency fit to the target function, while high frequencies are learned only in later epochs, if they are needed to fit the data (see Figure \ref{fig:motivation}(top)). 

\begin{figure}[t]
    \centering
    \includegraphics[width=3.25cm]{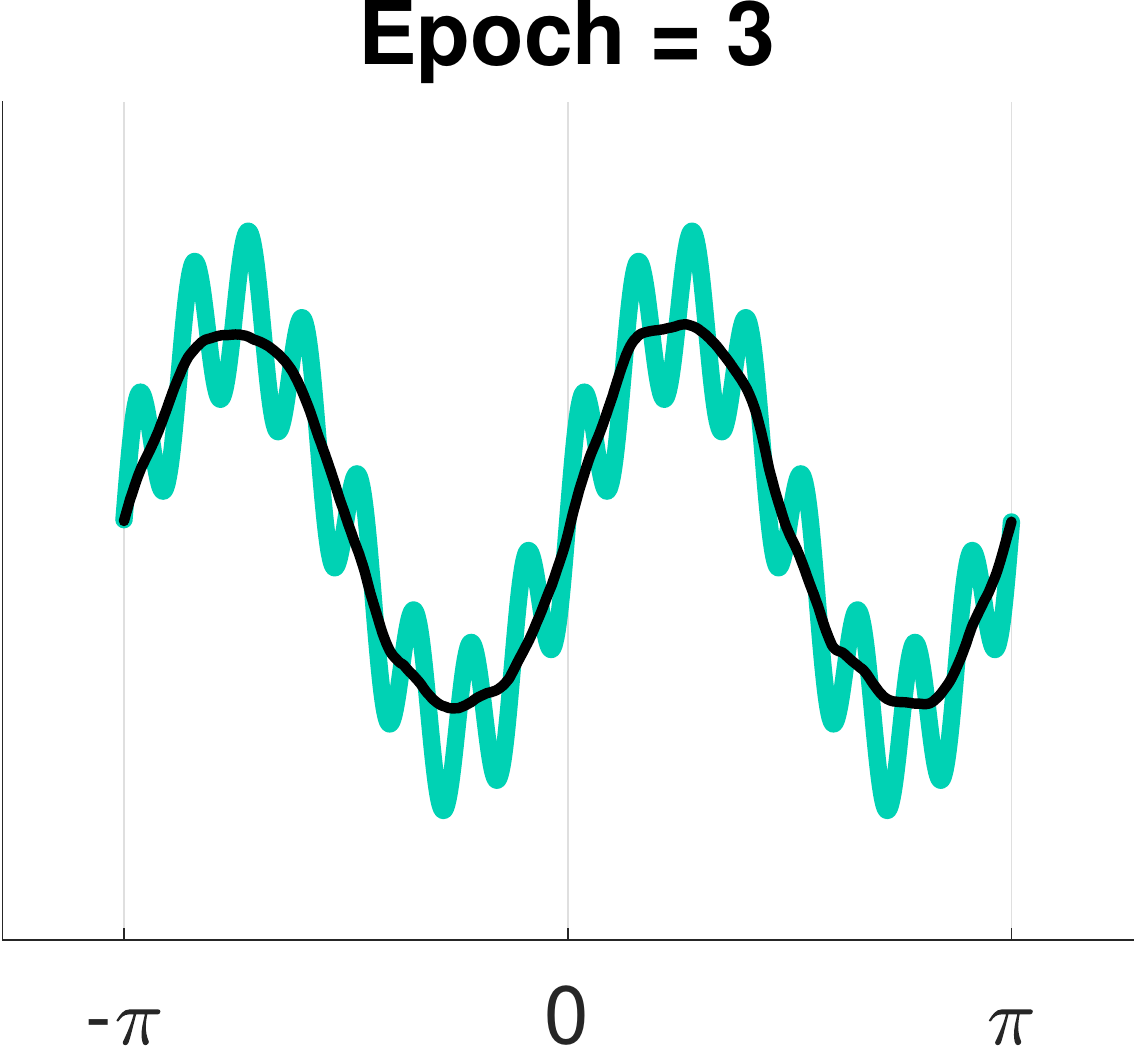}
    \includegraphics[width=3.25cm]{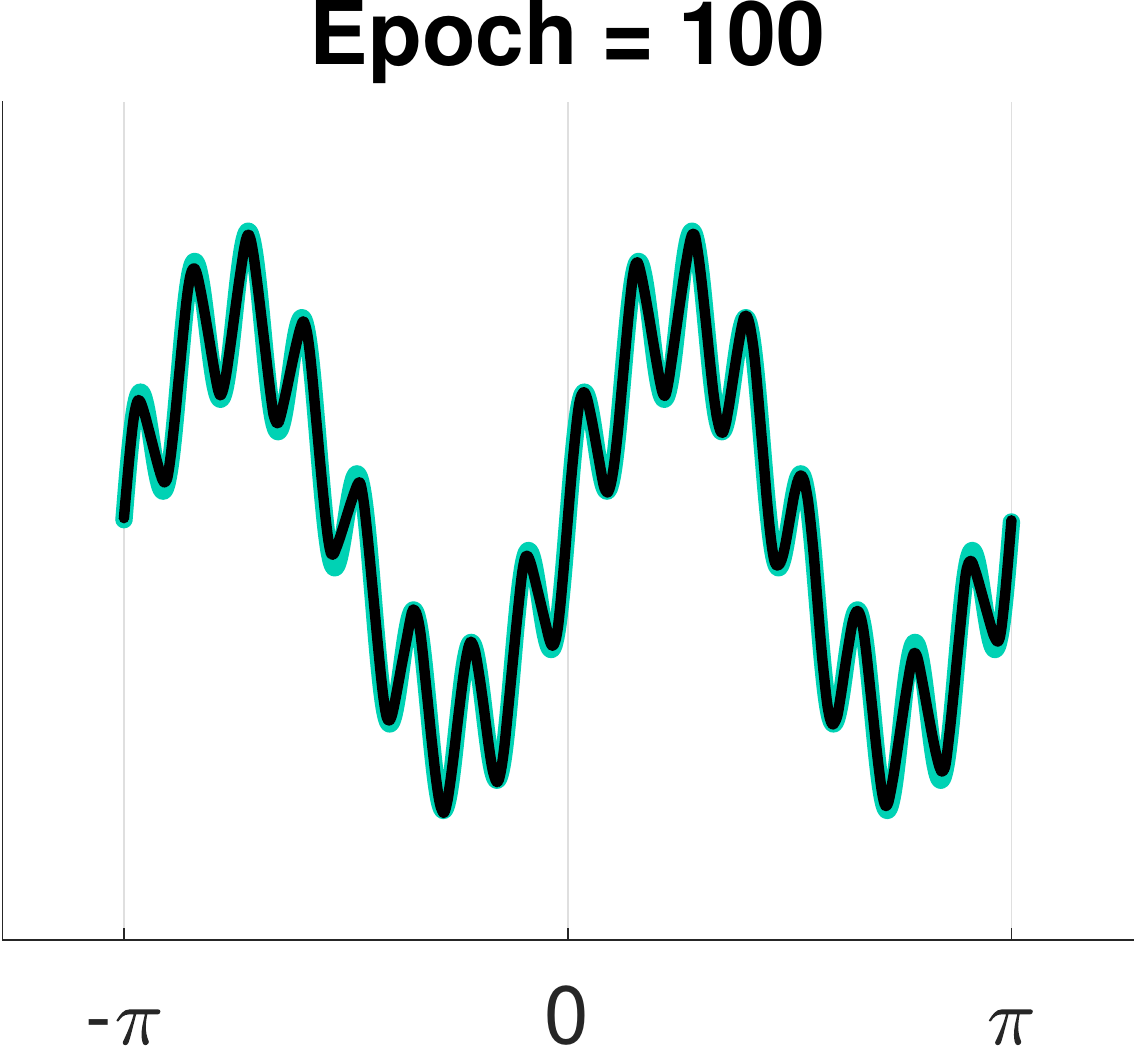}\\[0.2cm]
    \includegraphics[width=3.25cm]{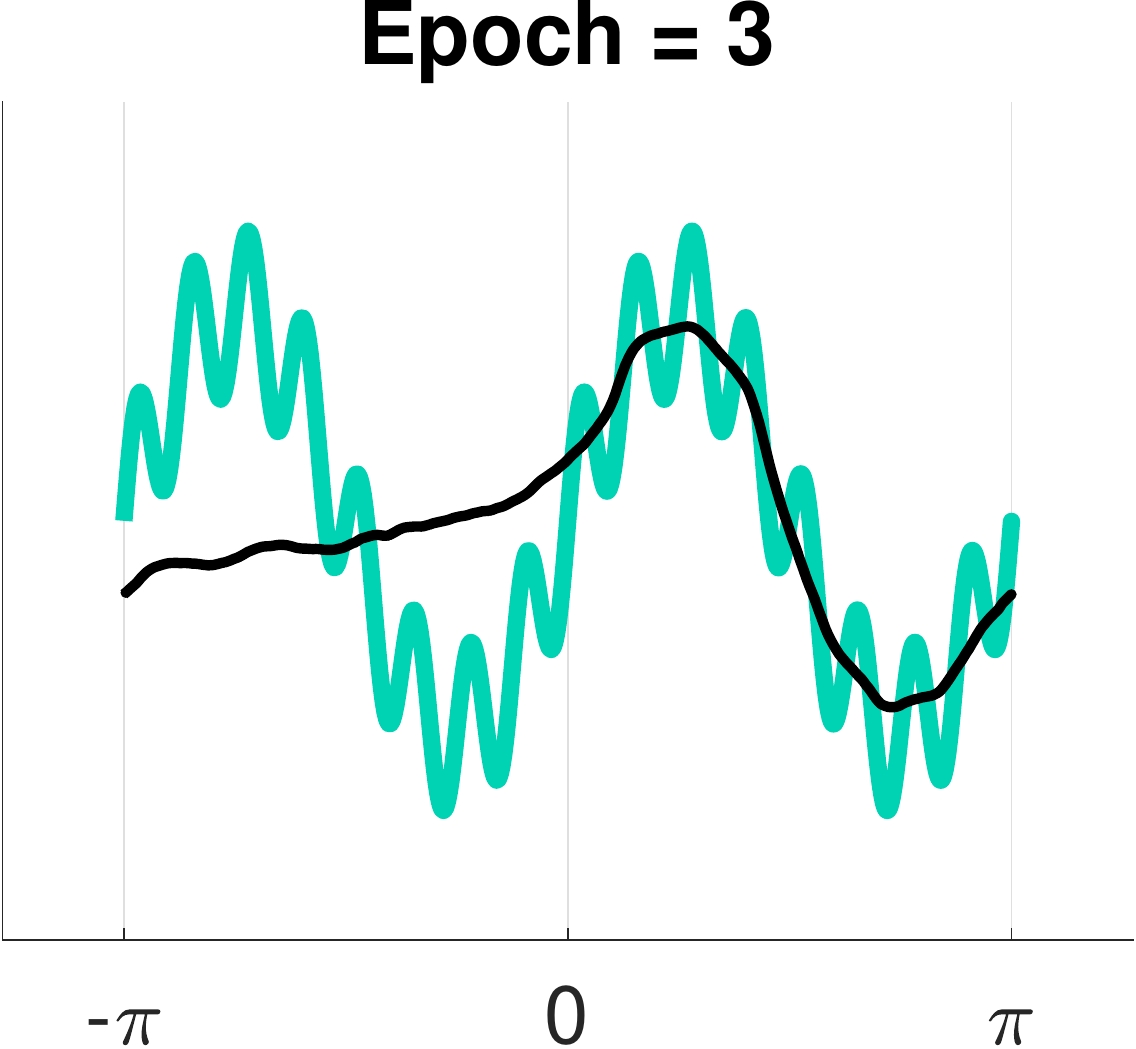}
    \includegraphics[width=3.25cm]{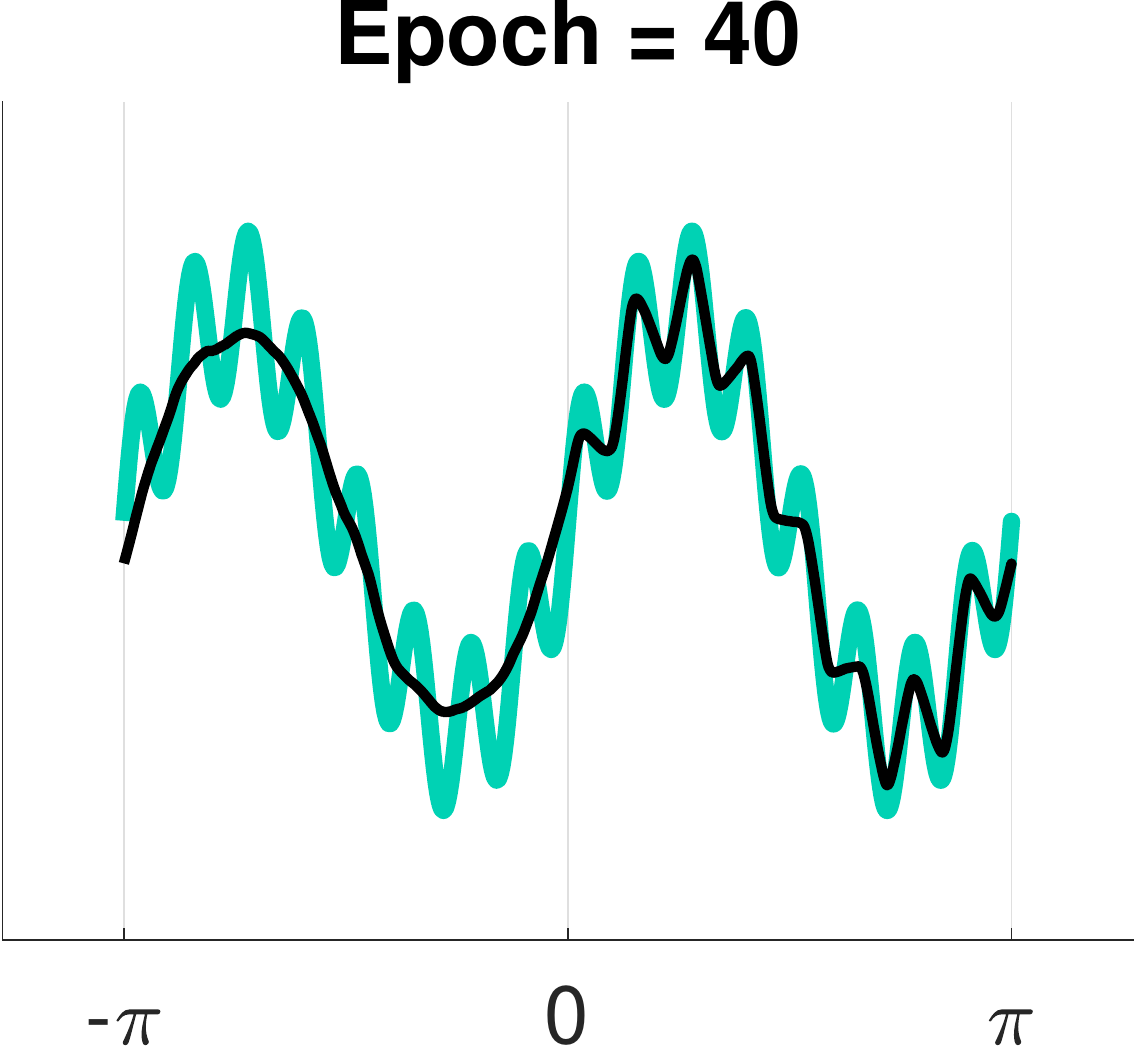}
    \caption{Frequency bias under uniform (top) and non-uniform (bottom) distributions. The light cyan line represents the target function which is composed of the sum of a low and high frequency functions. The thin black line represents the network output. Top: when training data is distributed uniformly, low frequency (left) is learned before high frequency (right). Bottom: with non-uniform distribution (positive region is dense, negative is sparse), a good low frequency fit for the low density region is obtained only after 40 epochs, but by then the network fits most of the high frequency component of the target function at the dense region.}
    \label{fig:motivation}
\end{figure}

This frequency bias has been carefully analyzed in the case of over-parameterized, two-layer networks with Rectified Linear Unit (ReLU) activation, when only the first layer is trained. The dynamics of GD in this case was shown to match the dynamics of GD for the corresponding Neural Tangent Kernel (NTK) \cite{arora2019fine,du2018gradient,Jacot2018}. Assuming the training data is distributed uniformly on a hypersphere, the NTK matrix forms a convolution on the sphere. Its eigenvectors consist of the spherical harmonic functions \cite{Basri2019,xie2017diverse}, and its eigenvalues shrink monotonically with frequency, yielding longer convergence times for high frequency components. Specifically, for training data on the circle, high frequencies are learned quadratically slower than low frequencies, and this frequency-dependent gap increases exponentially with dimension \cite{Basri2019,bietti2019inductive,cao2019}.

All this previous work assumed that training data is distributed uniformly. However, realistic training datasets are distributed with a non-uniform density. A natural question therefore is to what extent frequency bias is exhibited for such datasets? Below we provide evidence that frequency bias interacts with density. We show that in any region of the input space with locally constant density, low frequencies are still learned much faster than high frequencies, but the rate of learning also depends linearly on the density. This phenomenon is demonstrated in Figure \ref{fig:motivation}(bottom).

Our paper contains both theoretical and empirical results. We first focus on analyzing the NTK model for two-layer networks with ReLU activation and 2D input, normalized to lie on the unit circle, allowing for input drawn from a non-uniform density that is piecewise constant.  For this model we derive closed form expressions for its eigenfunctions and eigenvalues. These eigenfunctions contain functions of piecewise constant local frequency, with higher frequencies where the density of the training data is higher. This implies that we learn high frequency components of a target function faster in regions of higher density.  This also allows us to prove that a pure 1-dimensional sine function of frequency $\kappa$ is learned in time $O(\kappa^2/p^*)$, where $p^*$ denotes the minimum density in the input space. Our experiments illustrate these results and further suggest that for input on a $d-1$-dimensional hypersphere, spherical harmonics are learned in time $O(\kappa^d/p^*)$.  

We next examine the NTK for deep, fully connected (FC) networks. We first prove that given a target function $y(\x)$, training networks of finite width with GD converges to $y$ at a speed that depends on the projection of $y$ over the eigenvectors of the NTK, extending previous results proved for two-layer networks \cite{arora2019fine,cao2019}. We further show that for uniform data the eigenfunctions of NTK consist of the spherical harmonics. We complement these observations with several empirical findings. (1) We show that for uniformly distributed data the eigenvalues decay with frequency, suggesting that frequency bias exists also in deep FC networks. Moreover, similar to two-layer networks, a pure harmonic function of frequency $\kappa$ is learned in time $O(\kappa^d)$ asymptotically in $\kappa$. However, deeper networks appear to learn frequencies of lower $k$ faster than shallow ones. (2) For training data drawn from non-uniform densities the eigenfunctions of NTK appear indistinguishable from those obtained for two-layer networks, indicating that with deep nets learning a harmonic of frequency $\kappa$ should also require $O(\kappa^d/p^*)$ iterations.

Our results have several implications.  First, we extend results that have been proven for training data with a uniform density to the more realistic case of non-uniform density, also extending results for shallow networks to deep, fully connected networks.  These results support the idea that real neural networks have a frequency bias that can explain their ability to avoid overfitting.  Second, while it is not surprising that networks fit functions of all frequencies more slowly in regions with low data density, we demonstrate that this is the case and quantify this effect.  Our results have an interesting implication for training that uses early stopping to regularize the solution.  Suppose the signal one wishes to fit is low frequency, and it is corrupted by high frequency noise.  Because a network learns low frequency signals more slowly in regions of low density, by the time the signal is learned in these regions, the network will also have learned high frequency components of the noise in regions of high density. This is illustrated in Figure~\ref{fig:motivation}(bottom). 

\comment{
-------
Figures:
\begin{enumerate}
    \item Motivating example: Target function made of low frequency + high frequency noise (of small amplitude). Data is composed, say, of two constant densities. Show fit at different epochs. Essentially we want to show that network cannot fit the low frequency in all space at any given time. i.e. to get the low frequency at the sparse part you already fit the noise in the dense part \rb{Should we show in contrast that this is possible with uniform density? Maybe not..}
    \item 2 layers, NTK: the eigenfunctions obtained for a certain density. Also for 2D data.
    \item 2 layers, NTK: a plot of the eigenvalues + normalized by Z to show they all coincide.. Also for 2D data.
    \item 2 layers network: convergence times for several densities + normalized so that all parabolas coincide
    \item Inverse linear relation of convergence time and density. Also for 2D data.
    \item Deep NTK: eigenfunctions for uniform and non-uniform densities. (Maybe unnecessary because they are the same as the shallow case, or maybe plot just the ones for a deep net?)
    \item Deep, NTK: exponent as a function of number of layers for 1D, 2D and maybe 3D.
    \item Deep network: actual convergence times for 3, 5 and 7 layers. Show a fit to the previous figure. Also for 2D input.
\end{enumerate}
FC deep nets (NTK):
\begin{enumerate}
    \item Proof of convergence with fine grained.
    \item Eigen functions under the uniform density are spherical harmonics and are ordered according to frequency.
    \item Complexity (exponent) of convergence rate under the uniform density as a function of depth and dimension (graph).
    \item Run experiments to fit this graph (using Yoni's code).
    \item Is there degeneracy with very deep FC?
    \item Non-uniform for deep FC + experiment. Also Resnet?
    \item Verify theory for non-uniform. Are the eigenvalues identical to the uniform case?
    \item Probably not in this round: NTK for Resnet
\end{enumerate}
}

\section{Prior work}

Many recent papers attempt to explain the generalization ability of overparameterized nets. Perhaps the most convincing relate overparameterized networks to kernel methods. \cite{Jacot2018} identified a family of kernels, termed Neural Tangent Kernels, and showed that neural networks behave like these kernels, in the limit of infinite widths.
Related work investigated variants of these kernels, showing that networks of finite, albeit very large widths converge to zero training error almost always and deriving generalization bounds for such networks. These analyses were applied to two-layer networks \cite{bach2017breaking,bietti2019inductive,du2018gradient,Vempala2018,xie2017diverse}, multilayer perceptrons (i,e. fully connected), residual and convolutional networks  \cite{allen2018convergence,allenzhu,arora,huang2019dynamics,lee2019wide}. 

However, these kernel models have been criticised for requiring unrealistically wide networks. Additionally, it is still debated if such linear dynamics (referred to as ``lazy training'') fully explain the performance of neural networks. Recent theoretical and empirical results suggest that NTK models still somewhat underperform common nonlinear networks \cite{arora,Chizat2019,Novak2019,Woodworth2019}.

Other work suggested that networks are biased to learn simple functions, and in particular that GD proceeds by first fitting a low frequency function to the target function, and only fits the higher frequencies in later epochs
\cite{rahaman2019spectral,Xu2019,farnia2018spectral}. Additional work \cite{bach2017breaking,Basri2019,bietti2019inductive,cao2019} proved the existence of frequency bias in NTK models of two-layer networks and derived convergence rates of training as a function of target frequency. All of these works assumed that training data is distributed uniformly. \cite{Canu1999} proposed loss functions that allow higher frequency fit in regions where training data is dense, and only low frequency fit in the sparse regions. Our results suggest that such a penalization may be implicitly enforced in NTK models.

Classical work on kernel methods acknowledged the importance of understanding the eigenfunctions and eigenvalues of kernels for non-uniform data distributions, but focused mainly on bounding the difference between the empirical kernel matrix and the theoretical kernel for the given distribution (e.g., \cite{ShaweTaylor2005,Williams2000}). \cite{Liang2013} derived analytic expressions for the eigenfunctions of polynomial kernels. \cite{Goel2017} investigated the gram matrix of the data distribution and showed that sufficiently fast decay of its eigenvalues allows learnability by neural networks. We are unaware of works that derive analytic expressions for the eigenfunctions of NTK under non-uniform distributions.

\comment{Left outs: 
Papers showing implicit regularization (https://arxiv.org/abs/1810.01075).\\
Convergence results for two-layers (like Arora fine grained) + spherical harmonics for uniform distribution
(https://arxiv.org/pdf/1912.01198.pdf)\\
The dynamics of gradient descent bias the model towards simple solutions - initialization separates deep and shallow
(https://arxiv.org/pdf/1909.12051.pdf)
}

\section{Preliminaries}

We consider in this work NTK models for fully connected neural networks with rectified linear unit (ReLU) activations. These kernels are defined through the following formula
\begin{equation}\label{eq:kernel_main} 
    k(\x_i,\x_j) = \mathbb{E}_{\w \sim {\cal I}} \left< \frac{\partial f(\x_i,\w)}{\partial \w}, \frac{\partial f(\x_j,\w)}{\partial \w} \right>,
\end{equation}
where $f(\x,\w)$ is the network output for point $\x \in \Real^{d}$ with parameters $\w$, $\x_i$ and $\x_j$ are any two training points, and the expectation is over the possible initializations of $\w$, denoted ${\cal I}$ (usually normal distribution).

 We first consider a two layer network with bias:
\begin{equation}  \label{eq:network}
    f(\x; \w) = \frac{1}{\sqrt{m}} \sum_{r=1}^m a_r\sigma(\w_r^T \x + b_r),
\end{equation}
where $\|\x\|=1$ (denoted $\x \in \Sphere^{d-1}$) is the input, the vector $\w$ includes the weights and bias terms of the first layer, denoted respectively $W=[\w_1,...,\w_m] \in \Real^{d \times m}$ and $\bias=[b_1,...,b_m]^T \in \Real^m$, as well as the weights of the second layer, denoted $\aw=[a_1,...,a_m]^T \in \Real^m$. $\sigma$ denotes the ReLU function, $\sigma(x)=\max(x,0)$. Bias is important in the case of two-layer networks since \cite{Basri2019} without bias such networks are non-universal and cannot express harmonic functions of odd frequencies except frequency 1.

We then consider deep fully-connected networks with $L+1>2$ layers. For such networks we forgo the bias since our empirical results (Section \ref{sec:deep}) indicate that they are universal even without bias. These networks are expressed as 
\begin{align}\label{model:arora}
    & f(\x;\w) = W^{(L+1)} \cdot \sqrt{\frac{c_{\sigma}}{d_L}} \sigma \left(W^{(L)} \cdot \right. \nonumber \\ & \left. \sqrt{\frac{c_{\sigma}}{d_{L-1}}} \sigma \left( W^{(L-1)} \cdots \sqrt{\frac{c_{\sigma}}{d_{1}}} \sigma \left( W^{(1)} \x \right)\right) \right),
\end{align}
where $\x \in \Real^{d_1}$, $\|\x\|=1$, the parameters $\w$ include $W^{(L+1)},W^{(L)},...,W^{(1)}$, where $W^{(l)} \in \Real^{d_l \times d_{l-1}}$, $W^{(L+1)} \in \Real^{1 \times d_L}$, and $c_{\sigma} = 1/\left( \mathbb{E}_{z \sim \mathcal{N}(0,1)} [\sigma(z)^2] \right) = 2$.

We assume that $n$ training points are sampled i.i.d.\ from an arbitrary distribution $p(\x)$ on the hypersphere and that each sample $\x_i$ is supplied with a target value $y_i \in \Real$ from an unknown function $y_i=g(\x_i)$. Our theoretical derivations further assume that $p(\x)$ is piecewise constant. The network is trained to minimize the $\ell_2$ loss 
\begin{equation}
\Phi(\w)=\frac{1}{2} \sum_{i=1}^n (y_i - f(\x_i;\w))^2.    
\end{equation}
using gradient descent (GD). 

For our analysis, to simplify the NTK expressions, in the case of a two-layer network we only train the weights and bias of the first layer (as in \cite{arora2019fine,du2018gradient}). We initialize these weights from a normal distribution $\w_r^{(0)}, b_r^{(0)} \sim {\cal N}(0,\tau^2 I)$. We further initialize $a_r$ from a uniform distribution on $\{-1,1\}$ and keep those weights fixed. In the case of deep networks we train all the weights, initializing by $\w \sim {\cal N}(0,I)$.

We next provide expressions for the corresponding neural tangent kernels. For a two-layer network with bias where only the first layer weights are trained the corresponding NTK takes the form \cite{Basri2019}
\begin{equation}  \label{eq:kernel2layer}
    k(\x_i,\x_j) = \frac{1}{4\pi}(\x_i^T\x_j+1)(\pi - \arccos(\x_i^T\x_j)).
\end{equation}
When the training data is distributed uniformly, this kernel forms a convolution operator, and so its eigenfunctions are the spherical harmonics on the hypersphere $\Sphere^{d-1}$ (or Fourier series when $d=2$). The eigenvalues shrink at the rate of $O(1/\kappa^{d})$, where $\kappa$ denotes the frequency of the spherical harmonic functions. Gradient descent training of a target function composed of a pure harmonic requires a number of iterations that is inversely proportional to the corresponding eigenvalue, i.e., $O(\kappa^{d})$. \cite{bach2017breaking,Basri2019,bietti2019inductive,cao2019,xie2017diverse}

For a deep FC network the NTK is expressed by the following recursion \cite{arora, Jacot2018}
\begin{equation}  \label{eq:ntkdeep}
    \Theta_{\infty}^{(L)}(\x_i,\x_j) = \Theta_{\infty}^{(L-1)}(\x_i,\x_j) \dot{\Sigma}^{(L)}(\x_i,\x_j) + \Sigma^{(L)}(\x_i,\x_j),
\end{equation}
where for $h \in [L]$
\begin{align*}
\Sigma^{(0)}(\x_i,\x_j) &= \x_i^T \x_j \\ 
\Lambda^{(h)}(\x_i,\x_j) &= \begin{bmatrix} \Sigma^{(h-1)}(\x_i,\x_i) & \Sigma^{(h-1)}(\x_i,\x_j) \\
\Sigma^{(h-1)}(\x_j,\x_i) & \Sigma^{(h-1)}(\x_j,\x_j) 
\end{bmatrix}\\
\Sigma^{(h)}(\x_i,\x_j) &= c_{\sigma} \mathbb{E}_{(u,v) \sim \mathcal{N}(0,\Lambda^{(h)})} [\sigma(u) \sigma(v)] \\
\dot{\Sigma}^{(h)}(\x_i,\x_j) &= c_{\sigma} \mathbb{E}_{(u,v) \sim \mathcal{N}(0,\Lambda^{(h)})} [\dot{\sigma}(u) \dot{\sigma}(v)].
\end{align*}
Here $\dot{\sigma}(\cdot)$ denotes the step function (i.e., the derivative of the ReLU function).
The covariance matrices have the form $\Lambda = \begin{bmatrix} 1 & \rho \\ \rho & 1 \end{bmatrix}$ with $|\rho| \leq 1$, and the expectations have the following closed form expressions
\begin{align*}
&\mathbb{E}_{(u,v) \sim \mathcal{N}(0,\Lambda^{(h)})} [\sigma(u) \sigma(v)]=\frac{\rho(\pi-\arccos(\rho))+\sqrt{1-\rho^2} }{2 \pi} \\
&\mathbb{E}_{(u,v) \sim \mathcal{N}(0,\Lambda^{(h)})} [\dot{\sigma}(u) \dot{\sigma}(v)] = \frac{\rho(\pi-\arccos(\rho))}{2 \pi}.
\end{align*}

\section{The eigenfunctions of NTK for two-layer networks for non-uniform distributions}

We begin by investigating the NTK model for two-layer networks when the training is drawn from a non-uniform distribution. Focusing first on 1D target functions $y(\x):\Sphere^1 \rightarrow\Real$ and a piecewise constant data distribution $p(\x)$, we derive explicit expressions for the eigenfunctions and eigenvalues of NTK. This allows us to prove that learning a one-dimensional function of frequency $\kappa$ requires $O(\kappa^2/p^*)$ iterations, where $p^*$ denotes the minimal density in $p(x)$. We complement these theoretical derivations with experiments with functions in higher dimensions, which indicate that learning functions of frequency $\kappa$ in $\Sphere^{d-1}$ requires 
$O(\kappa^d/p^*)$ iterations.

Consider the NTK model described in  \eqref{eq:kernel2layer}, which corresponds to an infinitely wide, two-layer network for which only the first layer is trained. Suppose that $n$ training data points are sampled from a non-uniform, piecewise constant distribution $p(\x)$ on the circle, $\x \in \Sphere^1$. We then form an $n \times n$ matrix $H^p$ whose entries for samples $\x_i$ and $\x_j$ consist of $H^p_{ij}=k(\x_i,\x_j)$, with $k$ as defined in \eqref{eq:kernel2layer}. Following \cite{arora2019fine}, the convergence rates of GD for such a network will depend on the eigen-system of $H^p$. To analyze this eigen-system, we consider the limit of $H^p$ as the number of points goes to infinity. In this limit the eigen-system of $H^p$ approaches the eigen-system of the kernel $k(\x_i,\x_j)p(\x_j)$, where the eigenfunctions $f(x)$ satisfy the following equation \cite{ShaweTaylor2005,Williams2000},
\begin{equation}  \label{eq:eigfun}
    \int_{\Sphere^1} k(\x_i,\x_j)p(\x_j)f(\x_j)d\x_j = \lambda f(\x_i).
\end{equation}
This is a homogeneous Fredholm Equation of the second kind with the non-symmetric polar kernel $k(\x_i,\x_j)p(\x_j)$. The existence of the eigenfunctions with real eigenvalues is established by symmetrizing the kernel. Let $\tilde k(\x_i,\x_j) = p^{1/2}(\x_i)k(\x_i,\x_j)p^{1/2}(\x_j)$ and $g(\x)=p^{1/2}(\x)f(\x)$. Multiplying \eqref{eq:eigfun} by $p^{1/2}(\x_i)$ yields
\begin{equation}
    \int_{\Sphere^d} \tilde k(\x_i,\x_j)g(\x_j)d\x_j = \lambda g(\x_i),
\end{equation}
implying the eigenfunctions exist and $\lambda$ is real.

We next parameterize the unit circle by angles, and denote by $x,z$ any two angles. We can therefore express \eqref{eq:eigfun} as
\begin{equation}  \label{eq:eigf_1d}
    \int_{x-\pi}^{x+\pi} k(x,z)p(z)f(z)dz = \lambda f(x),
\end{equation}
where the kernel in \eqref{eq:kernel2layer} expressed in terms of angles reads
\begin{equation}  \label{eq:kernel_1d}
    k(x,z) = \frac{1}{4\pi} (\cos(x-z)+1)(\pi-|x-z|).
\end{equation}
Both $p(x)$ and $f(x)$ are periodic with a period of $2\pi$ since $x$ lies on the unit circle.

\subsection{Explicit expressions for the eigenfunctions}

Below we solve \eqref{eq:eigf_1d} and derive an explicit expression for the eigenfunctions $f(x)$. Our derivation assumes that $p(x)$ is piecewise constant. While this assumption limits the scope of our solution, empirical results suggest that when $p(x)$ changes continuously the eigenfunctions are modulated continuously, consistently with our solution. We summarize:
\begin{proposition}
Let $p(x)$ be a piecewise constant density function on $\Sphere^1$. Then the eigenfunctions in \eqref{eq:eigf_1d} take the general form
\begin{equation}  \label{eq:solution}
    f(x) = a(p(x)) \cos\left(\frac{q}{Z}\Psi(x)+b(p(x))\right),
\end{equation}
where $q$ is integer, $\Psi(x)=\int_{-\pi}^x \sqrt{p(\tilde x)}d\tilde x$ and $Z=\frac{1}{2\pi}\Psi(\pi)$.
\end{proposition}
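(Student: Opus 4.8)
The plan is to convert the integral equation \eqref{eq:eigf_1d} into a piecewise ordinary differential equation. The key observation is that the kernel \eqref{eq:kernel_1d}, regarded as a function of $x$ with $z$ fixed, is annihilated by the constant-coefficient operator $\mathcal{L}=\partial_x^2(\partial_x^2+1)^2$ up to a distribution supported on the diagonal (one computes $\mathcal{L}_x k(x,z)$ to be a combination of $\delta(x-z)$ and its derivatives), while $k$ together with its first two $x$-derivatives vanishes at $x=z\pm\pi$. Applying $\mathcal{L}$ to both sides of \eqref{eq:eigf_1d} and using Leibniz's rule for the moving endpoints therefore collapses the integral operator: the diagonal distributions contribute local terms in $p(x)f(x)$ and its derivatives, and the surviving endpoint terms contribute a linear combination of $p(x+\pi)f(x+\pi)$ and its derivatives (using $2\pi$-periodicity to identify $x-\pi$ with $x+\pi$). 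On any arc where $p\equiv p_0$ is constant this yields a linear constant-coefficient ODE of the schematic form $\lambda(\partial_x^6+2\partial_x^4+\partial_x^2)f=-\tfrac{p_0}{2\pi}(2\partial_x^4+\partial_x^2+1)f+(\text{antipodal terms})$.

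Next I would solve this ODE on each arc and identify the relevant solution. Among the homogeneous modes there is an oscillatory pair $\cos(\omega_0 x),\sin(\omega_0 x)$ whose frequency is fixed by the dispersion relation read off from the characteristic polynomial; for the small eigenvalues governing the high-frequency eigenfunctions the dominant balance gives $\omega_0=\sqrt{p_0/(\pi\lambda)}\,(1+o(1))$. The crucial point is that $\omega_0\propto\sqrt{p_0}$ with a proportionality constant common to all arcs, since $\lambda$ is global; denoting this constant by $q/Z$ and using $\Psi'=\sqrt{p}$ (so $\Psi$ is affine with slope $\sqrt{p_0}$ on the arc), the eigenfunction restricted to the arc becomes exactly $a_0\cos\!\big(\tfrac{q}{Z}\Psi(x)+\phi_0\big)$. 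One must also argue that the remaining modes of the ODE --- exponentially growing/decaying ones, or $\cos x,\sin x$-type ones --- do not contribute to leading order, which follows from boundedness, periodicity, and the requirement that a single $\lambda$ serve every arc.

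Then I would match these local solutions across the jumps of $p$. Since $f=\lambda^{-1}$ times an integral against $k$, it inherits from $k$ exactly the regularity that pins down how the amplitude $a_0$ and phase offset $\phi_0$ update at each discontinuity; because this update depends only on the two densities adjacent to the jump (and on the common constant $q/Z$), the amplitude can be written as a function $a(p(x))$ of the local density and the phase as $\tfrac{q}{Z}\Psi(x)+b(p(x))$ --- the continuously accumulating part of the phase is precisely $\tfrac{q}{Z}\int^x\sqrt{p}$, which is why $\Psi$ enters. Finally, going once around the circle shifts the argument by $\tfrac{q}{Z}\big(\Psi(x+2\pi)-\Psi(x)\big)=2\pi q$, so single-valuedness of $f$ forces $q$ to be an integer; checking that the amplitude and phase offsets are consistent around the loop finishes the argument. (Alternatively, once \eqref{eq:solution} is guessed, one may substitute it into \eqref{eq:eigf_1d} and verify directly that it solves the equation for suitable $a,b$, reading off $\lambda$ in the process.)

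The main obstacle is the bookkeeping in the first and third steps. The antipodal endpoint terms produced by Leibniz's rule make the per-arc ODE non-autonomous --- it is forced by $f(x+\pi)$ --- so one must show this forcing is consistent with the sinusoidal ansatz, as happens in the uniform case where $f(x+\pi)=\pm f(x)$ and the system closes; and the matching at density jumps has to be done carefully enough to extract the specific phase $\tfrac{q}{Z}\Psi(x)$ rather than merely some affine function of $\Psi$. Controlling the non-oscillatory modes of the sixth-order ODE is the other delicate point.
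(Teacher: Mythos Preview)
Your strategy---differentiate \eqref{eq:eigf_1d} enough times to collapse it to a linear ODE on each arc---is exactly the paper's. The paper takes six successive $x$-derivatives (combining intermediate identities along the way) and, on arcs where $p$ is constant, arrives at an equation of precisely the shape you describe, with antipodal terms in $\bar f=f(x+\pi)$ and $\bar p=p(x+\pi)$ on the right-hand side.

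Where you diverge is in how you handle those antipodal terms, which you correctly flag as the main obstacle. You propose to carry the forced, non-autonomous ODE and argue asymptotically that the forcing is consistent with the sinusoidal ansatz. The paper instead \emph{removes} the coupling by a preprocessing trick: one may assume without loss that $p$ has period $\pi$ (by duplicating the data, mapping $[0,4\pi)\to[0,2\pi)$), after which $\bar p=p$ and the relevant eigenfunctions satisfy $\bar f=-f$. The antipodal terms then fold back into the left-hand side, and the resulting autonomous sixth-order ODE factors \emph{exactly}: its characteristic polynomial in $s=r^{2}$ is $(\pi\lambda s+p)(s+1)^{2}$. Hence the second-order equation $f''=-\tfrac{p(x)}{\pi\lambda}f$ sits inside it, and the local frequency is $\sqrt{p_j/(\pi\lambda)}$ exactly rather than $(1+o(1))$. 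This also disposes of your other worry about extra modes: the repeated root $s=-1$ contributes $\cos x,\sin x,x\cos x,x\sin x$; the last two are ruled out by periodicity, and the first two are spurious solutions introduced by differentiation (they satisfy the ODE for every $\lambda$ but not the original integral equation for the given $\lambda$).

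With this reduction in hand, your Steps~2 and~3 go through cleanly: on each arc the eigenfunction is exactly $a_j\cos\!\big(\sqrt{p_j/(\pi\lambda)}\,x+b_j\big)$, the pieces are matched across density jumps by $C^{1}$ continuity of $f$ (inherited from the integral representation), and periodicity in $\Psi$ quantizes $q$ as you say. So the missing idea is just the period-$\pi$ reduction; once you have it, the exactness of the dispersion relation and the mode selection both become straightforward.
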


Note that if $p(x)=p_j$ is constant in a connected region $R_j \subseteq \Sphere^1$, then \eqref{eq:solution} can be written as
\begin{equation}  \label{eq:solutionpc}
    f(x)=a_j \cos\left(\frac{q\sqrt{p_j}x}{Z}+b_j\right), \forall x \in R_j.
\end{equation}
In other words, over the region $R_j$, this is a cosine function with frequency proportional to $\sqrt{p_j}$. A plot of eigenfunctions for a piecewise constant distribution is shown in Fig.~\ref{fig:efun1d}.

\begin{figure}[tb]
    \centering
    \includegraphics[width=\linewidth]{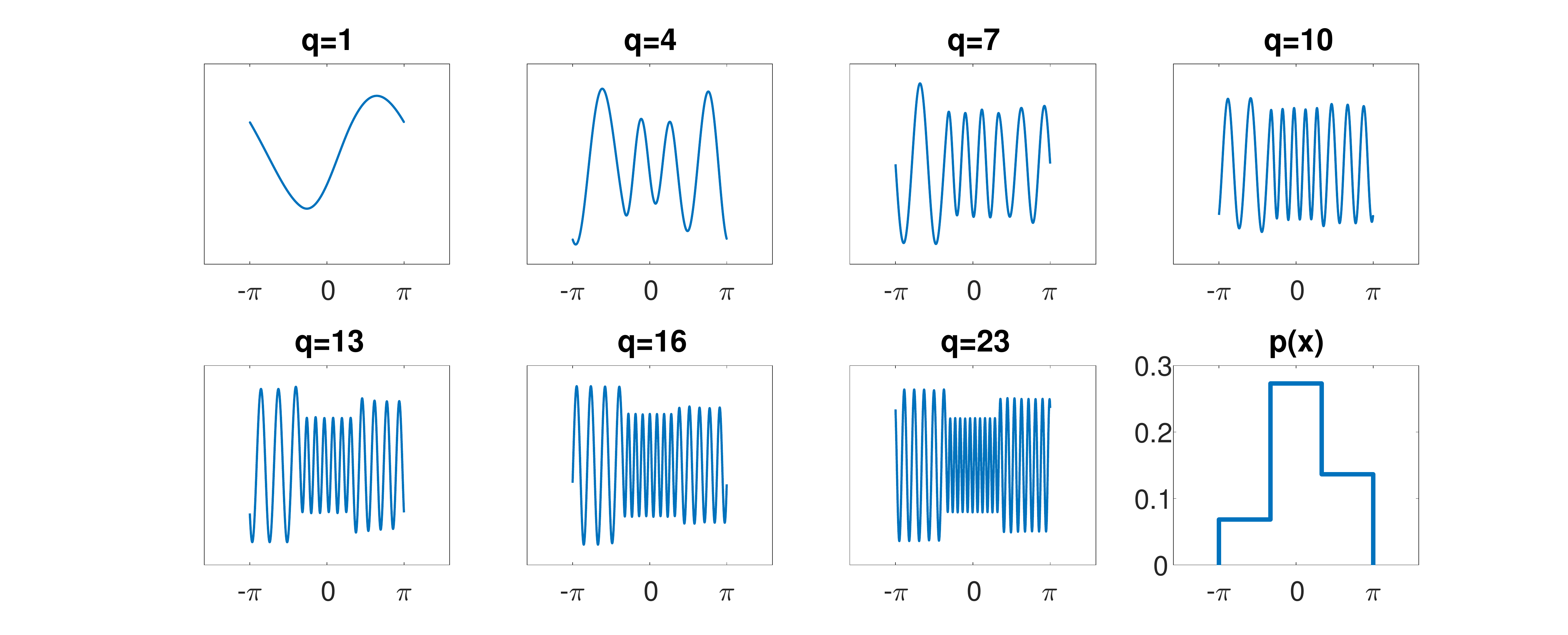}
    \caption{For the NTK of a two-layer network with bias we plot its eigenfunctions (in a decreasing order of eigenvalues) under a non-uniform data distribution in $\Sphere^1$. Here we used a density composed of three constant regions with $p(x) \in 3/(2\pi)\{1/7,2/7,4/7\}$ (bottom right plot).}
    \label{fig:efun1d}
%
    \centering
    \includegraphics[width=4.5cm]{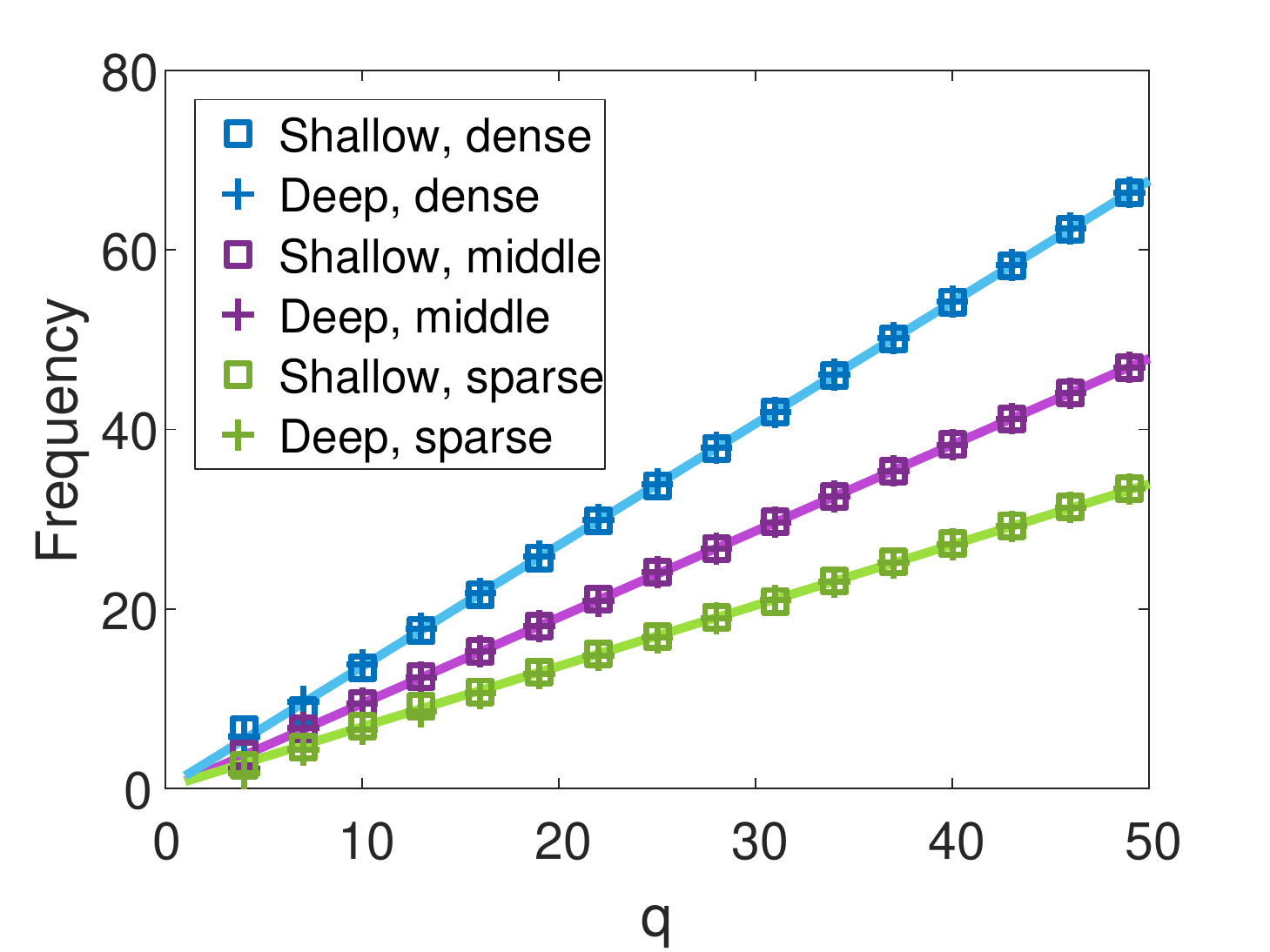}
    \caption{The local frequency in the eigenfunctions within each of the three constant region densities in Figure~\ref{fig:efun1d}, plotted for both a two-layer and deep (depth=10) networks (marked respectively by squares and plus signs). Measurements are obtained by applying FFT to each region. The measurements are in close match to our formula \eqref{eq:solutionpc} (solid line).}
    \label{fig:freq1d}
\end{figure}

The proof of the proposition relies on a lemma, proved in supplementary material, stating that the solution to \eqref{eq:eigf_1d} satisfies the following second order ordinary differential equation (ODE) \begin{equation}  \label{eq:2nd_order}
    f''(x)=-\frac{p(x)}{\pi\lambda}f(x).
\end{equation}
In a nutshell, the lemma proved by applying a sequence of six derivatives to \eqref{eq:eigf_1d} with respect to $x$, along with some algebraic manipulations, yielding a sixth order ODE for $f(x)$. Assuming that $p(x)$ is piecewise constant simplifies the ODE. Then \eqref{eq:2nd_order} is obtained by restricting $p(x)$ to have a period of $\pi$, but this restriction can be lifted by preprocessing the data in a straightforward way without changing the function that needs to be learned.

Eq.~\eqref{eq:2nd_order} has the following general solutions
\begin{equation}
    f(x) = Ae^{i\frac{\Psi(x)}{\sqrt{\pi\lambda}}x} + Be^{-i\frac{\Psi(x)}{\sqrt{\pi\lambda}}x},
\end{equation}
such that the derivative of $\Psi$ is $\Psi'(x)=\sqrt{p(x)}$, resulting in real eigenfunctions of the form
\begin{equation}
    f(x) = a(p(x)) \cos\left(\frac{\Psi(x)}{\sqrt{\pi\lambda}}x + b(p(x))\right).
\end{equation}
As with the uniform distribution, due to periodic boundary conditions there is a countable number of eigenvalues, and those can be determined (up to scale) using the known eigenvalues for the uniform case \cite{Basri2019}. With this we obtain
\begin{eqnarray}  \label{eq:ck1}
    \lambda &=&
\left\{ \begin{array}{ll}
     Z^2 \left( \frac{1}{2\pi^2} + \frac{1}{8} \right) & q=0\\[0.06cm]
     Z^2 \left( \frac{1}{\pi^2} + \frac{1}{8} \right) & q=1\\[0.06cm]
     \frac{Z^2(q^2+1)}{\pi^2 (q^2-1)^2}& q \ge 2 ~~\rm{even}  \\[0.06cm]
     \frac{Z^2}{\pi^2 q^2} & q \ge 2 ~~\rm{odd.} 
\end{array}\right.
\end{eqnarray}
$q$ is integer, and there is one eigenfunction for $q=0$ and two eigenfunctions for every $q>0$. Figure \ref{fig:ev} shows a plot of the eigenvalues computed for various densities.

\begin{figure}[tb]
    \centering
    \includegraphics[width=3.5cm]{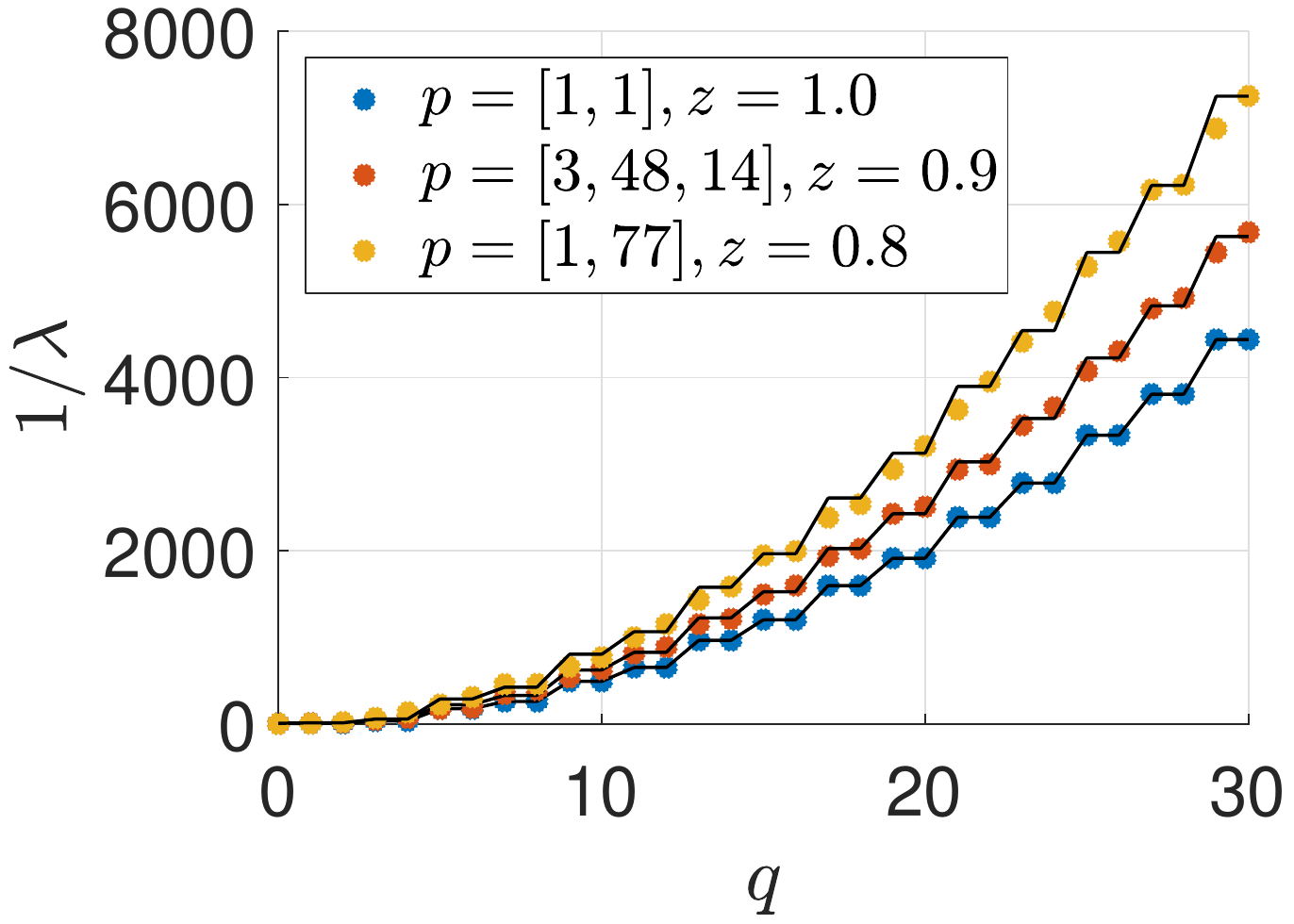}
    \caption{The kernel eigenvalues for several distributions. The formula (marked by the solid lines) closely matches the eigenvalues $H^p$ computed numerically using $50K$ points.}
    \label{fig:ev}
\end{figure}

\begin{figure}[tb]
    \centering
    \includegraphics[width=\linewidth]{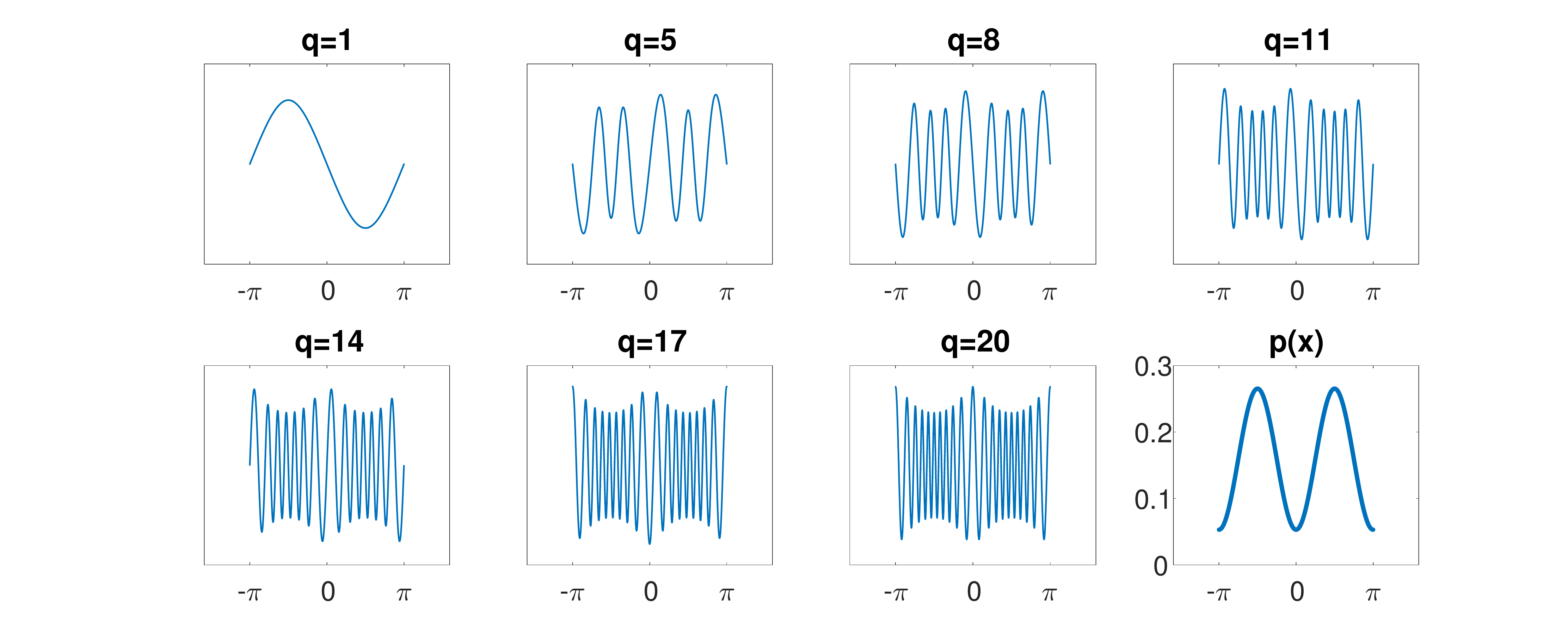} 
    \caption{For the NTK of a two-layer network we plot the eigenvectors of $H^p$ for a continuous distribution, $p(x)=\frac{3\cos(2x+\pi)+4.5}{9\pi}$ (bottom right).}
    \label{fig:efun1dcos}
\end{figure}

The amplitudes and phase shifts are determined by requiring the eigenfunctions to be continuous and differentiable everywhere. We show in supplementary material that for two neighboring regions, $j,j+1$ it holds that if $p_j\leq p_{j+1}$ then the ratio of the amplitudes is bounded (tightly) for different values of $p_j$ and $p_{j+1}$ as follows:
\begin{equation}
    1\leq \frac{a_j}{a_{j+1}} \leq \sqrt{\frac{p_{j+1}}{p_j}}.
\end{equation}

Figure~\ref{fig:efun1d} shows the eigenvectors and eigenvalues for an example of a piecewise constant distribution. It can be seen that each eigenfunction consists of a piecewise sine function; i.e., the eigenfunctions in every region where $p(x)$ is constant form pure sine functions with frequency that changes from one region to the next. As we inspect eigenfunctions with decreasing eigenvalues we find, as our theory shows (see Figure~\ref{fig:freq1d}), that the frequencies increase in all regions, but for all eigenfunctions they maintain constant ratios that are equal to the ratios between the square roots of the corresponding densities. Finally, Figure \ref{fig:efun1dcos} shows the eigenvectors of $H^p$ for a continuous distribution, showing similar behaviour to our analytic expressions.

\subsection{Time to convergence}
Determining the eigenfunctions and eigenvalues of the NTK allows us to predict the number of iterations needed to learn target functions and to understand effects due to varying densities. To understand this we consider target functions of the form $g(x)=\cos(\kappa x)$ where $x$ is drawn from a piecewise constant distribution $p(x)$ on $\Sphere^1$. Denote by $R_j \subseteq \Sphere^1$, $1 \le j \le l$ the regions of constant density. Loosely speaking (see Figure \ref{fig:illustration}), for each region $R_j$ we expect $g(x)$ to correlate well with one eigenfunction (and perhaps to additional ones, but with less energy). Of these, the region corresponding to the lowest density should correlate with an eigenfunction with the smallest eigenvalue. This eigenvalue, which depends on both the target frequency $\kappa$ and the density $p(x)$ within that region, will determine the number of iterations to convergence. This is summarized in the following theorem.
\begin{theorem}  \label{thm:convergence}
Let $p(x)$ be a piecewise constant distribution on $\Sphere^1$. Denote by $u^{(t)}(x)$ the prediction of the network at iteration $t$ of GD. For any $\delta>0$ the number of iterations $t$ needed to achieve $\|g(x)-u^{(t)}(x)\|<\delta$ is $\tilde O(\kappa^2/p^*)$, where $p^*$ denotes the minimal density of $p(x)$ in $\Sphere^1$ and $\tilde O(.)$ hides logarithmic terms.
\end{theorem}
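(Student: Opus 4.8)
The plan is to reduce the statement to the spectral analysis of the NTK already carried out above, in the fine-grained convergence framework of \cite{arora2019fine} (here extended to non-uniform data). Recall that for NTK regression the residual $g-u^{(t)}$ decomposes along the eigenfunctions $\{f_q\}$ of the kernel operator in \eqref{eq:eigf_1d} — equivalently, in the large-$n$ limit, along the eigenvectors of $H^p$, with the eigenvector/kernel-operator correspondence controlled by the standard perturbation bounds of \cite{ShaweTaylor2005,Williams2000} — and that the component of the residual along $f_q$ contracts by the factor $(1-\eta\lambda_q)$ at each GD step, with $\lambda_q$ the eigenvalue in \eqref{eq:ck1} and $\eta$ the learning rate. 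Taking $\eta=\Theta(1/\lambda_{\max})$ with $\lambda_{\max}=\Theta(Z^2)=\Theta(1)$ (from \eqref{eq:ck1} and $\int p=1$), the time to drive the residual below $\delta$ is governed by the \emph{smallest} $\lambda_q$ among the $f_q$ onto which $g(x)=\cos(\kappa x)$ has non-negligible projection. So the proof has three steps: (i) identify that band of eigenfunctions; (ii) read off its smallest eigenvalue from \eqref{eq:ck1}; (iii) control the tail outside the band. Passing from the trained finite-width network to the kernel operator is routine given \cite{arora2019fine,du2018gradient}.

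For step (i) I use the explicit form \eqref{eq:solutionpc}: on a region $R_j$ of constant density $p_j$, $f_q$ is a cosine of frequency $q\sqrt{p_j}/Z$, while $g|_{R_j}$ is a cosine of frequency $\kappa$. An elementary oscillatory-integral bound (product-to-sum, then integrate a single cosine over an interval) gives $\bigl|\int_{R_j} g\,f_q\,p_j\,dx\bigr|=O\!\bigl(|a_{q,j}|\,p_j/|\kappa-q\sqrt{p_j}/Z|\bigr)$ whenever $q\sqrt{p_j}/Z\neq\kappa$; the amplitude bound $1\le a_j/a_{j+1}\le\sqrt{p_{j+1}/p_j}$ established above, together with $L^2_p$-normalization of $f_q$, keeps $|a_{q,j}|=\Theta(1)$ uniformly in $q$. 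Summing over the finitely many regions, $g$ concentrates on eigenfunctions with index $q$ in $O(1)$ neighborhoods of the values $\kappa Z/\sqrt{p_j}$, the largest of which is $q^*\approx\kappa Z/\sqrt{p^*}$, at the sparsest region. For step (ii), plugging $q=\Theta(\kappa/\sqrt{p^*})$ into \eqref{eq:ck1} gives $\lambda_q=\Theta(Z^2/q^2)=\Theta(p^*/\kappa^2)$ in every case (the even/odd split of \eqref{eq:ck1} only changes constants), whence $1/(\eta\lambda_{q^*})=O(\kappa^2/p^*)$ and after $t=O\!\bigl(\tfrac1{\eta\lambda_{q^*}}\log\tfrac1\delta\bigr)=\tilde O(\kappa^2/p^*)$ steps every residual component in the band has dropped below $\delta/2$.

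For step (iii) I split $g=g_{\le Q}+g_{>Q}$ at $Q=(1+\gamma)\kappa Z/\sqrt{p^*}$. On the head, every eigenvalue involved is $\ge\lambda_Q=\Theta(p^*/\kappa^2)$, handled as in step (ii); for the tail, the oscillatory bound yields $\sum_{q>Q}\langle g,f_q\rangle_p^2=O\!\bigl(1/(\gamma\kappa\sqrt{p^*})\bigr)$ — the sum is dominated by $q$ just above $Q$, where the frequency gap $q\sqrt{p^*}/Z-\kappa\gtrsim\gamma\kappa$ is smallest — while $\|g\|^2=\Theta(1)$ bounds the rest via Parseval. Choosing $\gamma$ a suitable constant (at worst polynomial in $1/\delta$, which for $\kappa$ large is lower order and is absorbed into the $\log(1/\delta)$ hidden by $\tilde O$) makes $\|g_{>Q}\|<\delta/4$; since the GD iterate never overshoots, $\|u^{(t)}_{>Q}\|\le\|g_{>Q}\|$ too, and the triangle inequality gives $\|g-u^{(t)}\|<\delta$ at $t=\tilde O(\kappa^2/p^*)$.

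The main obstacle is step (iii): the uniform control of the tail. In the uniform-density case $g=\cos(\kappa x)$ is \emph{exactly} one eigenfunction and there is no tail; here the rigid mismatch between the target's constant frequency $\kappa$ and the eigenfunctions' density-dependent local frequencies $q\sqrt{p_j}/Z$ spreads the energy of $g$ over a whole band of eigenfunctions, and the oscillatory bound degrades precisely at the band edge $q\approx\kappa Z/\sqrt{p^*}$ — exactly where the bottleneck eigenvalue sits. Making the bookkeeping there clean — reconciling the amplitudes $a_{q,j}$, the continuity/differentiability matching at region boundaries, and the even/odd branches of \eqref{eq:ck1} — is the technical heart of the argument.
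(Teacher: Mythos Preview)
Your proposal is correct and follows essentially the same route as the paper: an oscillatory-integral bound on each region gives $|\langle g,f_q\rangle_p|=O(1/q)$ once the local frequency $q\sqrt{p_j}/Z$ exceeds $2\kappa$ everywhere, i.e.\ for $q\gtrsim \kappa Z/\sqrt{p^*}$; summing yields a tail bound (the paper packages this as a separate lemma with cutoff $n_\kappa=\max\{4Z\kappa/\sqrt{p^*},\,C/(\delta^2 p^*)\}$ and uses the explicit amplitude estimate where you write $|a_{q,j}|=\Theta(1)$), and the bottleneck eigenvalue $\lambda_{n_\kappa}=\Theta(p^*/\kappa^2)$ then gives $t=\tilde O(\kappa^2/p^*)$. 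One small correction to your step~(iii): taking $\gamma$ polynomial in $1/\delta$ would put a $\mathrm{poly}(1/\delta)$ factor into $Q^2$ and hence into the time bound, which $\tilde O$ does \emph{not} absorb; the paper instead fixes the cutoff at $2Z\kappa/\sqrt{p^*}$ (your $\gamma=1$) and restricts explicitly to the asymptotic regime $\kappa\gtrsim 1/(\delta^2\sqrt{p^*})$, where your own tail estimate $O(1/(\kappa\sqrt{p^*}))$ is already below $\delta^2$ --- you should do the same.
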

Proving this theorem is complicated by the fact that (1) the frequency of the target function may not be exactly represented in the eigenfunctions of the kernel, due to the discrete number of eigenfunctions, and (2) the eigenfunctions restricted to any given region $R_j$ are not orthogonal. These two properties may result in non-negligible correlations of $g(x)$ with eigenfunctions of yet smaller eigenvalues. Therefore, to prove Theorem \ref{thm:convergence} we first inspect the projections of $g(x)$ onto the eigenfunctions corresponding to such small eigenvalues and prove a bound on this tail. Subsequently we use this bound to prove the convergence rate in the theorem. The proofs are provided in the supplementary material.

\begin{figure}[tb]
    \centering
    \includegraphics[height=2.6cm]{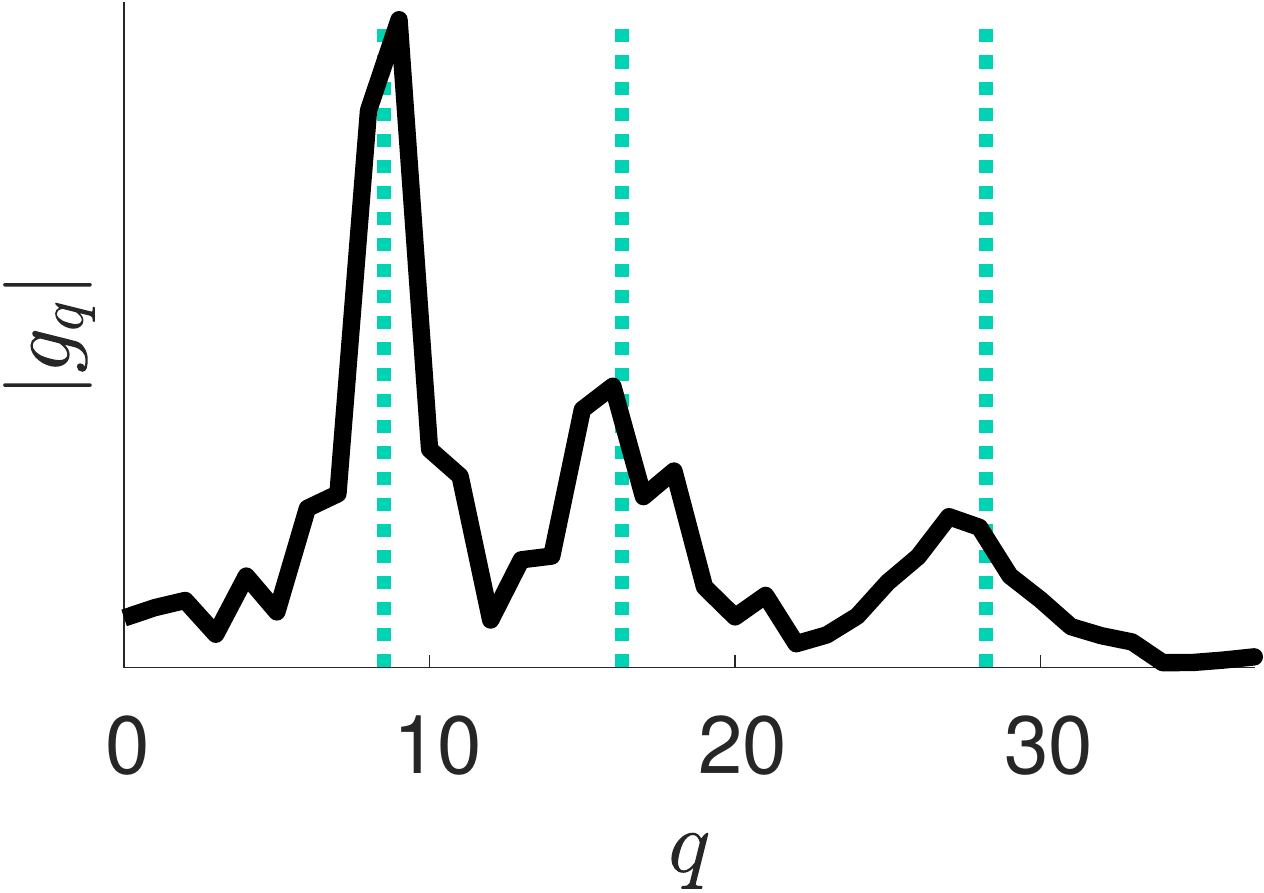}
    \includegraphics[height=2.6cm]{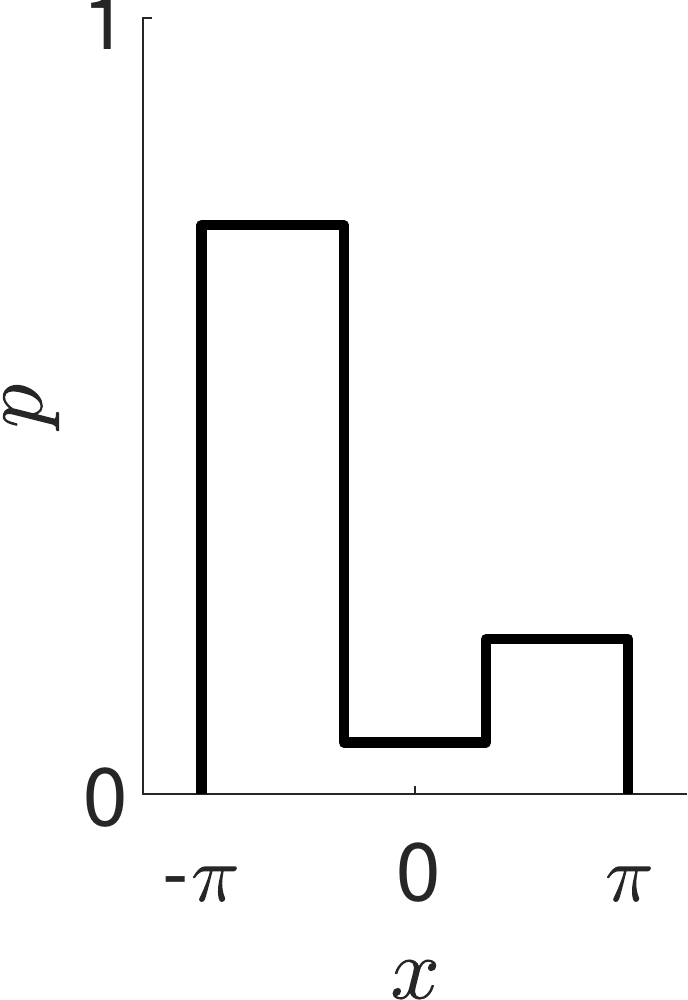}\\[0.2cm]
    \includegraphics[width=5.3cm]{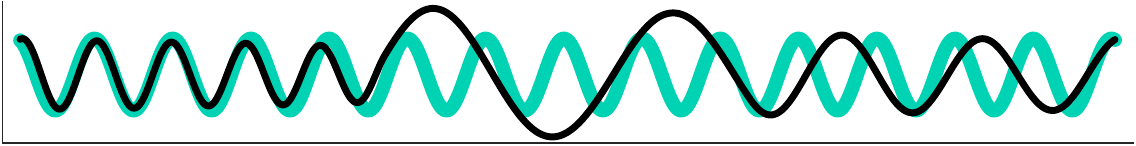}\\
    \includegraphics[width=5.3cm]{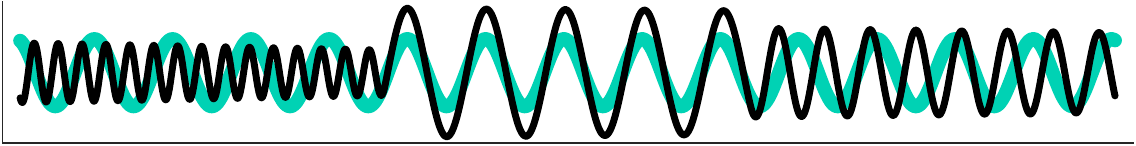}\\
    \includegraphics[width=5.3cm]{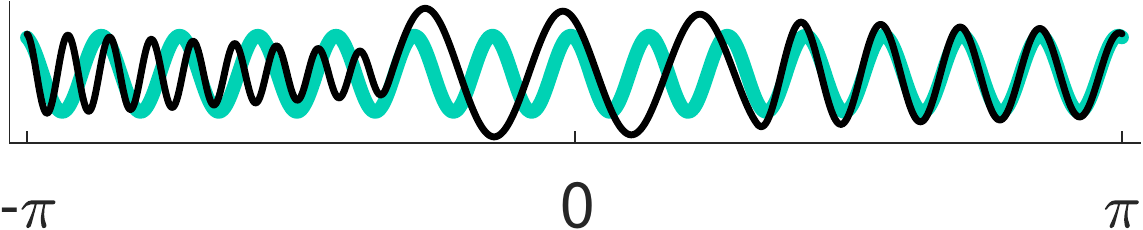}\\
    \caption{Illustration of Thm.~\ref{thm:convergence}. For a piecewise constant density with three regions (top right), a function $g(x) = \sin(14x)$ (in green, bottom plots) is projected onto the eigenfunctions of $k$ (three of which are shown with black curves in the bottom plots), producing coefficients $g_q$ (top left). This produces three peaks around the points predicted by our theory (marked by the dotted vertical lines), which correspond to high correlation of $g(x)$ with one of the three regions for the appropriate three basis functions (bottom row).}
\end{figure}

In Figure \ref{fig:times1D} we used the target function $g(x)=\sin(\kappa x)$ for different values of $\kappa$ to train a 2-layer network. The data was sampled from a non-uniform distribution with three constant regions of densities $3/(2\pi)(1/7,2/7,4/7)$. It can be seen that runtime increases for each region in proportion to $\kappa^2$, and the network converged faster at denser regions (in proportion to $p(x)$).

\begin{figure}[tb]    \label{fig:illustration}
    \centering
    \includegraphics[height=2.8cm]{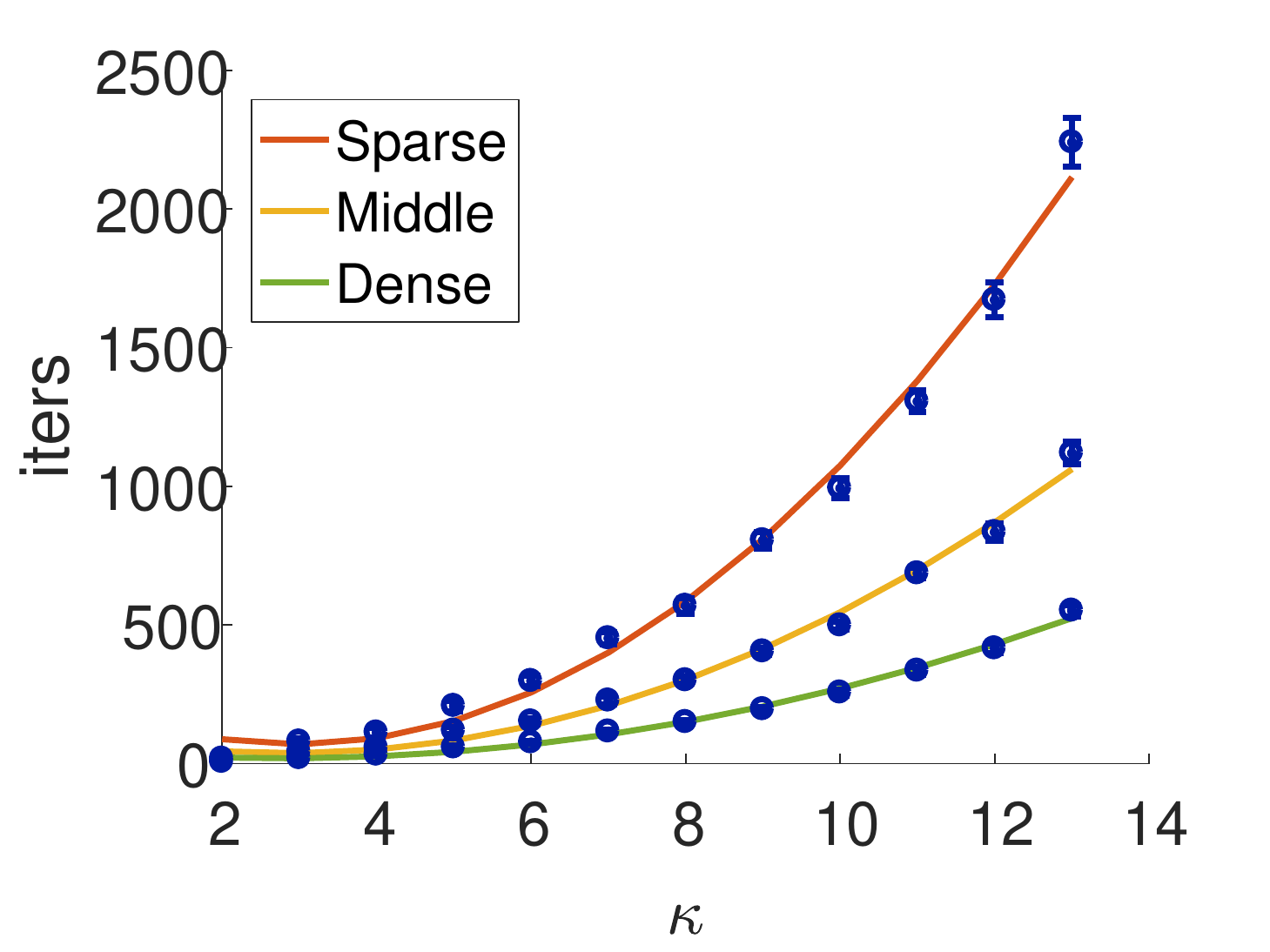} 
    \caption{Convergence times as a function of the target frequency $\kappa$ for a two-layer network trained with data drawn from a non-uniform distribution in $\Sphere^1$. We used the distribution of Figure \ref{fig:efun1d}, which is composed of three regions of constant density with a ratio of 1:2:4. For each region $R_j$ the network converges at time proportional to $\kappa^2/p_j$, as is indicated by the three quadratic curves fit to the data points. In addition, the median ratios between our measurements for the three regions are 1:1.96:3.89, in close fit to the distribution. 
    }
    \label{fig:times1D}
\end{figure}

\subsection{Higher dimension}

Deriving analytic expressions for data drawn from a non-uniform distribution in higher dimension, i.e., in $S^{d-1}$, $d>2$ is challenging and is left for future work. However, simulation experiments lead us to conjecture that the main properties in the $\Sphere^1$ hold also in higher dimension, i.e., (1) the eigenfunctions for piecewise constant distributions resemble concatenated patches of spherical harmonics, (2) the frequencies of these harmonics change with density, and increase monotonically as the respective eigenvalues become smaller, and (3) learning a harmonic function of frequency $k$ should require $O(k^d/p^*)$ iterations.

Figure \ref{fig:efun2d} shows an example plot of eigenfunctions in $\Sphere^2$ with a density function that is constant in each hemisphere. We further used harmonic functions of different frequencies to train a two-layer network with bias. Figure \ref{fig:times2D} shows convergence time as a function of frequency. As conjectured, for each region convergence time increases roughly in proportion to $k^3$, and convergence in different regions is linearly faster with density.

\section{Deep networks}  \label{sec:deep}

We next extend our discussion to NTK models of deep, fully connected networks. We first prove that the eigenvectors of NTK indeed characterize the convergence of GD of highly overparmeterized networks of \textit{finite} width. We then empirically investigate the eigenvectors and eigenvalues of NTK for data drawn from either uniform or non-uniform distributions and show convergence times for pure sine and harmonic target functions. 

We begin by showing that the eigenvectors of NTK characterize the dynamics of overparameterized FC networks of finite width. Our theorem extends Thm.~4.1 in \cite{arora2019fine} (see also \cite{cao2019}), which has dealt with two-layer networks, to deep nets. Consider a FC network of depth $L$ and width $m$ in each layer, and suppose the network is trained with $n$ pairs $\{(\x_i,y_i)\}_{i=1}^n$. Denote the vector of target values by $\y=(y_1,...,y_n)$ and the network predictions for these values at time $t$ by $\uu^{(t)}$. In our theorem, Thm.~\ref{thm:finegrained}, we use a slightly different model than the model stated above \eqref{model:arora}. First, we assume that the first and last layers are initialized and then held fixed throughout training, and the last layer is initialized randomly $\sim\mathcal{N}(0,\tau^2I)$. The NTK for this training data is summarized in an $n \times n$ matrix $H^\infty$, whose entries are set to $H^{\infty}_{ij}=k(\x_i,\x_j)$ where $k$ is defined in \eqref{eq:kernel_main}. Let $\vv_i$ and $\lambda_i$ respectively denote the eigenvectors of $H^\infty$ and their corresponding eigenvalues. The next Theorem establishes that the convergence rate of training this deep (finite width) network depends on the decomposition of the target values $\y$ over the eigenvectors of $H^\infty$.

\begin{theorem}  \label{thm:finegrained}
For any $\epsilon \in (0,1]$ and  $\delta \in (0,O(\frac{1}{L})]$, let $\tau=\Theta(\frac{\epsilon\hat{\delta}}{n})$, $m \geq \Omega \left( \frac{n^{24} L^{12} \log^5 m}{\delta^8 \tau^6} \right)$, $\eta = \Theta \left( \frac{\delta}{n^4 L^2 m \tau^2}\right)$. Then, with probability of at least $1-\hat{\delta}$ over the random initialization after $t$ GD iterations we have that \begin{equation}
\label{eq:finegrained}
    \|\y-\uu^{(t)}\| = \sqrt{\sum_{i=1}^n (1-\eta \lambda_i)^{2t}(\vv_i^T \y)^2} \, \pm \epsilon.
\end{equation}
\end{theorem}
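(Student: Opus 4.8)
The plan is to reduce the nonlinear gradient-descent dynamics to a perturbed \emph{linear} recursion driven by $H^\infty$ and then read \eqref{eq:finegrained} off that recursion, following the two-layer argument of \cite{arora2019fine} with the deep-network ``lazy-training'' estimates of \cite{du2018gradient,arora,allen2018convergence} supplying the perturbation bounds. Write $\uu^{(t)}=(f(\x_1;\w^{(t)}),\dots,f(\x_n;\w^{(t)}))$, let $\z^{(t)}=\uu^{(t)}-\y$, and let $J^{(t)}$ be the Jacobian whose $i$-th row is $\partial f(\x_i;\w^{(t)})/\partial\w$, so $H^{(t)}=J^{(t)}(J^{(t)})^\top$ is the empirical tangent kernel at step $t$. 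Combining the GD update $\w^{(t+1)}-\w^{(t)}=-\eta(J^{(t)})^\top\z^{(t)}$ with a second-order Taylor expansion of $\uu^{(t+1)}-\uu^{(t)}$ about $\w^{(t)}$ gives
\begin{equation}\label{eq:plan-rec}
\z^{(t+1)}=(I-\eta H^{(t)})\z^{(t)}+\boldsymbol{\xi}^{(t)},
\end{equation}
where $\boldsymbol{\xi}^{(t)}$ is the quadratic remainder, controlled by the Hessian norm $\|\nabla^2_\w f\|$ along $[\w^{(t)},\w^{(t+1)}]$.

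\emph{Staying in the NTK regime (the main step).} By induction on $t$ one shows the iterates stay in a ball about initialization on which (i) each trainable layer satisfies $\|\w^{(t)}-\w^{(0)}\|=O(\mathrm{poly}(n,L)/\sqrt m)$, hence (ii) $\|H^{(t)}-H^{(0)}\|=O(\mathrm{poly}(n,L)/\sqrt m)$, (iii) $\|\nabla^2_\w f\|=O(\mathrm{poly}(n,L)/\sqrt m)$ so that $\|\boldsymbol{\xi}^{(t)}\|\le\eta^2\,O(\mathrm{poly}(n,L)/\sqrt m)\,\|\z^{(t)}\|^2$, and (iv) $\|\z^{(t)}\|=O(\sqrt n)$. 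These are the depth-$L$ fully-connected analogues of the two-layer estimates of \cite{du2018gradient,arora2019fine}, obtained from the forward/backward-propagation bounds of \cite{arora,allen2018convergence} with the harmless modification that only the middle layers move (the first and last layers are frozen). Separately, a concentration argument over the random initialization — running the recursion \eqref{eq:ntkdeep} layer by layer with $d_l=m$ and standard Gaussian concentration — gives $\|H^{(0)}-H^\infty\|=O(\mathrm{poly}(n,L)/\sqrt m)$ once $m\ge\Omega(\cdot)$; together with $\lambda_{\min}(H^\infty)>0$ (whose lower bound enters the prescribed $m,\eta,\tau$) this lets one pick $\eta$ small enough that $0\preceq I-\eta H^{(t)}$ and $\|I-\eta H^\infty\|\le1$. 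I expect this paragraph to be the principal obstacle: the uniform-in-$t$ control of $H^{(t)}$ and $\boldsymbol{\xi}^{(t)}$ for depth-$L$ nets is precisely where the width requirement $m\ge\Omega(n^{24}L^{12}\log^5 m/(\delta^8\tau^6))$ originates, since each propagation bound accumulates a factor per layer.

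\emph{Solving the perturbed recursion.} Rewrite \eqref{eq:plan-rec} as $\z^{(t+1)}=(I-\eta H^\infty)\z^{(t)}+\boldsymbol{e}^{(t)}$ with $\boldsymbol{e}^{(t)}=\eta(H^\infty-H^{(t)})\z^{(t)}+\boldsymbol{\xi}^{(t)}$, so $\|\boldsymbol{e}^{(t)}\|\le\eta\nu$ with $\nu=O(\mathrm{poly}(n,L)/\sqrt m)$. Unrolling,
\begin{equation}\label{eq:plan-unroll}
\z^{(t)}=(I-\eta H^\infty)^t\z^{(0)}+\sum_{s=0}^{t-1}(I-\eta H^\infty)^{t-1-s}\boldsymbol{e}^{(s)}.
\end{equation}
The choice $\tau=\Theta(\epsilon\hat\delta/n)$ makes the frozen last layer have norm $O(\tau)$, so with probability $\ge 1-\hat\delta$ we get $\|\uu^{(0)}\|=O(\epsilon)$ and $\z^{(0)}=-\y\pm O(\epsilon)$; expanding in the eigenbasis $\{\vv_i\}$ of $H^\infty$ gives $\|(I-\eta H^\infty)^t\z^{(0)}\|=\sqrt{\sum_i(1-\eta\lambda_i)^{2t}(\vv_i^\top\y)^2}\pm O(\epsilon)$. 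For the error sum, projecting onto $\vv_i$ and using $\sum_{s=0}^{t-1}(1-\eta\lambda_i)^{t-1-s}\le 1/(\eta\lambda_i)$ bounds its Euclidean norm by $\nu\sqrt n/\lambda_{\min}(H^\infty)$ — the factors of $\eta$ cancel, which is why the error does not grow with the a priori unbounded iteration count — and this is $\le\epsilon$ once $m$ is large. Plugging into \eqref{eq:plan-unroll} and using the triangle inequality yields \eqref{eq:finegrained}. It remains only to collect the initialization failure events (concentration of $H^{(0)}$, smallness of $\uu^{(0)}$, boundedness of the forward/backward norms) into the single bound $1-\hat\delta$ and to absorb all numerical constants into the stated $\tau$, $m$, $\eta$.
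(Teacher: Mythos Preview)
Your proposal is correct and follows the same skeleton as the paper: linearize GD to a recursion driven by $H^{(t)}$, replace $H^{(t)}$ by $H^\infty$ using the deep-network perturbation estimates of \cite{allenzhu,arora}, unroll, and exploit the small-$\tau$ initialization to make $\|\uu^{(0)}\|\le O(\epsilon)$. The one substantive difference is in how the accumulated error $\sum_s (I-\eta H^\infty)^{t-1-s}\boldsymbol{e}^{(s)}$ is controlled: you sum the geometric series $\sum_s(1-\eta\lambda_{\min})^s\le 1/(\eta\lambda_{\min})$ so that the $\eta$'s cancel and the bound is uniform in $t$, whereas the paper instead invokes the Allen--Zhu global-convergence theorem to assert a horizon $t\le T=O(n^6L^2/\delta^2)$ and bounds the sum crudely by $T$ times a constant (also feeding in the geometric decay of the loss, hence of each $\|\boldsymbol{e}^{(s)}\|$). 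Your route is a bit more self-contained and does not need a separate a~priori bound on the iteration count; the paper's route imports one extra black box but makes explicit that the per-step error itself shrinks. One minor caveat: for ReLU networks $\nabla^2_\w f$ is not classically defined, so your ``Hessian-norm'' description of the remainder $\boldsymbol{\xi}^{(t)}$ is informal---the paper, following \cite{allenzhu}, handles this remainder by tracking the sparsity of activation sign-change matrices $D''_{i,l}$ rather than via a second-order Taylor bound, though the resulting estimate is exactly what your step (iii) needs.
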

The proof is provided in the supplementary material. Below, we give a brief proof sketch. 
First, we show that for any number of layers and at any iteration $t$ the following relation holds
\begin{align}\label{eq:epsilon}
\uu^{(t+1)}-\y=(I-\eta H(t))(\uu^{(t)}-\y)+\epsilon(t),
\end{align}
where 
$
    H_{ij}(t) =  \left< \frac{\partial f(\x_i,\w(t))}{\partial \w}, \frac{\partial f(\x_j,\w(t))}{\partial \w} \right>,
$
and the residual $\epsilon(t)$ due to the GD steps is relatively small. Then, based on several results due to \cite{allenzhu, arora}, we show that  $H(t)$  can be approximated by $H^{\infty}$, yielding, by applying recursion to \eqref{eq:epsilon} 
\begin{equation}\label{eq:main}
\uu^{(t)}-\y =
(I-\eta H^{\infty})^t(\uu^{(0)}-\y) + \xi(t).\\
\end{equation}
where $\|\xi(t)\| \leq O(\epsilon)$. Next we show that under the setting of $\tau$, $\| \uu^{(0)} \| \leq O(\epsilon)$. Finally, by applying the spectral decomposition to $H^{\infty}$ we obtain  
\eqref{eq:finegrained}.

\begin{figure}[tb]
    \centering
    \includegraphics[width=0.8\linewidth]{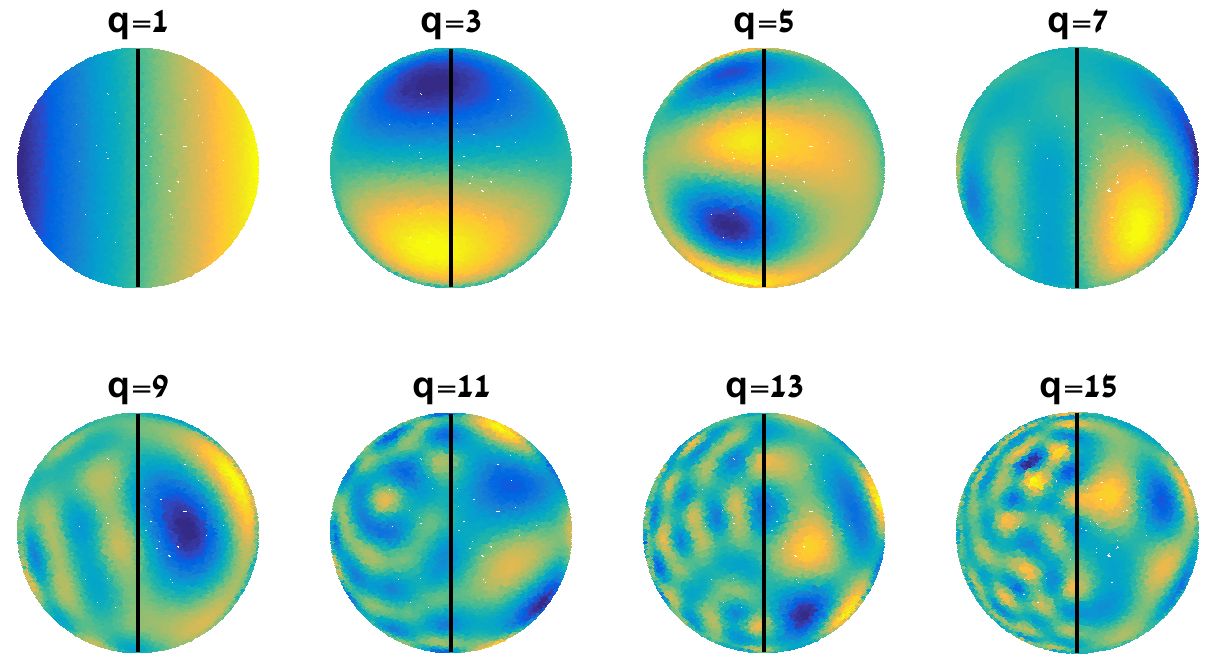} 
    \caption{The eigenfunctions of NTK for a two-layer network with bias for data drawn from a non-uniform distribution from $\Sphere^2$. The left and right hemispheres each have constant density with a ratio of 12:1.}
    \label{fig:efun2d}
\end{figure}

\begin{figure}[tb]
    \centering
    \includegraphics[height=2.0cm]{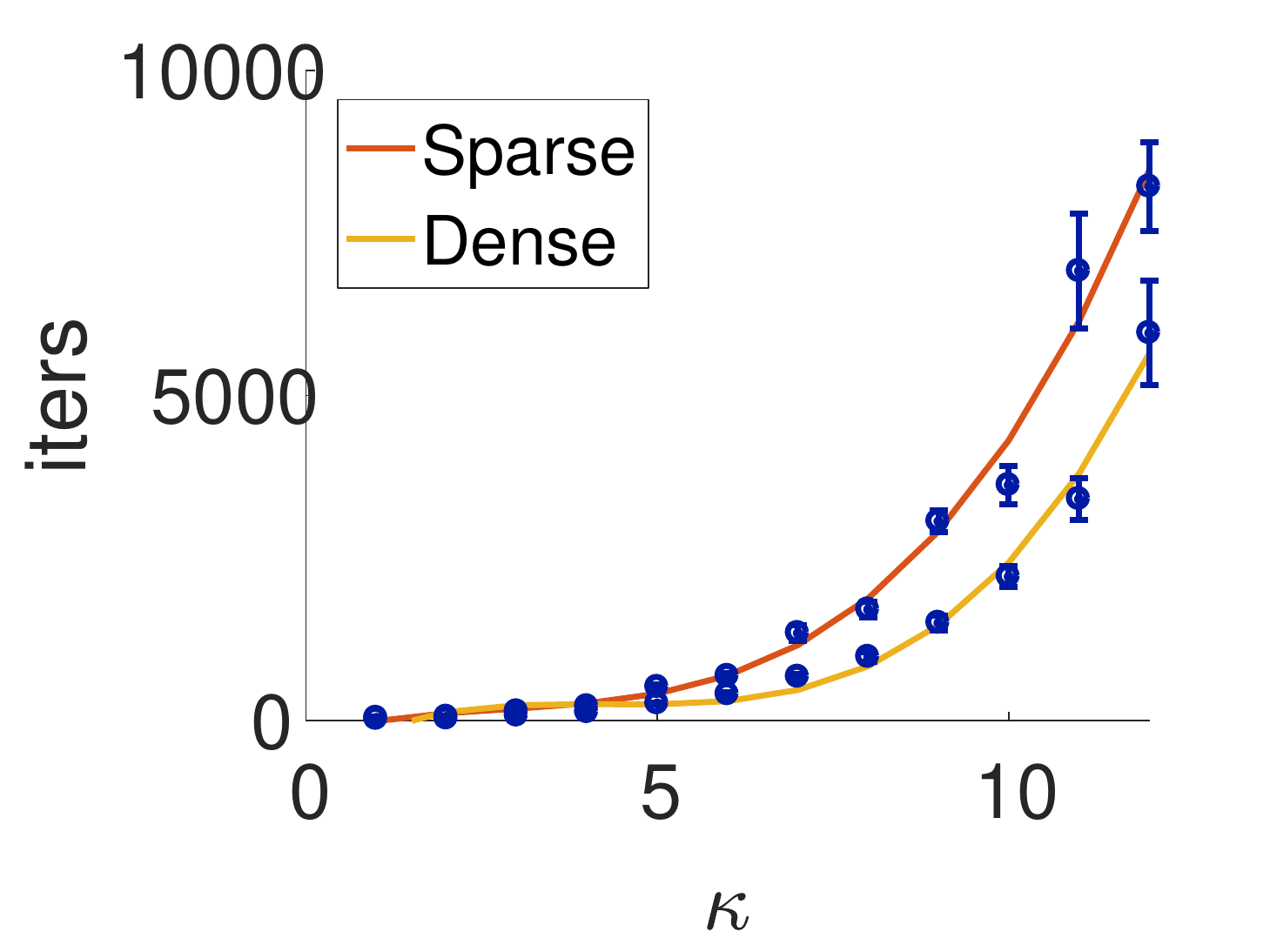} \includegraphics[height=2.0cm]{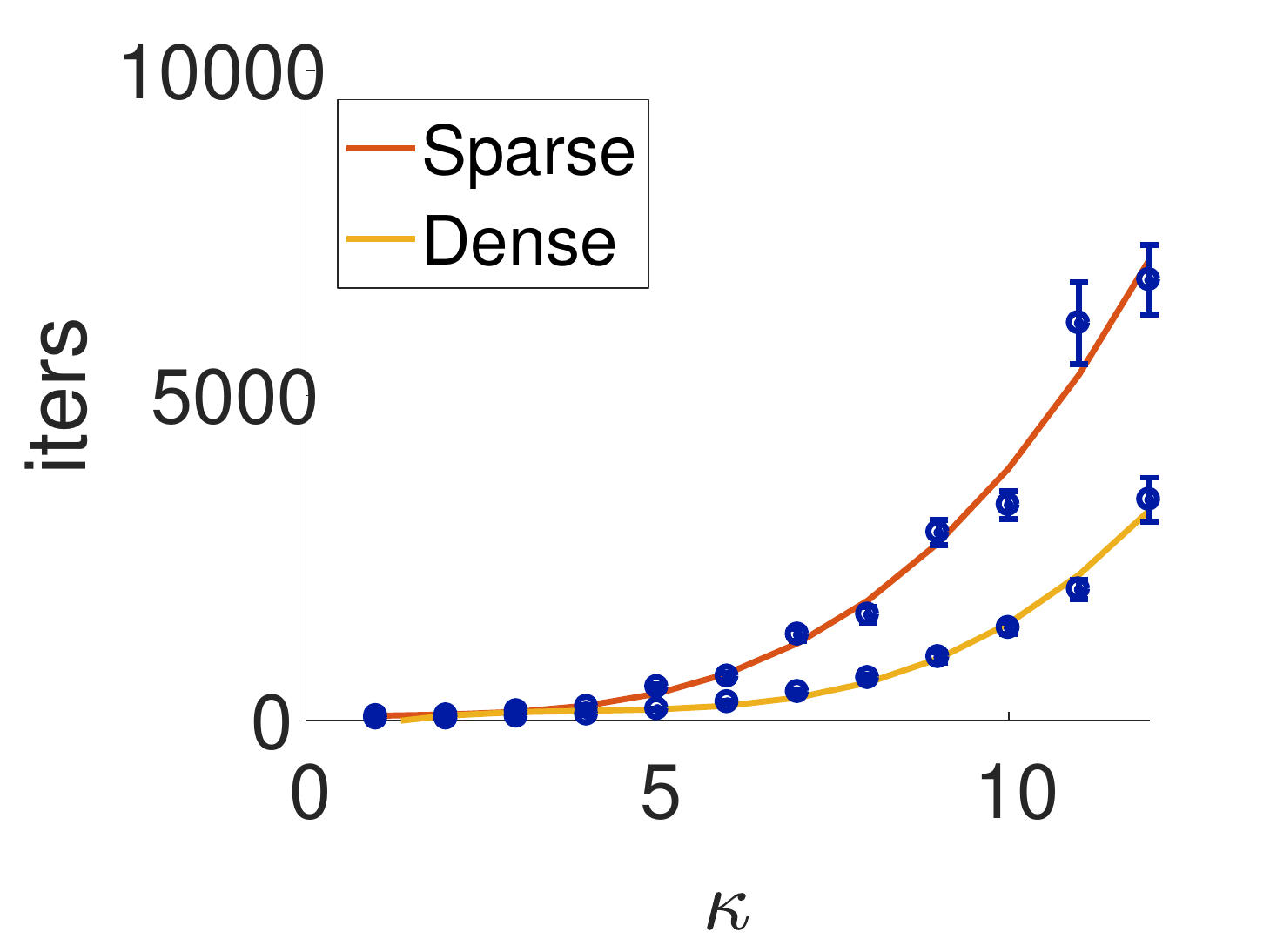}
    \includegraphics[height=2.0cm]{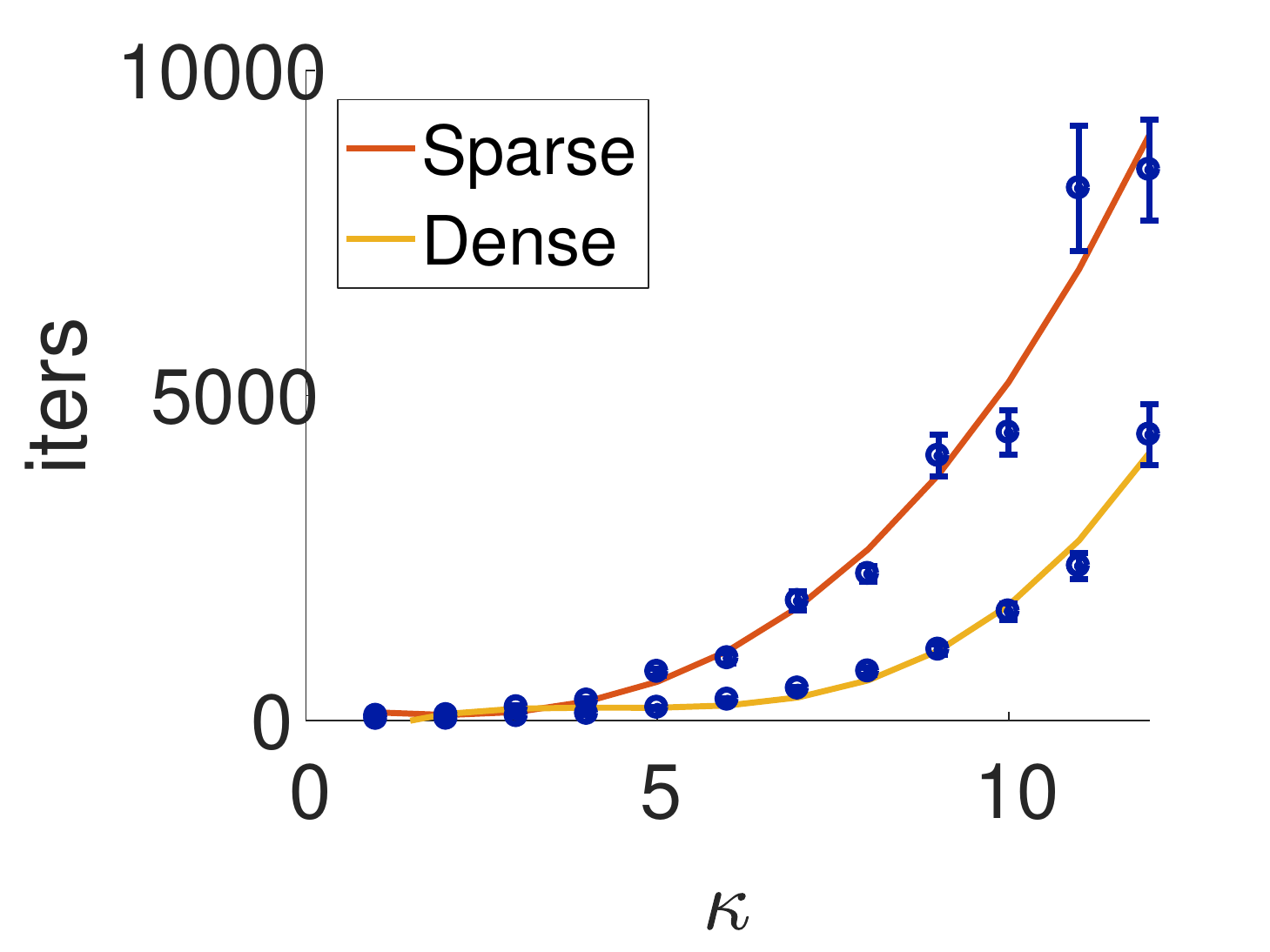}
    \caption{Convergence times as a function of the target harmonic frequency $\kappa$ for a two-layer network trained with data drawn from a non-uniform distribution in $\Sphere^2$. In each plot the sphere was divided into 2 halves, with density ratios (from left to right) of 1:2, 1:3, 1:4. The plot shows a cubic fit to the measurements. The median ratios between our measurements for the three subplots are 1.76, 2.45 and 2.99, undershooting our conjectured ratios. We believe this is due to sensitivity of experiments on $\Sphere^2$ to sampling.
    }
    \label{fig:times2D}
\end{figure}

\begin{figure}[tb]
    \centering
    \includegraphics[width=\linewidth]{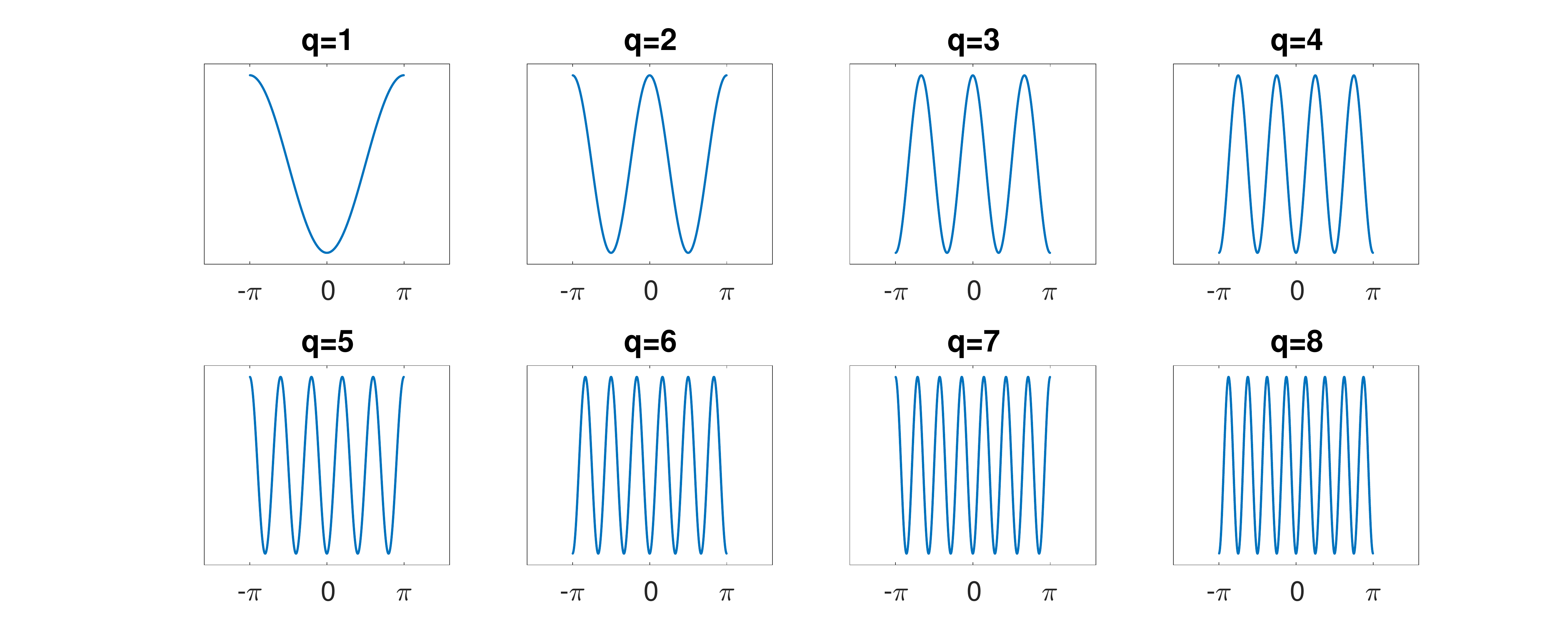} 
    \caption{The eigenfunctions of NTK for a deep network (depth 10) for the uniform distribution in $\Sphere^1$. The eigenvectors are arranged according to a descending order of their corresponding eigenvalues.}
    \label{fig:efundeep_uniform}
\end{figure}

Our next aim is to compute the eigenvectors and eigenvalues of NTK matrices for deep networks. This, together with Theorem \ref{thm:finegrained}, will allow us to derive convergence rates for different target functions. Toward that aim we observe that the NTK kernel $k(\x_i,\x_j)$ is a function of the inner product of its arguments. This can be concluded from its recursive definition in \eqref{eq:ntkdeep}, since $\Sigma^{(0)}(\x_i,\x_j)=\x_i^T\x_j$; both $\Sigma^{(h)}(\x_i,\x_j)$ and $\dot\Sigma^{(h)}(\x_i,\x_j)$ are (scaled) expectations over random variables drawn from a zero normal distribution and whose covariance, by recursion, is a function of the inner product $\x_i^T\x_j$. Consequently, the kernel decomposes over the zonal spherical harmonics in $\Sphere^{d-1}$ (or Fourier series in $\Sphere^1$), and for training data drawn from the uniform distribution the corresponding kernel matrix forms a convolution. 

Figure \ref{fig:efundeep_uniform} shows for the NTK of depth 10  that indeed the eigenvectors in $\Sphere^1$ is the Fourier series. We note that despite the lack of bias terms all the Fourier components are included. The eigenvalues decrease monotonically with frequency, indicating that the network should learn low frequency functions faster than high frequency ones. Moreover, as Figures \ref{fig:eigs_deep} and \ref{fig:exponent_uniform} show, regardless of depth, when trained with a function of frequency $\kappa$ overparameterized networks converge respectively at the asymptotic speed of $O(\kappa^2)$ and $O(\kappa^3)$ for uniform data in $\Sphere^1$ and $\Sphere^2$. Interestingly, however, the eigenvalues of NTK reveal a difference in the way deep and shallow networks treat low frequencies in the target function, as is refelcted by the plots in Figure \ref{fig:exponent_uniform}. Each line of one color represents the log of the eigenvalues for one network and the lines are ordered from shallow to deep in ascending order. The local slope of these lines indicate the speed of convergence for the corresponding frequencies. Asymptotically all the lines become parallel as the frequency $\kappa$ increases, implying that the asymptotic convergence times should be equal for all depths. However, for the low frequencies the lines corresponding to deeper networks are flatter than those corresponding to shallow networks. This flatter slope indicates that the frequency bias for such frequencies is smaller, implying that deep networks learn frequencies, e.g., 6-10, almost as fast as 1-5, while this is not true for shallow networks.

\begin{figure}[tb]
    \centering
    \includegraphics[height=2.6cm]{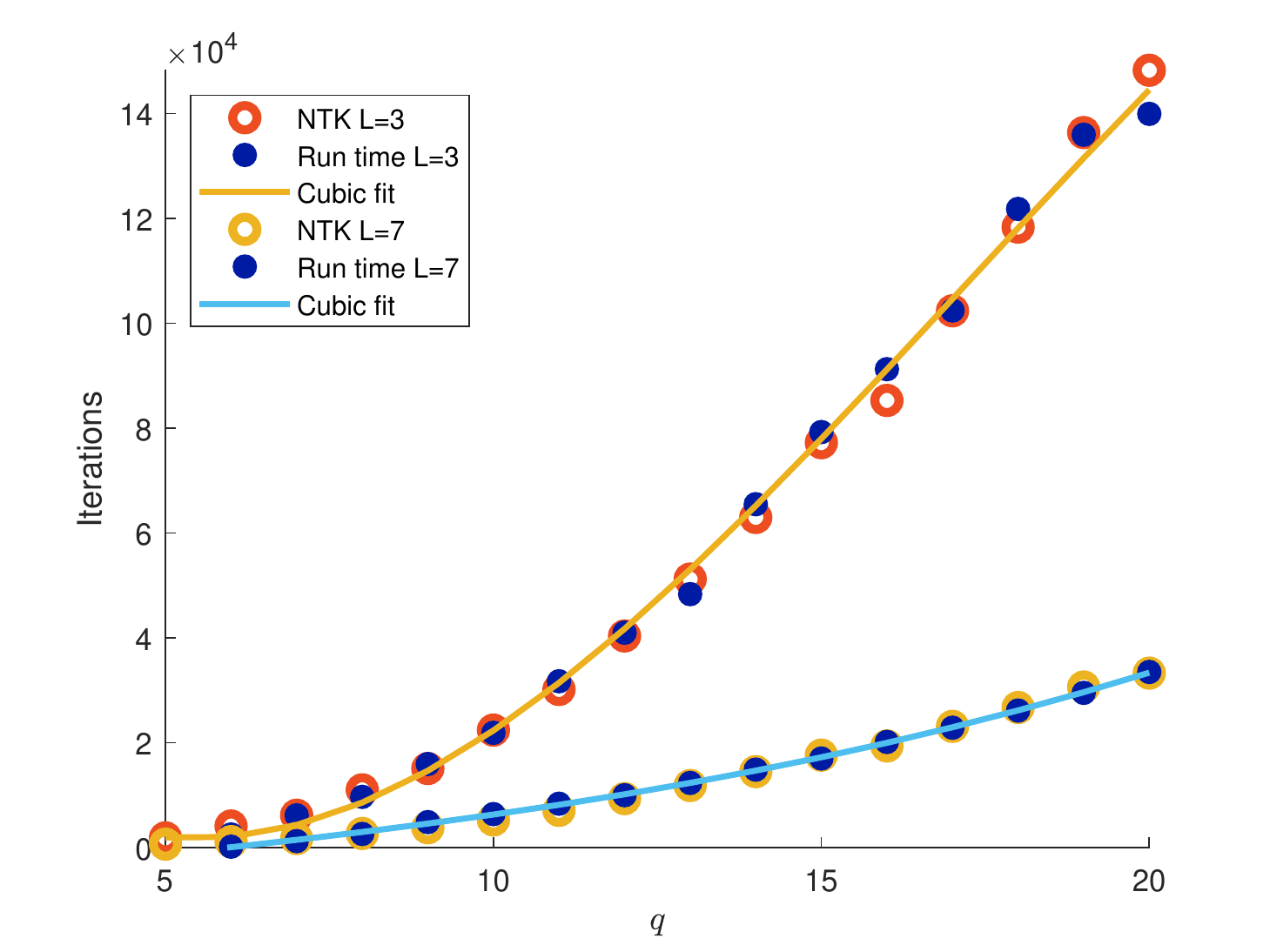}
    \includegraphics[height=2.6cm]{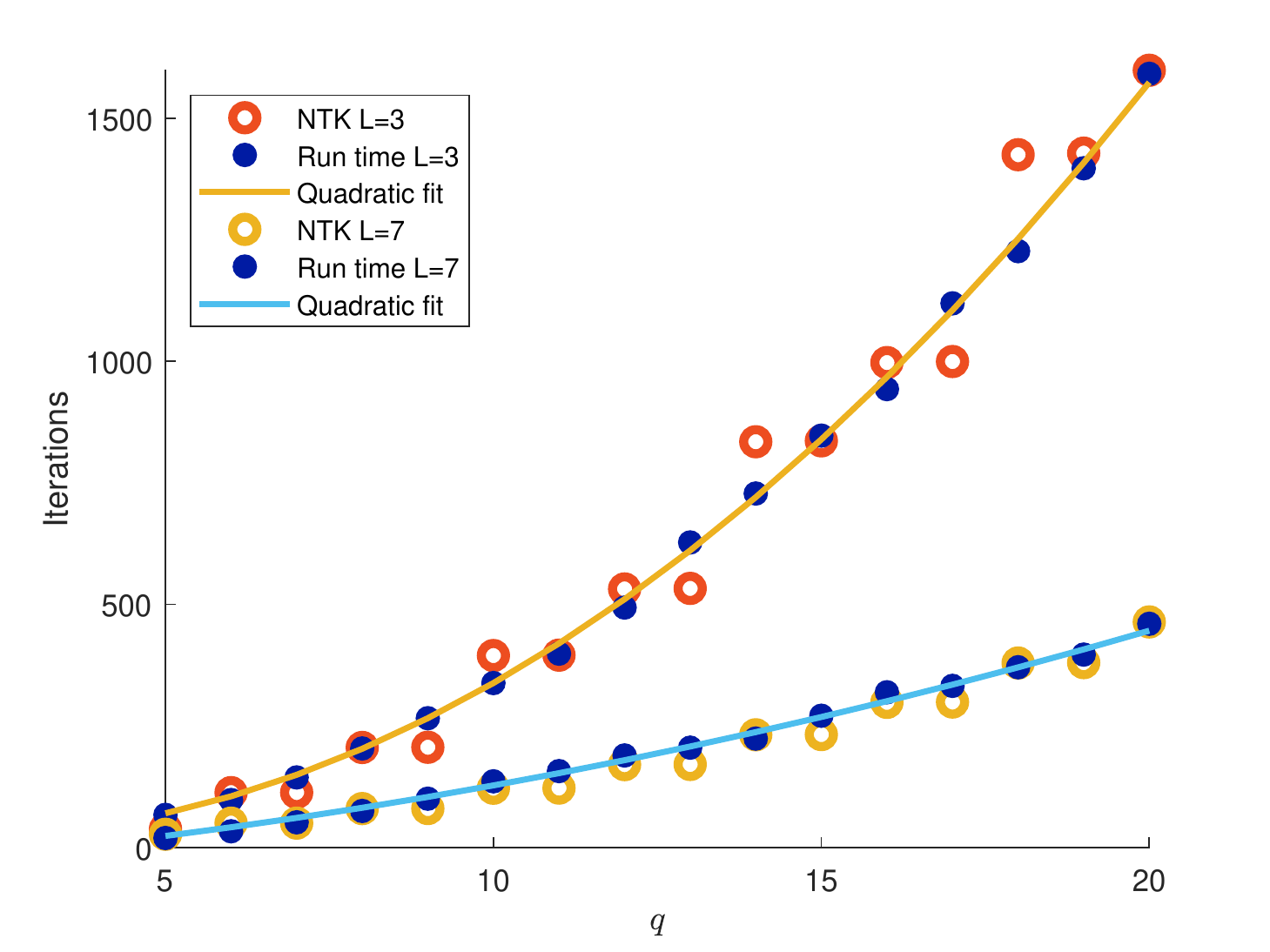}
    \caption{For deep networks (3 and 7 layers) and data drawn from the uniform distribution in $\Sphere^1$ (left) and $\Sphere^2$ (right) we plot training times as a function of target frequency (marked by the solid blue circles). This is compared to the times predicted by the eigenvalues of the corresponding NTK model (red circles).}
    \label{fig:eigs_deep}
%
    \centering
    \includegraphics[height=2.8cm]{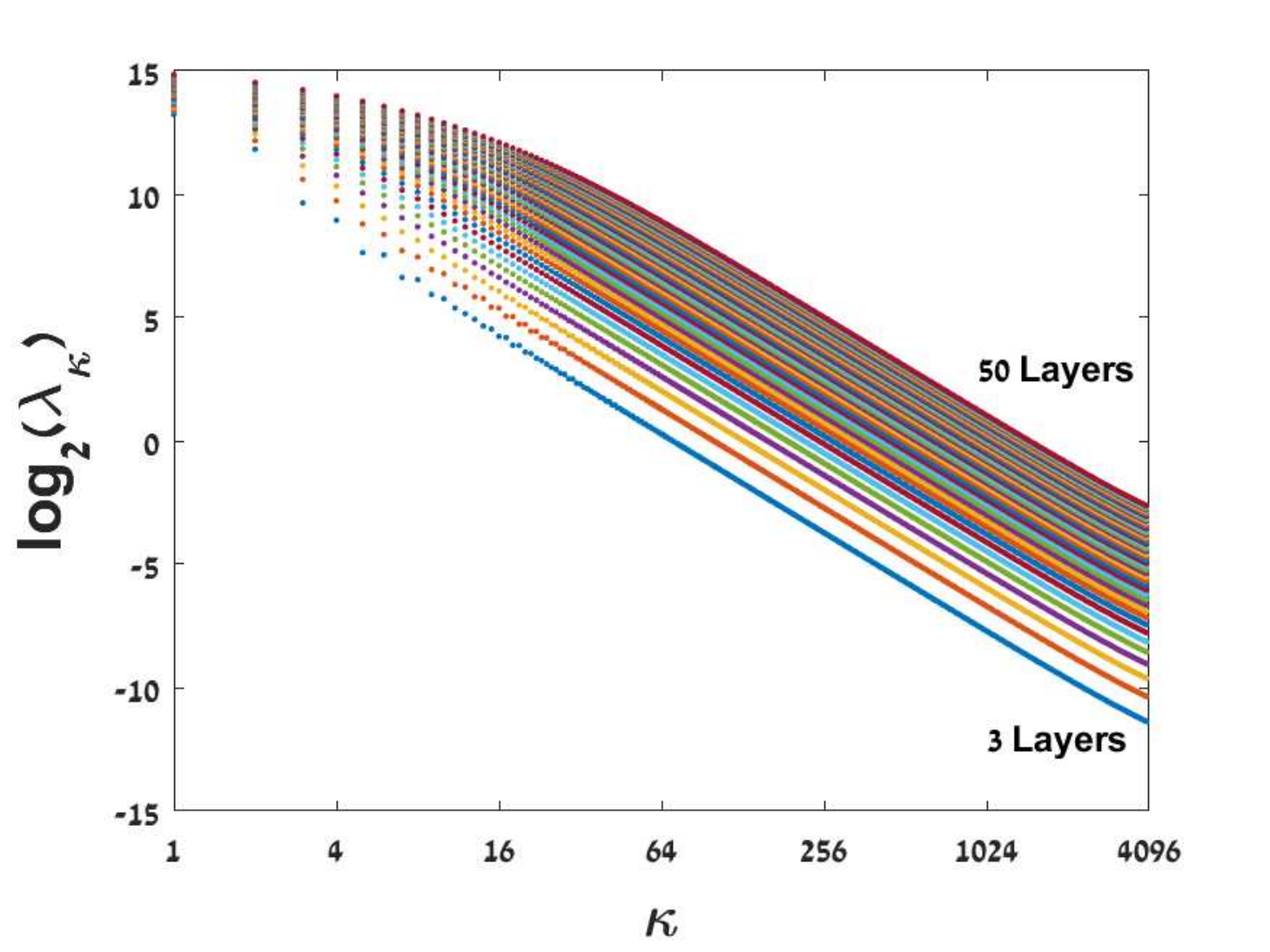}
    \includegraphics[height=2.8cm]{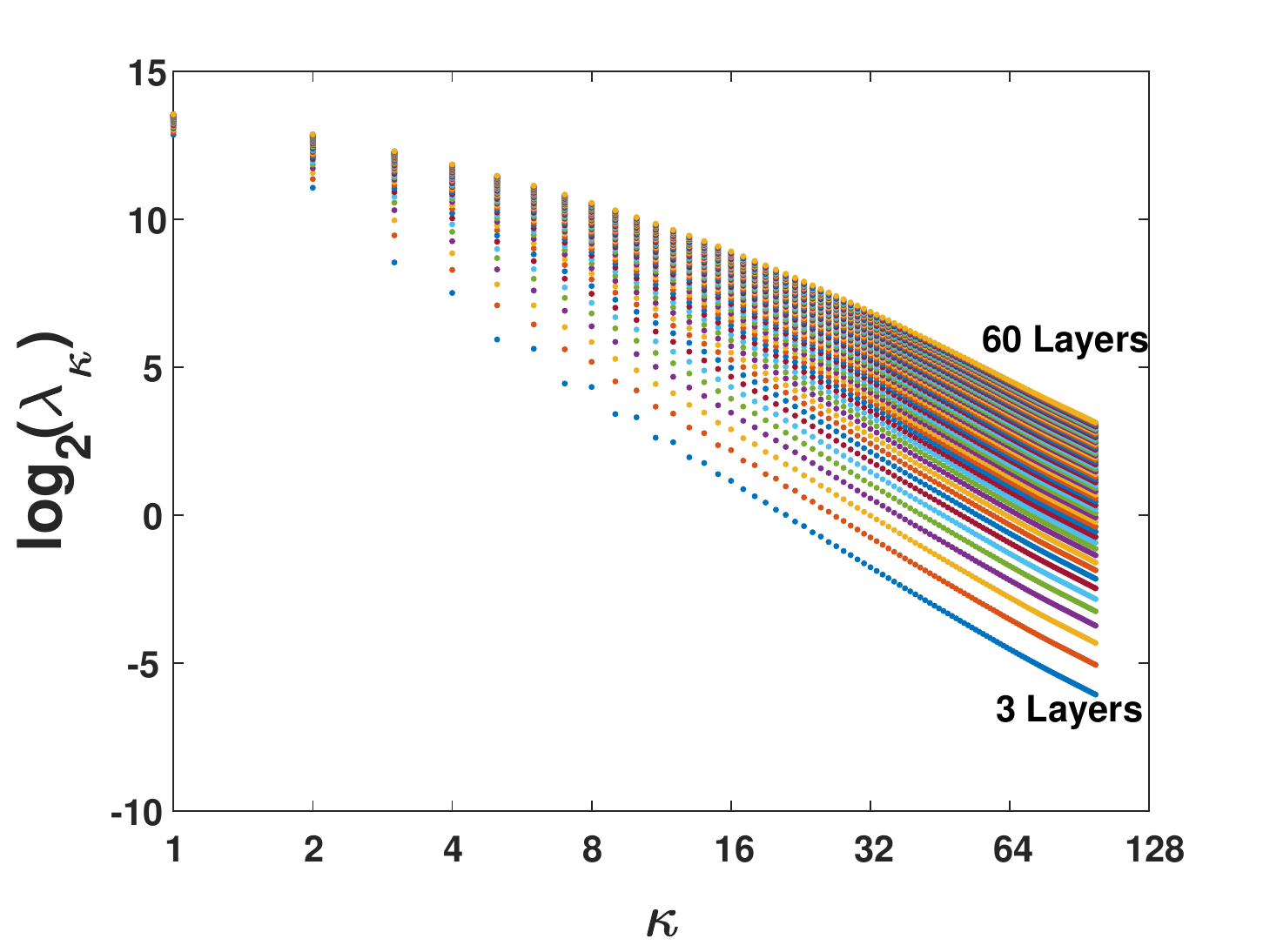}
    \caption{This figure shows a plot of the eigenvalues of NTK for FC networks of different depths with points drawn from a uniform density in $\Sphere^1$ (left) and $\Sphere^2$ (right). The plot is given in log-log scale. Networks of different depths are colored differently. Plots for deeper networks appear higher due to scaling. It can be seen that all curves decrease monotonically, indicating that the eigenvalues decay with frequency. In addition they all become parallel as the frequency $\kappa$ grows, converging to a slope of -2 for $\Sphere^1$ and -3 for $\Sphere^2$ (fitting the curves in the left plot starting at $\kappa=50$ yields a slope of 1.94; fitting the right plot starting at $\kappa=10$ yields a slope of 2.80). This indicates that asymptotically the rate of learning a frequency $\kappa$ is $O(\kappa^2)$ and $O(\kappa^3)$ respectively regardless of depth. The shallower slope of deep networks on the left part of each plot indicates that middle frequencies are learned faster with deep networks than with shallow ones.}
    \label{fig:exponent_uniform}
\end{figure}

Finally, for data drawn from a non-uniform distribution the eigenfunctions of NTK for deep networks appear to be indistinguishable from those obtained for two-layer networks. Figure \ref{fig:freq1d} shows a plot of the local frequencies obtained with NTK for a network of depth 10. It can be seen that the local frequencies are identical to those obtained with NTK for a two-layer network. The eigenvalues are similar to those obtained with the uniform density, up to a normalizing scale which depends on the distribution. Similarly to the two-layer case, learning a harmonic function of frequency $\kappa$ is therefore expected to require $O(\kappa^d/p^*)$ iterations.

\section{Conclusion}

The main contribution of our work is to show that insights about neural networks that have been derived with the assumption of uniformly distributed training data also apply, in interesting ways, to more realistic, non-uniform data.  Prior work has shown that the Neural Tangent Kernel provides a model of real, overparameterized neural networks that is tractable to analyze and that matches real experiments.  Our work shows that NTK has a frequency bias for non-uniform data distributions as well as for uniform ones.  This strengthens the case that this frequency bias may play an important role in real neural networks.

We also quantify this frequency bias. We derive an expression for the eigenfunctions of NTK, showing that for piecewise constant data distributions the eigenfunctions consist of piecewise harmonic functions.  The frequency of these piecewise functions increases linearly with the square root of the local density of the data. As a consequence, for 1D inputs, networks modeled by NTK learn harmonic functions with a speed that increases quadratically in their frequency and decreases linearly with the local density.  Experiments indicate that these results generalize naturally to higher dimensions.  These results support the idea that overparameterized networks avoid overfitting because they fit target functions with smooth functions, and are slow to add high frequency components that could overfit.

\section*{Acknowledgements}

This material is based partly upon work supported by the National Science Foundation under Grant No. DMS1439786 while the authors were in residence at the Institute for Computational and Experimental
Research in Mathematics in Providence, RI, during the Computer Vision program. We would like to thank the Quantifying Ensemble Diversity for Robust Machine Learning (QED for RML) program from DARPA for their support of this project.

\bibliography{nonuniform}
\bibliographystyle{icml2020}

\onecolumn


\appendix

\section{Eigenfunctions of NTK for a two layer-network for data drawn from a piecewise constant distribution}

\begin{lemma}
Let $p(x)$ be a piecewise constant density function on $\Sphere^1$. Then the eigenfunctions in Eq.~(9) in the paper 
satisfy the following ordinary differential equation
\begin{equation}  \label{eq:app_solution}     f''(x)=-\frac{p(x)}{\pi\lambda}f(x).
\end{equation}
\end{lemma}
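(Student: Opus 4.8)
\emph{Proof proposal.} The plan is a three-step roadmap: differentiate the eigenfunction equation \eqref{eq:eigf_1d} enough times that its (non-smooth) kernel is killed modulo distributions supported at its two singular points, which converts \eqref{eq:eigf_1d} into a sixth-order ODE; simplify that ODE using that $p$ is piecewise constant; and finally reduce it to the second-order equation \eqref{eq:app_solution}.

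First I would write \eqref{eq:eigf_1d} in convolution form: setting $\theta=x-z$ and using $2\pi$-periodicity it reads $\lambda f=K*(pf)$, where $(K*h)(x)=\int_{-\pi}^{\pi}K(\theta)\,h(x-\theta)\,d\theta$ and $K(\theta)=\frac{1}{4\pi}(\cos\theta+1)(\pi-|\theta|)$ is the kernel \eqref{eq:kernel_1d}. On the circle $K$ is continuous and smooth away from $\theta=0$ and $\theta=\pi$ (the latter from the periodic wrap of $|\theta|$), and on $(0,\pi)$ it is a linear combination of $1,\ \theta,\ \cos\theta,\ \theta\cos\theta$, which are annihilated respectively by $\partial_\theta^2,\ \partial_\theta^2,\ \partial_\theta^2+1,\ (\partial_\theta^2+1)^2$. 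So the order-six constant-coefficient operator $D:=\partial_\theta^2(\partial_\theta^2+1)^2=\partial_\theta^6+2\partial_\theta^4+\partial_\theta^2$ maps $K$ to a combination of $\delta_0$, $\delta_\pi$ and their derivatives up to order four. Applying $D$ in the variable $x$ to $\lambda f=K*(pf)$ (so that $D(K*(pf))=(DK)*(pf)$) yields the sixth-order identity
\begin{equation*}
\lambda\big(f^{(6)}+2f^{(4)}+f''\big)(x)=\sum_{k\le4}c_k\,(pf)^{(k)}(x)+\sum_{k\le4}\tilde c_k\,(pf)^{(k)}(x+\pi),
\end{equation*}
with constants $c_k,\tilde c_k$ read off from $DK$. (One gets the same identity by differentiating $\int_{x-\pi}^{x+\pi}$ directly six times: the kink at $z=x$ produces the terms at $x$ and the endpoints $z=x\pm\pi$ produce the antipodal ones.)

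Second, I would use that $p$ is piecewise constant: on the interior of a region $R_j$ on which $p\equiv p_j$ we have $(pf)^{(k)}=p_j f^{(k)}$, so the identity becomes a constant-coefficient linear ODE on $R_j$, still coupling $f(x)$ with its antipode $f(x+\pi)$. To break this coupling I would invoke the reduction to a $\pi$-periodic density, which (as the paper notes) is arranged by a preprocessing of the data that does not change the target function. When $p$ has period $\pi$, the integral operator in \eqref{eq:eigf_1d} commutes with the half-turn $x\mapsto x+\pi$ (an involution), so its eigenfunctions may be taken to satisfy $f(x+\pi)=\pm f(x)$; substituting this kills the antipodal terms and leaves, on each $R_j$, a sixth-order constant-coefficient ODE for $f$ alone.

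The crux will be the last step, descending cleanly from this sixth-order ODE to \eqref{eq:app_solution}. Its characteristic polynomial has six roots, yet only a two-parameter family of solutions can actually satisfy the undifferentiated equation; to isolate it I would impose, in turn: (i)~global $2\pi$-periodicity on $\Sphere^1$, discarding the non-periodic (polynomially growing) modes; (ii)~continuity and differentiability of $f$ across the region boundaries; and (iii)~substitution of the resulting piecewise candidate back into $\lambda f=K*(pf)$ itself — the information that the differentiated forms have discarded. Showing that what survives is exactly the piecewise-cosine family \eqref{eq:solutionpc}, with local frequency $\omega_j$ obeying $\omega_j^{2}=p_j/(\pi\lambda)$, i.e.\ \eqref{eq:app_solution}, is the real technical content; the six-fold differentiation, together with the accounting of the distributional contributions that $pf$ makes at the jumps of $p$, is routine if tedious.
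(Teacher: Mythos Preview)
Your approach coincides with the paper's: six derivatives in $x$ to annihilate the smooth part of $K$ and leave only point contributions at $x$ and $x+\pi$, then specialize to piecewise-constant, $\pi$-periodic $p$ (via the preprocessing) to obtain a constant-coefficient sixth-order ODE on each region. Your convolution/operator framing --- recognizing $D=\partial_x^2(\partial_x^2+1)^2$ as the minimal annihilator of $K$ on its smooth pieces --- is tidier than the paper's explicit step-by-step differentiation, but lands at the same equation.

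Two remarks. First, the paper commits specifically to the odd parity $f(x+\pi)=-f(x)$, not both signs; this is what the preprocessing actually enforces, and it matters: the sixth-order characteristic polynomial factors as $(\omega^2-1)^2\bigl(\omega^2-p/(\pi\lambda)\bigr)$ only in the odd case, so for even parity the second-order factor you want does not split off. Second, you are right to flag the descent from sixth to second order as the real content. The paper in fact checks only the easy direction --- that solutions of $f''=-\tfrac{p}{\pi\lambda}f$ satisfy the sixth-order ODE --- and does not explicitly eliminate the spurious $(A+Bx)\cos x+(C+Dx)\sin x$ modes; your proposed tools (global periodicity, $C^1$ matching across regions, and back-substitution into the undifferentiated integral equation) are precisely what would be needed to close that gap, and go beyond what the paper's own proof supplies.
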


\begin{proof}
Combining Eqs.~(9) and (10) in the paper 
we have
\begin{equation}  \label{eq:eigfofk}
\int_{x-\pi}^{x+\pi} (1+\cos(z-x))(\pi-|z-x|) f(z)p(z)dz = 4\pi \lambda f(x)
\end{equation}

Below we take six derivatives of \eqref{eq:eigfofk} with respect to $x$. We use parenthesized superscripts $f^{(n)}(x)$ to denote the $n^\mathrm{th}$ derivative of $f$ at $x$. First derivative
\begin{eqnarray*}
    4\pi \lambda f^{(1)}(x) &=& 
    -\int_{x-\pi}^{x} \left(1 + \cos(z-x)-(\pi + z - x)\sin(z-x) \right) f(z)p(z)dz \nonumber\\
    &&+\int_{x}^{x+\pi} \left(1 + \cos(z-x) +(\pi -z +x) \sin(z-x) \right) f(z)p(z)dz
\end{eqnarray*}
Second derivative
\begin{eqnarray*}
    4 \pi \lambda f^{(2)}(x) + 4f(x)p(x) &=& 
    -\int_{x-\pi }^{x} \left(2\sin(z-x) +(\pi +z -x) \cos(z-x)\right) f(z)p(z)dz \nonumber\\
    &&+\int_{x}^{x+\pi} \left(2\sin(z-x) -(\pi -z +x)\cos(z-x)\right) f(z)p(z)dz
\end{eqnarray*}
Adding this to \eqref{eq:eigfofk}
\begin{eqnarray}  \label{eq:2ndd}
    4\pi \lambda f^{(2)}(x) + 4 f(x) p(x) + 4 \pi \lambda f(x) &=&
    \int_{x-\pi}^{x} \left(\pi+z-x -2\sin(z-x)\right) f(z)p(z)dz \nonumber\\
    &&+\int_{x}^{x+\pi} \left(\pi-z+x +2\sin(z-x)\right) f(z)p(z)dz
\end{eqnarray}
Third derivative
\begin{eqnarray*}
    && 4\pi \lambda f^{(3)}(x) + 4\pi \lambda f^{(1)}(x) + 4 f^{(1)}(x) p(x) + 4 f(x) p^{(1)}(x) = \nonumber\\
    && \int_{x-\pi}^{x} \left(2\cos(z-x)-1\right) f(z)p(z)dz
    -\int_{x}^{x+\pi} \left(2\,\cos(z-x)-1\right) f(z)p(z)dz
\end{eqnarray*}
Fourth derivative
\begin{eqnarray*}
    && 4\pi \lambda f^{(4)}(x) + 4\pi \lambda f^{(2)}(x) +4 f^{(2)}(x) p(x) + 8f^{(1)}(x) p^{(1)}(x) + 4f(x) p^{(2)}(x) - 2f(x)p(x) = \nonumber\\ && 3f(x-\pi) p(x-\pi) + 3f(x+\pi) p(x+\pi) -\int_{x}^{x+\pi} 2\sin(z-x) f(z)p(z) dz + \int_{x-\pi}^{x} 2\sin(z-x) f(z)p(z) dz
\end{eqnarray*}
Adding this to \eqref{eq:2ndd}
\begin{eqnarray*}
    && 4\pi \lambda f^{(4)}(x) + 8\pi \lambda f^{(2)}(x) + 4\pi \lambda f(x) + 2f(x)p(x) + 4p(x) f^{(2)}(x)
    +8 f^{(1)}(x) p^{(1)}(x) +4f(x) p^{(2)}(x) = \nonumber\\
    && 3f(x-\pi) p(x-\pi) +3 f(x+\pi) p(x+\pi) +\int_{x}^{x+\pi} (\pi-z+x) f(z)p(z) dz +\int_{x-\pi}^{x} (\pi+z-x) f(z)p(z) dz
\end{eqnarray*}
Fifth derivative
\begin{eqnarray*}
    && 4\pi \lambda f^{(5)}(x) + 8\pi \lambda f^{(3)}\left(x\right) + 4\pi \lambda f^{(1)}(x) + 4f^{(3)}(x) p(x) + f^{(2)}(x) p^{(1)}(x) + 12 f^{(1)}(x) + p^{(2)}(x) \nonumber\\
    && + 2f^{(1)}(x) p(x) + 4f(x) p^{(3)}(x) = -2f(x) p^{(2)}(x) +3f^{(1)}(x-\pi) p(x-\pi) +3f(x-\pi)p^{(1)}(x-\pi) \nonumber\\
    && +3f^{(1)}(x+\pi)p(x+\pi) +3f(x+\pi) p^{(1)}(x+\pi) -\int_{x-\pi}^{x} f(z)p(z) dz + \int_{x}^{x+\pi} f(z)p(z) dz
\end{eqnarray*}
Sixth derivative
\begin{eqnarray*}
    && 4\pi \lambda f^{(6)}(x) + 8\pi \lambda f^{(4)}(x) + 4\pi \lambda f^{(2)}(x) = 3f^{(2)}(x+\pi) p\left(x+\pi \right) +3p^{(2)}(x+\pi) f(x+\pi) \nonumber\\
    && +6f^{(1)}(x+\pi) p^{(1)}(x+\pi) -2f(x) p(x) +f(x-\pi) p(x-\pi) -4f(x) p^{(4)}(x) -4p(x) f^{(4)}\left(x\right) \nonumber\\
    && -2f(x) p^{(2)}(x) -2\,p(x)\, f^{(2)}(x) +f(x+\pi) p(x+\pi) +6f^{(1)}(x-\pi) p^{(1)}(x-\pi) +3f^{(2)}(x-\pi) p(x-\pi)\nonumber\\ && +3p^{(2)}(x-\pi) f(x-\pi) -16 f^{(1)}(x) p^{(3)}(x) -16 f^{(3)}(x) p^{(1)}(x) -24 p^{(2)}(x) f^{(2)}(x) -4 f^{(1)}(x) p^{(1)}(x)
\end{eqnarray*}
Next, we simplify and rearrange. We omit dependence on $x$, note that $f(x-\pi)=f(x+\pi)$ and $p(x-\pi)=p(x+\pi)$ and respectively denote them by $\bar f$ and $\bar p$. 
\begin{align*}
    & 2\pi \lambda f^{(6)} +2(p +2\pi \lambda) f^{(4)} +8 p^{(1)} f^{(3)} +(p +12 p^{(2)} +2\pi \lambda) f^{(2)} + \nonumber\\ 
    & 2(p^{(1)} +4p^{(3)})f^{(1)} 
    +(p +p^{(2)} +2p^{(4)})f =
    (\bar p +3\bar p^{(2)})\bar f +6\bar p^{(1)} \bar f^{(1)} +3\bar p \bar f^{(2)}
\end{align*}

Assume next that $p(x)$ is constant around $x$ and $x-\pi$, so its derivatives at these points vanish. Then,
\begin{align*}
    2\pi \lambda f^{(6)} +(2p +4\pi \lambda) f^{(4)} +(p +2\pi \lambda) f^{(2)} +pf =
    \bar p \bar f +3\bar p\bar f^{(2)}
\end{align*}
We next make the assumption that $p(x)$ has a period of $\pi$ (so $p=\bar p$) in which case $f(x+\pi)=-f(x)$ (i.e., $\bar f=-f$). These assumptions will be removed later. With these assumptions we have
\begin{align*}
    2\pi \lambda f^{(6)} +(2p +4\pi \lambda) f^{(4)} +(4p +2\pi \lambda) f^{(2)} + 2pf = 0
\end{align*}
It can be readily verified that this equation is solved by \eqref{eq:app_solution}.

Finally, if $p(x)$ does not have a period of $\pi$ we can preprocess the data in a straightforward way to make $p$ have a period of $\pi$ (by mapping the interval $[0,4\pi)$ to $[0,2\pi)$) without changing the function that needs to be learned.
\end{proof}

\section{The amplitudes of the eigenfunctions in different regions}

In this section for the NTK of a 2-layer network for which only the first layer is trained we compute bounds on the amplitudes of its eigenfunctions. We first bound the ratios between the amplitudes in two neighboring regions, and use this in the following section to bound the amplitude in any one region.

\subsection{Ratios between the amplitudes of neighboring regions}
If $p(x)=p_j$ is constant in each region $R_j \subseteq \Sphere^1$, $1\le j\le l$, then the eigenfunction or order $q$ $f_q(x)$ for $x \in R_j$ can be written as
\begin{equation*}
    f_q(x)=a_j \cos\left(\frac{q\sqrt{p_j}x}{Z}+b_j\right)
\end{equation*}
where $a_j \ge 0$. In this part we characterize the amplitudes the different regions $a_j$ for $j= 1, ..., l$. 

We notice that the eigenfunctions appear to be continuous and differentiable. Without loss of generality, assume that the boundary between region $j$ to region $j+1$ happens at $x=0$. Then the eigenfunction in the vicinity of 0 is defined as follows: 
\begin{equation*}
    f_{q}(x)=\begin{cases} a_j\cos(q\frac{\sqrt{p_j}}{Z}x+b_j) & x\leq 0  \\ a_{j+1}\cos(q\frac{\sqrt{p_{j+1}}}{Z}x+b_{j+1}) & x\geq 0  \end{cases}
\end{equation*}
Continuity at $x=0$ implies that 
\begin{equation} \label{eq::cont_constaint}
    a_j\cos(b_j)=a_{j+1}\cos(b_{j+1}) \, \Rightarrow \, \frac{a_j}{a_{j+1}}=\frac{\cos(b_{j+1})}{\cos(b_j)}
\end{equation}
Differentiability at $x=0$ implies
\begin{equation*}
    a_j\sqrt{p_j}\sin(b_j)=a_{j+1}\sqrt{p_{j+1}}\sin(b_{j+1}) \, \Leftrightarrow \, \frac{a_j}{a_{j+1}} = \frac{\sqrt{p_{j+1}} \sin(b_{j+1})}{\sqrt{p_j}\sin(b_j)}
\end{equation*}

These allow us to bound the ratio  $a_{j}/a_{j+1}$. We have
\begin{equation}\label{eq::derrive_amp}
    \frac{a_j}{a_{j+1}}=\frac{\sqrt{p_{j+1}}\sin(b_{j+1})}{\sqrt{p_j}\sin(b_j)} \, \Rightarrow \, \left(\frac{a_j}{a_{j+1}}\right)^2 = \frac{p_{j+1}\sin^2(b_{j+1})}{p_j\sin^2(b_j)} = \frac{p_{j+1}(1-\cos^2(b_{j+1}))}{p_j(1-\cos^2(b_j))}
\end{equation}
On the other hand, from \eqref{eq::cont_constaint} we know that 
\begin{equation}\label{eq::continuity_amp}
    \frac{a_j}{a_{j+1}}=\frac{\cos(b_{j+1})}{\cos(b_{j+1})} \, \Rightarrow \, \left(\frac{a_j}{a_{j+1}}\right)^2 = \frac{\cos^2(b_{j+1})}{\cos^2(b_j)} \, \Rightarrow \, \cos^2(b_{j+1})=\cos^2(b_j)\left(\frac{a_j}{a_{j+1}}\right)^2
\end{equation}
Substitute \eqref{eq::continuity_amp} in \eqref{eq::derrive_amp} we get
\begin{equation*}
    \left(\frac{a_j}{a_{j+1}}\right)^2 = \frac{p_{j+1}}{p_j}\frac{1-\cos^2(b_j)(\frac{a_j}{a_{j+1}})^2}{1-\cos^2(b_j)} \, \Rightarrow \, \left(\frac{a_j}{a_{j+1}}\right)^2(1-\cos^2(b_j)) = \frac{p_{j+1}}{p_j}\left(1-\cos^2(b_j)\left(\frac{a_j}{a_{j+1}}\right)^2\right)
\end{equation*}
And we have
\begin{equation*}
\left(\frac{a_j}{a_{j+1}}\right)^2(1-\cos^2(b_j)+\frac{p_{j+1}}{p_j}\cos^2(b_j))=\frac{p_{j+1}}{p_j}
\end{equation*}
implying that
\begin{equation}
\label{eq::amp_ratio_squared}
    \left(\frac{a_j}{a_{j+1}}\right)^2=\frac{\frac{p_{j+1}}{p_j}}{1-\cos^2(b_j)\left(1-\frac{p_{j+1}}{p_j}\right)}
\end{equation}
WLOG assume that $p_{j+1}/p_j \ge 1$ then
\begin{equation*}
    \cos^2(b_j)\left(1- \frac{p_{j+1}}{p_j}\right) \leq 0 \,\Rightarrow \,
    \frac{1}{1-\cos^2(b_j)\left(1- \frac{p_{j+1}}{p_j}\right)} \leq 1
\end{equation*}
As a result we get
\begin{equation*}
    \left(\frac{a_j}{a_{j+1}}\right)^2 = \frac{\frac{p_{j+1}}{p_j}}{1-\cos^2(b_j)(1-\frac{p_{j+1}}{p_j})}\leq \frac{p_{j+1}}{p_j} \, \Rightarrow \, \frac{a_j}{a_{j+1}}\leq\sqrt{\frac{p_{j+1}}{p_j}}
\end{equation*}

For a lower bound note that the denominator in \eqref{eq::amp_ratio_squared} satisfies
\begin{equation*}
    1-\cos^2(b_j)(1-\frac{p_{j+1}}{p_j}) = \sin^2(b_j) + \frac{p_{j+1}}{p_j} \cos^2(b_j) \le \frac{p_{j+1}}{p_j}
\end{equation*}
where the inequality is due to the assumption that $p_{j+1} \ge p_j$. Consequently, $(a_{j+1}/a_j)^2 \ge 1$. In summary, we have bounded the ratios between the amplitudes of neighboring regions by
\begin{equation}
    1 \leq \frac{a_j}{a_{j+1}}\leq \sqrt{\frac{p_{j+1}}{p_j}}
\end{equation}

We next note that these bounds are tight and are obtained in the following setup. Assume we have an even number of regions of constant density $l$ each with equal size. Suppose that in each region the eigenfunction includes an integer number of cycles. For each $q$ we construct an eigenfunction, by choosing a phase $b_j=0$ for $j=1, ..., l$, and it holds that the border between region $l/2$ and $l/2+1$ lies at $x=0$. As a result, at this point we have 
\begin{equation*}
    a_{\frac{l}{2}}\cos\left(\frac{q\sqrt{p_{\frac{l}{2}}}\, 0}{Z}\right) = a_{\frac{l}{2}+1}\cos\left(\frac{q\sqrt{p_{\frac{l}{2}+1}}\, 0}{Z}\right) \, \Rightarrow \, a_{\frac{l}{2}} = a_{\frac{l}{2}+1}
\end{equation*} 
But since each region contains an integer number of cycles we get for $j=1, ..., l$
\begin{equation}\label{eq::borders_amplitudes}
    \cos\left(\frac{q\sqrt{p_{\frac{l}{2}}}\, 0}{Z}\right) = \cos\left(\frac{q\sqrt{p_{j}}}{Z}\left(\frac{2\pi}{l}j-\pi\right)\right)=1
\end{equation}
Continuity implies for $j=2, ..., l$
\begin{equation*}
    a_{j-1}\cos\left(\frac{q\sqrt{p_{j-1}}}{Z}\left(\frac{2\pi(j-1)}{l}-\pi\right)\right) = a_{j}\cos\left(\frac{q\sqrt{p_{j}}}{Z}\left(\frac{2\pi(j-1)}{l}-\pi\right)\right)\Rightarrow a_{j-1}=a_{j}
\end{equation*}
As a result, for each $q$ we get one eigenfunction (up to a global scale)
\begin{equation}\label{eq:tight_cos}
    f_{q}^1(x) = \cos\left(\frac{q\sqrt{p_j}x}{Z}\right) \ \ \text{, for}   \ \  x\in\left[\frac{2\pi(j-1)}{l}-\pi,\frac{2\pi j}{l}-\pi\right]
\end{equation}
We next construct a second eigenfunction for each $q$. Since there is an integer number of cycles in each region, to keep the second eigenfunction of each $q$ orthogonal to the first one, we choose a phase of $-\pi/2$:
\begin{equation*}
    f_{q}^1(x)=     a_j\sin\left(\frac{q\sqrt{p_j}x}{Z}\right) \ \ \text{, for}   \ \  x\in\left[\frac{2\pi(j-1)}{l}-\pi,\frac{2\pi j}{l}-\pi\right]
\end{equation*}
Next, to maintain differentiability, the derivative at the border between regions $R_j$ and $R_{j+1}$ must be equal. So at $x=2\pi j/l-\pi$ we have for $j=1, ..., l-1$
\begin{equation*}
   \frac{d}{dx}\left(a_{j}\sin\left(\frac{q\sqrt{p_{j}}x}{Z}\right)\right) =  \frac{d}{dx}\left(a_{j+1}\sin\left(\frac{q\sqrt{p_{j+1}}x}{Z}\right)\right) \Rightarrow
\end{equation*}
\begin{equation*}
    -\frac{a_{j}q\sqrt{p_{j}}}{Z} \cos\left(\frac{q\sqrt{p_{j}}x}{Z}\right) = -\frac{a_{j+1}q\sqrt{p_{j+1}}}{Z} \cos\left(\frac{q\sqrt{p_{j+1}}x}{Z}\right)\Rightarrow
\end{equation*}
\begin{equation*} 
    a_{j}\sqrt{p_{j}} \cos\left(\frac{q\sqrt{p_{j}}x}{Z}\right) = a_{j+1}\sqrt{p_{j+1}} \cos\left(\frac{q\sqrt{p_{j+1}}x}{Z}\right) 
\end{equation*}
From \eqref{eq::borders_amplitudes}  we have
\begin{equation*} 
    a_{j}\sqrt{p_{j}}= a_{j+1}\sqrt{p_{j+1}} \Rightarrow  \frac{a_{j}}{a_{j+1}}= \frac{\sqrt{p_{j+1}}}{\sqrt{p_{j}}} 
\end{equation*}
And we can choose for the second eigenfunction for each $q$ (up to a global scale)
\begin{equation} \label{eq:tight_sin}
    f_{q}^2(x) = \frac{1}{\sqrt{p_{j}}}\sin\left(\frac{q\sqrt{p_j}x}{Z}\right) \ \ \text{, for}   \ \  x\in\left[\frac{2\pi(j-1)}{l}-\pi,\frac{2\pi j}{l}-\pi\right]
\end{equation}
In Figure \ref{fig:efun1dtight} we show an example for this setup.

\begin{figure}[tb]
    \centering
    \includegraphics[width=\linewidth]{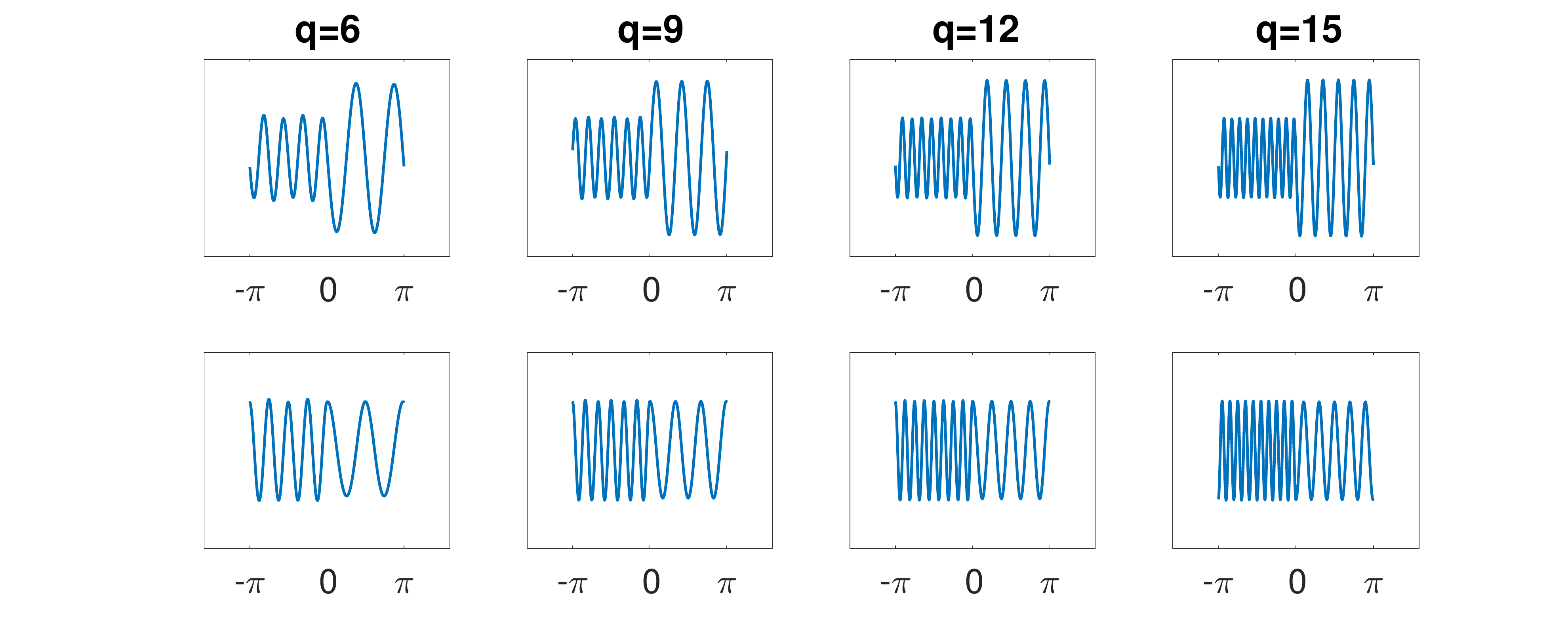}
    \caption{For the NTK of a two-layer network with bias we plot in each of the four columns four of its eigenfunction pairs (each of the same eigenvalue) under a non-uniform data distribution of $p(x) \in 1/\pi\{4/5,1/5\}$ in $\Sphere^1$. For this distribution whenever $\mathrm{mod}(q,3)=0$ there is an integer number of cycles in each region. As a result, for each $q$ we obtain two eigenfunctions of the form of \eqref{eq:tight_cos} and \eqref{eq:tight_sin}.}
    \label{fig:efun1dtight}
    \end{figure}

\subsection{Bounding $a_j$ }

Assuming $p(x)$ is constant in $l$ regions and that WLOG up to a global scale, the minimal amplitude is $a_{\min}=1$. Then for two neighboring regions $R_j$ and $R_{j+1}$ if $p_{j}\geq p_{j+1} \Rightarrow \frac{a_{j+1}}{a_j} \leq \sqrt{\frac{p_{j}}{p_{j+1}}}\leq \sqrt{\frac{p_{\max}}{p_{\min}}}$
and if $p_{j+1}\geq p_j$ $\Rightarrow \frac{a_{j}}{a_{j+1}}\geq 1 \Rightarrow \frac{a_{j+1}}{a_{j}}\leq 1\leq \sqrt{\frac{p_{\max}}{p_{\min}}}$. As a result in each transition between two regions we have
\begin{equation*}
    \frac{a_{i+1}}{a_i}\leq \sqrt{\frac{p_{\max}}{p_{\min}}}
\end{equation*}

Starting from a minimal amplitude of magnitude $1$. For $l$ regions there are no more than $l$ transitions so each amplitude is (loosely) bounded as follows
\begin{equation*}
    a_j \leq a_{min} \left(\sqrt{\frac{p_{\max}}{p_{\min}}}\right)^l = \left(\frac{p_{\max}}{p_{\min}}\right)^\frac{l}{2}
\end{equation*}  
Next we bound the global scale factor. Let $s=\int_{-\pi}^\pi (f(x))^2dx$. Then we have that after normalizing the global scale factor
\begin{equation*}
    a_j \leq \frac{1}{\sqrt{s}}\left(\frac{p_{\max}}{p_{\min}}\right)^\frac{l}{2}
\end{equation*} 
To simplify notation we denote the frequency of each region by $q_j=\frac{\sqrt{p_j}q}{Z}$. Then for $s$ we have: 
\begin{equation*}
   s = \int_{-\pi}^\pi (f(x))^2dx=\sum_{j=1}^la_j^2 \int_{R_j}\cos^2(q_jx+b_{j})dx \geq \sum_{j=1}^la_{\min}^2 \int_{R_j}\cos^2(q_jx+b_{j})dx = \sum_{j=1}^l \int_{R_j}\cos^2(q_jx+b_{j})dx
\end{equation*}
For each region we have
\begin{equation*}
    \int_{R_j}\cos^2(q_jx+b_{j})dx = \int_{-\pi+\frac{2\pi}{l}(j-1)}^{-\pi+\frac{2\pi}{l}j}\cos^2(q_jx+b_{j})dx = 
\end{equation*}
\begin{equation*}
    \frac{1}{2}\int_{-\pi+\frac{2\pi}{l}(j-1)}^{-\pi+\frac{2\pi}{l}j}(1+\cos{}(2q_jx+2b_{j})) dx = \frac{1}{2}\left(x+\frac{\sin(2q_jx+2b_{j})}{2q_j}\right)_{-\pi+\frac{2\pi}{l}(j-1)}^{-\pi+\frac{2\pi}{l}j}=
\end{equation*}
\begin{equation*}
    \frac{1}{2} \left(  -\pi+\frac{2\pi}{l}j+ \frac{\sin(2q_j(-\pi+\frac{2\pi}{l}j)+2b_{j})}{2q_j}-(-\pi+\frac{2\pi}{l}(j-1))-\frac{\sin(2q_j(-\pi+\frac{2\pi}{l}(j-1))+2b_{j})}{2q_j}\right)=
\end{equation*}
\begin{equation*}
    \frac{1}{2}\left(\frac{2\pi}{l}+\frac{\sin(2q_j(-\pi+\frac{2\pi}{l}j)+2b_{j})}{2q_j}-\frac{\sin(2q_j(-\pi+\frac{2\pi}{l}(j-1))+2b_{j})}{2q_j}\right)\geq\frac{\pi}{l}-\frac{1}{2q_j}
\end{equation*}
So we get $s\geq\sum_{j=1}^l \frac{\pi}{l}-\frac{1}{2q_j}=\pi-\frac{1}{2}\sum_{j=1}^l\frac{1}{q_j}=\pi-\frac{1}{2}\sum_{j=1}^l\frac{Z}{\sqrt{p_j}q}$.

And we get:
\begin{equation*}
    s\geq\pi-\frac{1}{2}\sum_{j=1}^l\frac{Z}{\sqrt{p_j}q} = \pi-\frac{Z}{2q}\sum_{j=1}^l\frac{1}{\sqrt{p_j}}
\end{equation*}
As a result all the amplitudes in an eigenfunction of order $q$ are bounded by
\begin{equation}  \label{eq:bound_amp}
    a_i \leq \frac{1}{\sqrt{\pi-\frac{Z}{2q}\sum_{j=1}^l\frac{1}{\sqrt{p_j}}}}\left(\frac{p_{\max}}{p_{\min}}\right)^\frac{l}{2}  ~~~\mathrm{for~all~} 1 \le i \le l
\end{equation}

\section{Local convergence rate as a function of frequency}

To derive the rate of convergence as a function of frequency and density we assume that $p(x)$ forms a piecewise-constant distribution (PCD) with a fixed number of pieces $l$ of equal sizes, $p(x)=p_j$ in $R_j$, ${1 \le j \le l}$. Our proof will rely on a lemma that states informally that not too many eigenfunctions need to be taken into account for convergence -- more precisely, only a number linear in $k$ and inversely linear in $\sqrt{p^*}$, where $p^*>0$ denotes the minimal density. Convergence rate is then determined by the eigenfunction with highest eigenvalue included in the approximation for $g(x)$.

\begin{lemma}  \label{lemma:approx}
Let $p(x)$ be PCD. For any $\epsilon > 0$, there exist $n_k$ such that $\sum_{j={n_k+1}}^\infty g_i^2 < \epsilon^2$, where $g_i=\int_{-\pi}^\pi v_i(x)g(x)p(x)dx$ and $n_k$ is bound as in \eqref{eq:nk_bound} below.
\end{lemma}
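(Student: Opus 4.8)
The plan is to prove a per-coefficient bound $|g_q|=O\!\big(1/(q\sqrt{p^*})\big)$ for every eigenfunction whose order $q$ exceeds a threshold growing linearly in $\kappa$, and then to sum the tail of the resulting $O(\sum q^{-2})$ series. By the symmetrization argument around \eqref{eq:eigfun} the eigenfunctions $v_i$ may be chosen orthonormal for the weighted inner product $\langle h_1,h_2\rangle_p=\int_{-\pi}^{\pi}h_1 h_2\,p\,dx$, so $g_i=\int v_i g p\,dx$ are precisely the coefficients of $g$ in this basis and $\sum_i g_i^2=\|g\|_p^2<\infty$; the point is that only finitely many are non-negligible. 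By \eqref{eq:solutionpc}, an eigenfunction $v$ of order $q$ equals $a_j\cos(q_j x+b_j)$ on the region $R_j$ (of length $2\pi/l$), with local frequency $q_j:=q\sqrt{p_j}/Z\ge q\sqrt{p^*}/Z$, so taking $g(x)=\cos(\kappa x)$ and applying the product-to-sum identity gives
\begin{equation*}
g_q=\sum_{j=1}^{l}p_j a_j\!\int_{R_j}\!\cos(q_j x+b_j)\cos(\kappa x)\,dx .
\end{equation*}

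Each summand splits into two elementary integrals $\tfrac12\int_{R_j}\cos((q_j\mp\kappa)x+b_j)\,dx$, each bounded in absolute value by $\min\{\pi/l,\,|q_j\mp\kappa|^{-1}\}$; since $|q_j+\kappa|>|q_j-\kappa|$ this gives $|p_j a_j\int_{R_j}\cdots|\le p_j a_j\,|q_j-\kappa|^{-1}$ once $q_j>\kappa$. Fixing the threshold $q_0:=2\kappa Z/\sqrt{p^*}$, for $q\ge q_0$ we have $q_j\ge 2\kappa$ in every region, hence $|q_j-\kappa|\ge q_j/2\ge q\sqrt{p^*}/(2Z)$, and therefore
\begin{equation*}
|g_q|\ \le\ \sum_{j=1}^{l}p_j a_j\cdot\frac{2Z}{q\sqrt{p^*}}\ \le\ \frac{2Z\,l\,p_{\max}}{q\sqrt{p^*}}\,\max_{1\le j\le l}a_j .
\end{equation*}
To control $\max_j a_j$ I would use the amplitude bound \eqref{eq:bound_amp}: once $q$ is also large enough that $\tfrac{Z}{2q}\sum_j p_j^{-1/2}\le\pi/2$ (a requirement of the form $q\ge O(l/\sqrt{p^*})$, so absorbed into one threshold $q_1=\max\{q_0,O(l/\sqrt{p^*})\}=O(\kappa/\sqrt{p^*})$), that bound gives $\max_j a_j\le\sqrt{2/\pi}\,(p_{\max}/p_{\min})^{l/2}=:A$. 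Hence $|g_q|\le C\,A\,Z\,l\,p_{\max}/(q\sqrt{p^*})$ for all $q\ge q_1$, with $C$ absolute.

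It remains to sum. There are at most two eigenfunctions of each order $q$, and by \eqref{eq:ck1} (eigenvalues decay like $Z^2/(\pi^2 q^2)$) ordering eigenfunctions by decreasing eigenvalue makes order $q$ correspond to index $\asymp 2q$; truncating after the first $n_k$ eigenfunctions thus discards all orders $q>q_\star$ with $q_\star\asymp n_k/2$. Then, for $q_\star\ge q_1$,
\begin{equation*}
\sum_{i>n_k}g_i^2\ \le\ 2\sum_{q>q_\star}\Big(\frac{C A Z l p_{\max}}{q\sqrt{p^*}}\Big)^{2}\ \le\ \frac{2(C A Z l p_{\max})^{2}}{p^*\,(q_\star-1)} .
\end{equation*}
Choosing $n_k$ so that $q_\star\ge q_1$ and $q_\star-1\ge 2(C A Z l p_{\max})^2/(p^*\epsilon^2)$ makes the right side $<\epsilon^2$; solving for $n_k$ yields the explicit bound \eqref{eq:nk_bound}, whose leading dependence on $\kappa$ is linear (the remaining terms depend on $\epsilon,l,Z,p_{\max}/p_{\min}$ but not on $\kappa$). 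This linear-in-$\kappa$ count is exactly what, fed through $\lambda\asymp Z^2/(\pi^2 q^2)$, produces the later $\tilde O(\kappa^2/p^*)$ rate.

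The step I expect to be the real obstacle is the bookkeeping around the per-coefficient bound rather than the oscillatory-integral estimate: the region-restricted pieces of an eigenfunction are not orthogonal and their amplitudes may spread by up to $(p_{\max}/p_{\min})^{l/2}$, so the bound unavoidably carries a constant exponential in the (fixed) number of pieces $l$, and one must check that the two separate ``$q$ large'' conditions — $q_j>\kappa$ uniformly in $j$, and validity of \eqref{eq:bound_amp} — are both dominated by a single threshold that is still $O(\kappa/\sqrt{p^*})$. A secondary point is verifying that the discrete family of orders $q$ exhausts the spectrum, which follows from completeness of the symmetrized kernel's eigenbasis together with the boundary-matching argument of the previous section.
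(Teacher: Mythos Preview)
Your proposal is correct and follows essentially the same route as the paper: decompose $g_i$ over the constant-density regions, use the elementary oscillatory-integral bound on $\int_{R_j}\cos(\kappa x)\cos(q_{ij}x+b_j)\,dx$ once $q_{ij}\ge 2\kappa$ to obtain $|g_i|=O\big(BZ/(q_i\sqrt{p^*})\big)$ with $B=\sum_j a_j p_j$, invoke the amplitude bound \eqref{eq:bound_amp} to control $B$, and then sum the resulting $O(q^{-2})$ tail to obtain exactly the two-branch maximum in \eqref{eq:nk_bound}. Your treatment is in fact a bit more careful than the paper's on two points the paper glosses over (the weighted-$L^2$ orthonormality underlying $g_i$, and the extra threshold on $q$ needed for \eqref{eq:bound_amp} to give a finite constant), but the argument and the resulting bound are the same.
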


\begin{proof}
Given a target function $g(x)=\cos(kx)$ and a basis function $v_i(x) = a(x) \cos(\frac{q_i\sqrt{p(x)}x}{Z}+b(x))$ where $q_i=\lfloor i/2 \rfloor$. (We will assume $a=1$ for now.) Their inner product can be written as
\begin{equation}  \label{eq:inn1}
    g_i = \sum_{j=1}^l a_j p_j \int_{R_j} \cos(kx) \cos(q_{ij}x+b_j)dx
\end{equation}
where $q_{ij}=q_i\sqrt{p_j}/Z$ denotes the local frequency of $v_i(x)$ at $R_j$. Next, to derive a bound we will restrict our treatment to $q_{ij} \ge 2k$ (and by that bound $n_k$ from below). With this assumption we obtain
\begin{align*}
    &\left| \int_{R_j} \cos(kx) \cos(q_{ij}x+b_j)dx \right| \le \left| \int_{-\frac{\pi}{l}}^\frac{\pi}{l} \cos(kx) \cos(q_{ij}x)dx \right| = \nonumber \\ &
    \left| \dfrac{\sin\left(\frac{\pi\left(q_{ij}+k\right)}{l}\right)}{q_{ij}+k}+\dfrac{\sin\left(\frac{\pi\left(q_{ij}-k\right)}{l}\right)}{q_{ij}-k} \right| \le \frac{1}{q_{ij}+k}  + \frac{1}{q_{ij}-k} =  \dfrac{2q_{ij}}{q_{ij}^2-k^2} \le \frac{8}{3q_{ij}}
\end{align*}

Let $p^*=\min_j p_j$ and let $q_i^*=q_i\sqrt{p^*}/Z$, $q_i^*$ denotes the frequency associated with the corresponding region (which is the lowest within $v_i$). Our requirement that $q_{ij} > 2k$ for all $1 \le j \le l$ implies that $q_i^* > 2k$, and therefore
\begin{equation}  \label{eq:app_qi}
    q_i > \frac{2Zk}{\sqrt{p^*}}
\end{equation}
Additionally, using \eqref{eq:inn1}
\begin{equation*}
    |g_i| \le \frac{8}{3} \sum_{j=1}^l \frac{a_j p_j}{q_{ij}}\le \frac{8}{3 q^*_i} \sum_{j=1}^l a_j p_j = \frac{8B}{3 q^*_i} = \frac{8BZ}{3q_i\sqrt{p^*}}
\end{equation*}    
where we denote by $B=\sum_{j=1}^l a_j p_j$ and the equality on the right is obtained by plugging in the definition of $q_i^*$. Note that $\sum_{j=1}^l p_j = l/(2\pi)$ (since $1=\int_{-\pi}^\pi p(x) dx = \sum_{j=1}^l 2\pi p_j/l$), implying that $B \le la^*/(2\pi)$, where $a^*=\max_j a_j$ and $a^*$ is bounded by \eqref{eq:bound_amp}.

Next, for a given $\epsilon>0$ we wish to bound the sum $\sum_{i=n_k}^\infty g_i^2$ by starting from a sufficiently high index $n_k$, i.e.,
\begin{equation*}
    \sum_{i={n_k+1}}^\infty g_i^2 \le \left( \frac{8 B Z}{3 \sqrt{p^*}} \right)^2 \sum_{i={n_k+1}}^\infty \frac{1}{q_i^2} < \frac{1}{q_{n_k}} \left( \frac{8 B Z}{3 \sqrt{p^*}} \right)^2 < \epsilon^2
\end{equation*}
By the definition of $q_i$, $n_k \ge 2 q_{n_k}$, so
\begin{equation*}
    n_k > \frac{2}{\epsilon^2} \left( \frac{8 B Z}{3 \sqrt{p^*}} \right)^2 = \frac{128 B^2 Z^2}{9 \epsilon^2 p^*}
\end{equation*}
So in conclusion,
\begin{equation}  \label{eq:nk_bound}
    n_k > \max \left\{ \frac{4Zk}{\sqrt{p^*}},\, \frac{128 B^2 Z^2}{9 \epsilon^2 p^*} \right\}
\end{equation}
\end{proof}

\begin{theorem}
Let $p(x)$ be a PCD, for any $\delta>0$ the number of iterations $t$ needed to achieve $\|g(x)-u^{(t)}(x)\|<\delta$ is $\tilde O(k^2/p^*)$, where $\tilde O$ hides logarithmic terms.
\end{theorem}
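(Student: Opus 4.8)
The plan is to reduce the question to the eigendecomposition of the NTK operator via Theorem~\ref{thm:finegrained}, split the resulting spectral sum into a ``head'' of the first $n_k$ eigenfunctions and a ``tail,'' control the tail with Lemma~\ref{lemma:approx}, and finish with the standard geometric-decay estimate on the head. First I would invoke Theorem~\ref{thm:finegrained} (and the fact that the eigensystem of $H^\infty/n$ converges to that of the integral operator in \eqref{eq:eigf_1d}), so that for a sufficiently wide network, up to a residual of size $O(\epsilon)$,
\begin{equation*}
\|g-u^{(t)}\|^2 \;=\; \sum_{i\ge 1}(1-\eta\lambda_i)^{2t}\,g_i^2,
\qquad g_i=\int_{-\pi}^\pi v_i(x)g(x)p(x)\,dx ,
\end{equation*}
with $\{v_i\}$ the orthonormal eigenfunctions and $\{\lambda_i\}$ the eigenvalues given by \eqref{eq:ck1} (so $\lambda_i$ depends only on $q_i=\lfloor i/2\rfloor$ and decreases monotonically in $q_i$). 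I would choose the step size $\eta$ small enough that $0<\eta\lambda_i<1$ for all $i$ (possible since $\lambda_1=\lambda_{\max}<\infty$), so every factor $(1-\eta\lambda_i)^{2t}$ lies in $(0,1]$ and is non-increasing in $t$. Note also $\sum_i g_i^2=\int_{-\pi}^\pi g(x)^2p(x)\,dx\le 1$ since $g=\cos(kx)$ and $\int p=1$.

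Next I would split at the index $n_k$ from Lemma~\ref{lemma:approx}. For the tail, apply Lemma~\ref{lemma:approx} with $\epsilon=\delta/2$: since $(1-\eta\lambda_i)^{2t}\le 1$, we get $\sum_{i>n_k}(1-\eta\lambda_i)^{2t}g_i^2\le\sum_{i>n_k}g_i^2<\delta^2/4$ for \emph{all} $t$. The bound \eqref{eq:nk_bound} gives $n_k = O\big(\max\{k/\sqrt{p^*},\,1/(\delta^2 p^*)\}\big)$, where the amplitude-dependent constant $B$ is uniformly bounded through \eqref{eq:bound_amp} by a quantity depending only on $l$ and $p_{\max}/p_{\min}$; for fixed $\delta$ and large $k$ this is $n_k=O(k/\sqrt{p^*})$. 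For the head, the first $n_k$ eigenfunctions have $q_i\le q_{n_k}:=\lfloor n_k/2\rfloor$, and by monotonicity of \eqref{eq:ck1} the smallest eigenvalue among them is $\lambda_{n_k}\ge Z^2/(\pi^2 q_{n_k}^2)\ge c\,p^*/k^2$ for an absolute constant $c>0$. Hence
\begin{equation*}
\sum_{i\le n_k}(1-\eta\lambda_i)^{2t}g_i^2 \;\le\; (1-\eta\lambda_{n_k})^{2t}\sum_{i\le n_k}g_i^2 \;\le\; (1-\eta\lambda_{n_k})^{2t}.
\end{equation*}

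Finally I would require the right-hand side to be $<\delta^2/4$; using $-\log(1-x)\ge x$ this holds once $t\ge \log(4/\delta^2)/(2\eta\lambda_{n_k}) = O\!\big(k^2\log(1/\delta)/(\eta p^*)\big)$. Adding the head and tail contributions gives $\|g-u^{(t)}\|<\delta$ after $t=\tilde O(k^2/p^*)$ iterations, which is the claim. I would close by noting that the $O(\epsilon)$ finite-width residual from Theorem~\ref{thm:finegrained} (including $\|\uu^{(0)}\|\le O(\epsilon)$) can be absorbed by taking the width large enough relative to $\delta$.

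\textbf{Main obstacle.} The genuinely delicate step is the tail bound, which is why it is isolated as Lemma~\ref{lemma:approx}: because $g=\cos(kx)$ and the local frequencies $q_i\sqrt{p_j}/Z$ of the eigenfunctions are generically incommensurate with $k$, $g$ has nonzero projections onto infinitely many $v_i$, and moreover the $v_i$ restricted to a single region $R_j$ are not mutually orthogonal, so naive truncation fails. The content of the lemma is that these projections nonetheless decay like $1/q_i$ once $q_i$ exceeds roughly $2Zk/\sqrt{p^*}$, which is exactly what keeps $n_k$ \emph{linear} in $k$ (and hence $1/\lambda_{n_k}$ quadratic). A secondary technical point, also handled earlier, is that the eigenfunction amplitudes $a_j$ must be controlled uniformly across region boundaries (they could a priori compound multiplicatively), which is supplied by \eqref{eq:bound_amp}; without it the constant $B$, and hence $n_k$, would not be $k$-uniformly bounded.
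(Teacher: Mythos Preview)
Your proposal is correct and follows essentially the same route as the paper: split at the index $n_k$ supplied by Lemma~\ref{lemma:approx} with $\epsilon=\delta/2$, bound the tail by the lemma and the head by $(1-\eta\lambda_{n_k})^{2t}$, then use $\lambda_{n_k}=\Theta(p^*/k^2)$ (from \eqref{eq:ck1} with $q_{n_k}=2Zk/\sqrt{p^*}$) to obtain $t=\tilde O(k^2/p^*)$. Two minor remarks: your head bound via Bessel's inequality ($\sum_i g_i^2\le\|g\|_{L^2(p)}^2\le 1$) is in fact slightly tighter than the paper's, which uses the cruder $|g_i|^2\le\pi$ and thereby picks up an extra factor of $n_k$ (harmless, since it is absorbed into $\tilde O$); and the spectral-decomposition step here should cite the two-layer result of \cite{arora2019fine} rather than Theorem~\ref{thm:finegrained}, which is the deep-network analogue.
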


\begin{proof}
Let $n_k$ be chosen as in Lemma~\ref{lemma:approx} with $\epsilon = \delta/2$, i.e.
\begin{equation*}
    n_k = \max \left\{ \frac{4Zk}{\sqrt{p^*}},\, \frac{256 B^2 Z^2}{9 \delta^2 p^*} \right\}
\end{equation*}
Let
\begin{equation*}
    \hat g(x) = \sum_{i=1}^{n_k} g_i v(i)
\end{equation*}
Then,
\begin{equation*}
    \|g(x) - \hat g(x)\|^2 = \sum_{i=n_k+1}^\infty g_i^2 < \left(\frac{\delta}{2}\right)^2
\end{equation*}
and due to triangle inequality
\begin{equation*}
    \|g(x)-u^{(t)}(x)\| \le \|g(x)-\hat g(x)\| + \|\hat g(x)-u^{(t)}(x)\|
\end{equation*}
it suffices to find $t$ such that
\begin{equation*}
     \|\hat g(x)-u^{(t)}(x)\| < \frac{\delta}{2} = \tilde \delta
\end{equation*}
Using (Arora et al., 2019b)'s Theorem 4.1 adapted to continuous operators
\begin{equation}  \label{eq:delta}
    \Delta^2 = \|\hat g - u^{(t)}\|^2 \approx \sum_{i=1}^{n_k} (1-\eta \lambda_i)^{2t} g_i^2 \le \pi \sum_{i=1}^{n_k} (1-\eta \lambda_i)^{2t} \le \pi n_k (1 - \eta \lambda_{n_k})^{2t}
\end{equation}
where the left inequality is due to $|g_i|^2 \le \| \cos^2(kx) \| = \pi$ and the right inequality is because $\lambda_i$ are arranged in a descending order. Now for a fixed distribution $p(x)$, and since we are interested in the asymptotic rate of convergence (i.e., as $k \rightarrow \infty$), as soon as $k > 64 B^2 Z / (9 \tilde\delta^2 \sqrt{p^*})$ it suffices to only consider the case $q_{n_k} = 2Zk/\sqrt{p^*}$, as in \eqref{eq:app_qi}. The eigenvalue $\lambda_{n_k}$ is determined according to
\begin{equation*}
    \lambda_{n_k} = \frac{Z^2}{\pi^2 q_{n_k}^2} = \frac{p^*}{4\pi^2 k^2}
\end{equation*}
(Here we used the expression for $\lambda_{n_k}$ assuming $n_k$ is odd. A similar expression of the same order is obtained for even $n_k$.) Consequently, to bound $\Delta^2 < \tilde\delta$ in \eqref{eq:delta} and substituting for $n_k$ and $\lambda_{n_k}$ we have
\begin{equation*}
    \frac{4Zk}{\sqrt{p^*}} \left(1 - \frac{\eta p^*}{4\pi^2 k^2} \right)^{2t} < \tilde\delta
\end{equation*}
Taking log
\begin{equation*}
    2t \log\left(1 - \frac{\eta p^*}{4\pi^2 k^2} \right) > \log\left( \frac{\delta\sqrt{p^*}}{4Zk}\right)
\end{equation*}
from which we obtain
\begin{equation*}
    t > \frac{\log \left( \frac{\delta\sqrt{p^*}}{4Zk}\right)}{2\log\left(1 - \frac{\eta p^*}{4\pi^2 k^2} \right)} \approx -\frac{2\pi^2 k^2}{\eta p^*} \log\left(\frac{\delta\sqrt{p^*}}{4Zk}\right) = \tilde O \left(\frac{k^2}{p^*} \right)
\end{equation*}
where $\tilde O$ hides logarithmic terms.

\end{proof}

\comment{
-------------

Maybe this can be useful..

\begin{equation}
    A = A(k,d,l) = \int_{-\frac{\pi}{l}}^{\frac{\pi}{l}} \cos(kx) \cos((k+d)x)dx = \dfrac{\sin\left(\frac{{\pi}\left(2k+d\right)}{l}\right)}{2k+d}+\dfrac{\sin\left(\frac{{\pi}d}{l}\right)}{d}
\end{equation}
\begin{equation}
    \sin\left(\frac{{\pi}\left(2k+d\right)}{l}\right) = \sin\left(\frac{2\pi k}{l}\right) \cos\left(\frac{\pi d}{l}\right) + \cos\left(\frac{2\pi k}{l}\right) \sin\left(\frac{\pi d}{l}\right)
\end{equation}
Assuming $k/l$ is integer then $\sin\left(\frac{2\pi k}{l}\right)=0$ and $\cos\left(\frac{2\pi k}{l}\right)=1$, so
\begin{equation}
    \sin\left(\frac{{\pi}\left(2k+d\right)}{l}\right) =   \sin\left(\frac{\pi d}{l}\right)
\end{equation}
Therefore,
\begin{equation}
    A = \left( \dfrac{1}{2k+d}+\dfrac{1}{d} \right) \sin\left(\frac{{\pi}d}{l}\right) = \dfrac{2(k+d)}{d(2k+d)}\sin\left(\frac{{\pi}d}{l}\right) = \left(1 + \dfrac{d}{2k+d} \right) \dfrac{1}{d} \sin\left(\frac{{\pi}d}{l}\right)
\end{equation}
Assuming $|d| \le 0.5$, we note that
\begin{equation}
    \dfrac{d}{2k+d} \lessapprox \dfrac{1}{4k} 
\end{equation}
and
\begin{equation}
    0.9 \dfrac{\pi}{l} \le \dfrac{1}{d} \sin\left(\frac{{\pi}d}{l}\right) \le \dfrac{\pi}{l}
\end{equation}

In principle in every region $R_j$ we should have $|d_j| \le \sqrt{p_j}/(2Z)$. Since
\begin{equation}
    Z = \frac{1}{2\pi}\int_{-\pi}^\pi \sqrt{p(x)} = \frac{1}{l} \sum_{j=1}^l \sqrt{p_j}
\end{equation}
we obtain that
\begin{equation}
    |d_j| \le \frac{l\sqrt{p_j}}{2 \sum_{j=1}^l \sqrt{p_j}}    
\end{equation}

}

\onecolumn

\section{Spectral convergence analysis  for deep networks - proof of Theorem 2}

\subsection{The network model}

The parameters of the network are $W = (W_1, ..., W_L)$ where $W_l \in \Real^{m \times m}$ and also $A \in \Real^{m \times d}$ and $B \in \Real^{1 \times m}$. The network function over  input $\x_i \in \Real^d$ ($i \in \left[ n \right]$) is given by   
\[ u_i =f(\x_i; W) = B \sigma (W_L\sigma (W_{L-1}\sigma(....(W_1\sigma(Ax_i))..)) \]
where $\sigma$ stands for element wise RELU activation function. For a tuple $W = (W_1, ..., W_L)$ of matrices, we let $\norm{W}_2 = \max_{l \in [L]} \norm{W_l}_2$ and $\norm{W}_F = (\sum_{l=1}^L \norm{W_l}_F^2)^{1/2}$.

The parameters are initialized randomly from a normal distribution according to  
\begin{align}
\label{eq:random_init}
[W_l]_{ij} &\sim \mathcal{N}(0,\frac{2}{m}), \, l \in [L] \\ \nonumber
A_{ij} &\sim \mathcal{N}(0,\frac{2}{m}) \\ \nonumber
B_{ij} &\sim \mathcal{N}(0,\tau^2) 
\end{align}
where similarly to   \cite{allenzhu} the layers $A$ and $B$ are initialized and held fixed. 

The network functionality is summarized as follows 
\begin{align*}
\h_{i,0} &= \sigma(A\x_i) \\
\h_{i,l}^{(t)} &= \sigma(W_l^{(t)}\h_{i,l-1}^{(t)}) \\
\uu_i^{(t)} &= B\h_{i,L}^{(t)}
\end{align*}
where $i \in \left[n\right]$, $l \in \left[L\right]$ and $t$ denotes iteration number. In addition, for each input vector $i \in \left[n\right]$ and layer $l \in \{ 0,1, ..., L \}$, we associate  a diagonal matrix $D_{i,l}$ such that for $j \in \left[ m \right]$, $(D_{i,l})_{j,j}=\mathbb{I}_{(W_l \h_{i,l-1})_j \geq 0}$, where we use the convention $\h_{i,-1}=\x_i$.
The network is trained to minimize the $\ell_2$ loss 
\[\Phi(W)= \frac{1}{2}\sum_{i=1}^{n} (y_i-f(\x_i; W))^2 \] 
 

We will analyze the properties of the  matrices $H, H^{\infty} \in \mathbb{R}^{n \times n}$, comprised of the following entries
\[ H_{ij}(t)=\left \langle \frac{\partial u_i^{(t)}}{\partial W},\frac{\partial u_j^{(t)}}{\partial W} \right \rangle \]

\[ H_{ij}^{\infty} = \mathbb{E}_{W} \left\langle \frac{\partial u_i^{(0)}}{\partial W}, \frac{\partial u_j^{(0)}}{\partial W} \right\rangle. \]

We write the eigen-decomposition of $H^{\infty}=\sum_{i=1}^n \lambda_i \vv_i \vv_i^T$, where $\vv_1, \ldots, \vv_n$ are the eigenvectors of $H^{\infty}$ and $\lambda_1, \ldots, \lambda_n$ are their corresponding eigenvalues. The minimal eigenvalue is denoted by   $\lambda_0=\min(\lambda(H^\infty))$.

\begin{theorem} \label{Thm:main}  
For any $\epsilon \in (0,1]$ and  $\delta \in (0,O(\frac{1}{L})]$, let $\tau=\Theta(\frac{\epsilon\hat{\delta}}{n})$, $m \geq \Omega \left( \frac{n^{24} L^{12} \log^5 m}{\delta^8 \tau^6} \right)$, $\eta = \Theta \left( \frac{\delta}{n^4 L^2 m \tau^2}\right)$. Then, with probability of at least $1-\hat{\delta}$ over the random initialization after $t$ iterations of GD we have that \begin{equation}
\label{eq:finegrained}
    \|\y-\uu^{(t)}\| = \sqrt{\sum_{i=1}^n (1-\eta \lambda_i)^{2t}(\vv_i^T \y)^2} \, \pm \epsilon.
\end{equation}
\end{theorem}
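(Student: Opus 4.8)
The plan is to follow the standard ``lazy training'' template of \cite{du2018gradient,arora2019fine,allenzhu}, carried out for the depth-$L$ network with the first and last layers frozen, and then to convert the resulting near-linear dynamics into the claimed spectral formula. Write one GD step as $W^{(t+1)}=W^{(t)}-\eta\nabla\Phi(W^{(t)})$ and expand the induced change in the predictions. Since $u_i=B\,\sigma(W_L\sigma(\cdots\sigma(A\x_i)))$ is piecewise linear in $W$, for a small enough step the activation patterns $D_{i,l}$ do not flip, so to first order $\uu^{(t+1)}-\uu^{(t)}=-\eta H(t)(\uu^{(t)}-\y)$; the exact step differs by a residual $\epsilon(t)$ collecting (a) the pattern flips and (b) the genuine second-order dependence of $f$ on $W$. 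Thus the first sub-step is to establish
\[
\uu^{(t+1)}-\y=(I-\eta H(t))(\uu^{(t)}-\y)+\epsilon(t),
\]
with $\norm{\epsilon(t)}$ controlled, of order $\eta\cdot\mathrm{poly}(n,L)\cdot\norm{\uu^{(t)}-\y}^{2}/\sqrt{m}$, using the semi-smoothness estimates of \cite{allenzhu}.

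Second, I would run a coupled induction over $t$ on the invariant that $\norm{W^{(s)}-W^{(0)}}$ stays below a small radius $\omega$ for all $s\le t$. Under this invariant one shows: (i) every $D_{i,l}^{(s)}$ is close to $D_{i,l}^{(0)}$, hence $\norm{H(s)-H^{(0)}}$ is small, and by concentration of the random initialization $\norm{H^{(0)}-H^{\infty}}$ is small, so $\norm{H(s)-H^{\infty}}$ is small; (ii) the loss contracts, $\norm{\uu^{(s+1)}-\y}\lesssim(1-\eta\lambda_0/2)\norm{\uu^{(s)}-\y}$; and (iii) the cumulative movement $\norm{W^{(t+1)}-W^{(0)}}$ is still below $\omega$, closing the induction. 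This is exactly where the hypotheses $m\ge\Omega(n^{24}L^{12}\log^5 m/(\delta^8\tau^6))$ and $\eta=\Theta(\delta/(n^4L^2m\tau^2))$ are consumed: they make $\omega$ small enough for (i) and (iii) simultaneously. Everything holds with probability at least $1-\hat\delta$ over the initialization.

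Third, replacing $H(t)$ by $H^{\infty}$ in the recursion and unrolling gives
\[
\uu^{(t)}-\y=(I-\eta H^{\infty})^{t}(\uu^{(0)}-\y)+\xi(t),\qquad
\xi(t)=\sum_{s=0}^{t-1}(I-\eta H^{\infty})^{t-1-s}\bigl[\eta(H^{\infty}-H(s))(\uu^{(s)}-\y)+\epsilon(s)\bigr].
\]
Since $0\preceq\eta H^{\infty}\preceq I$ we have $\norm{(I-\eta H^{\infty})^{t-1-s}}\le1$, each bracket is bounded by $\eta\cdot(\text{small})\cdot\norm{\uu^{(s)}-\y}+\norm{\epsilon(s)}$, and the geometric decay from (ii) makes the sum converge to $\norm{\xi(t)}\le O(\epsilon)$ uniformly in $t$. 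Separately, the frozen last layer $B\sim\mathcal N(0,\tau^2I)$ is independent of the forward features $\h^{(0)}_{i,L}$, whose norms are $O(1)$ w.h.p.\ thanks to the $2/m$ variance normalization, so $\norm{\uu^{(0)}}=O(\tau\sqrt n)=O(\epsilon)$ by the choice $\tau=\Theta(\epsilon\hat\delta/n)$; hence $(I-\eta H^{\infty})^{t}(\uu^{(0)}-\y)=-(I-\eta H^{\infty})^{t}\y\pm O(\epsilon)$. Finally, using $H^{\infty}=\sum_{i=1}^{n}\lambda_i\vv_i\vv_i^{T}$ gives $\norm{(I-\eta H^{\infty})^{t}\y}=\sqrt{\sum_{i=1}^n(1-\eta\lambda_i)^{2t}(\vv_i^{T}\y)^{2}}$, and collecting the three $O(\epsilon)$ contributions (from $\xi(t)$, from $\uu^{(0)}$, and from the reverse triangle inequality) and rescaling constants yields \eqref{eq:finegrained}.

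I expect the main obstacle to be the uniform-in-$t$ bookkeeping of the second and third paragraphs: one must keep the trajectory inside the radius $\omega$ where $H(t)\approx H^{\infty}$ \emph{and} keep $\norm{\xi(t)}$ below $\epsilon$ for \emph{all} $t$ at once, which is what forces the precise polynomial dependence of $m$ and $\eta$ on $n,L,\delta,\tau$. Concretely, adapting the perturbation and semi-smoothness bounds of \cite{allenzhu,arora2019fine} to a depth-$L$ network whose outer layers $A$ and $B$ are held fixed, and verifying that this freezing does not break their coupling arguments (in particular that $H^{\infty}$ still governs the contraction), is the technical heart of the proof.
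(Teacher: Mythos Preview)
Your proposal is correct and follows essentially the same architecture as the paper: establish the perturbed linear recursion $\uu^{(t+1)}-\y=(I-\eta H(t))(\uu^{(t)}-\y)+\epsilon(t)$ via the semi-smoothness machinery of \cite{allenzhu}, replace $H(t)$ by $H^{\infty}$, unroll, bound the accumulated error $\xi(t)$, kill $\uu^{(0)}$ using the choice of $\tau$, and read off the spectral expression.

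There is one difference in execution worth noting. You bound $\norm{\xi(t)}$ by invoking the geometric decay of $\norm{\uu^{(s)}-\y}$ and summing the series, aiming for a bound uniform in $t$. The paper instead bounds each summand crudely by a constant (both $(1-\eta\lambda_0)^i$ and the loss-contraction factor are at most $1$), multiplies by $t$, and then caps $t\le O(n^{6}L^{2}/\delta^{2})$ by directly invoking the convergence theorem of \cite{allenzhu}. Relatedly, the paper does not rerun the coupled induction you describe in your second paragraph; it quotes the weight-stays-in-ball invariant and the loss contraction as black-box consequences of \cite{allenzhu}. Your route is a bit cleaner conceptually; the paper's buys the convenience of citing existing lemmas verbatim. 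One small imprecision in your sketch: the residual $\epsilon(t)$ carries a term \emph{linear} in $\norm{\uu^{(t)}-\y}$ (from activation flips) in addition to the quadratic one you wrote, but this does not affect the argument.
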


\subsection{Proof strategy}

The proof of Thm. \ref{Thm:main} relies on a theorem, provided by \cite{allenzhu}, stated in Thm.\ \ref{Thm_conv_Zhu}, and an observation, based  the on the derivation of the proof to that theorem, which we state in Lemma \ref{Allen-Zhu-corollary}.

Thm.\ \ref{Thm_conv_Zhu} assumes that the data is normalized, so that $\norm{\x_i}=1$, and there exists $\delta \in (0,O(\frac{1}{L})]$ such that for every pair $i, j \in [n]$, we have $\norm{\x_i - \x_j} \geq \delta$ and also it holds that $\abs{y_i} \leq O(1)$.

In addition, we prove  Lemma \ref{mainThm}, which is the basis for the proof of our Theorem. 

\begin{lemma}\label{mainThm}
Suppose  $\delta\in(0,O(\frac{1}{L})] $, $m \geq \Omega \left( \frac{n^{24} L^{12} \log^5 m}{\delta^8 \tau^2 } \right)$, $\eta = \Theta \left( \frac{\delta}{n^4 L^2 m \tau^2}\right)$ and also let $\omega =O(\frac{n^3 \log m}{\delta \tau \sqrt{m}}) $. Then, with probability at least $1-e^{-\Omega(m\omega^{2/3}L)} $ over the randomness of  $A, B$ and $W^{(0)}$ we have  
\begin{align}\label{epsilon_eq}
\uu(t+1)-\y=(I-\eta H(t))(\uu(t)-\y)+\epsilon(t)
\end{align}
with 
$$ \norm{\epsilon(t)} \leq O\left(\frac{L \log^{4/3} m}{\tau ^{1/3} m^{1/6}n^{1.5}}\right)\sqrt{\Phi(W^{(t)})} + O\left(\frac{\delta^2}{\tau n^6m^{0.5} L^{1.5}}\right)\Phi(W^{(t)})$$ 
\end{lemma}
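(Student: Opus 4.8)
The plan is to treat \eqref{epsilon_eq} as a bookkeeping identity and to put all the effort into bounding the residual $\epsilon(t)$. First I would start from the GD update $W^{(t+1)}=W^{(t)}-\eta\nabla\Phi(W^{(t)})$, use $\nabla\Phi(W^{(t)})=\sum_{j=1}^n(\uu_j^{(t)}-y_j)\,\partial\uu_j^{(t)}/\partial W$, and Taylor-expand each prediction to first order about $W^{(t)}$, writing $\uu_i^{(t+1)}=\uu_i^{(t)}+\langle\partial\uu_i^{(t)}/\partial W,\,W^{(t+1)}-W^{(t)}\rangle+\epsilon_i(t)$. The linear term is $-\eta\sum_j(\uu_j^{(t)}-y_j)\langle\partial\uu_i^{(t)}/\partial W,\partial\uu_j^{(t)}/\partial W\rangle=-\eta[H(t)(\uu^{(t)}-\y)]_i$ by the definition of $H(t)$, so collecting coordinates yields \eqref{epsilon_eq} with $\epsilon(t)=(\epsilon_1(t),\dots,\epsilon_n(t))$, where $\epsilon_i(t)$ is exactly the linearization error of the forward map $W\mapsto f(\x_i;W)$ over one GD step.

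Next I would control $\norm{\epsilon(t)}$ using two facts imported from the analysis behind Thm.~\ref{Thm_conv_Zhu} and the observation in Lemma~\ref{Allen-Zhu-corollary}. (i) The iterates stay in the perturbation region: since Thm.~\ref{Thm_conv_Zhu} already gives geometric decay of $\Phi(W^{(t)})$, the path length $\sum_{s\le t}\eta\norm{\nabla\Phi(W^{(s)})}_F$ is a convergent geometric series, so $\norm{W^{(t)}-W^{(0)}}_2\le\omega$ for all $t$ with $\omega=O(n^3\log m/(\delta\tau\sqrt m))$. (ii) Inside an $\omega$-ball around $W^{(0)}$ the forward pass is close to its linearization: for $\norm{W''}_2\le\omega$, $f(\x_i;W^{(t)}+W'')$ and its ``pseudo-network'' linearization about $W^{(t)}$ differ by $O(\mathrm{poly}(L))\,\omega^{1/3}\sqrt m\,\norm{W''}_2$ (the $\omega^{1/3}$ because at most $O(m\omega^{2/3})$ of the ReLU sign patterns $D_{i,l}$ flip per layer), and in addition $\norm{\nabla\Phi(W^{(t)})}_F\le O(\mathrm{poly}(n,L))\sqrt m\,\sqrt{\Phi(W^{(t)})}$.

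With these in hand I would apply (ii) to $W''=W^{(t+1)}-W^{(t)}=-\eta\nabla\Phi(W^{(t)})$, splitting $\epsilon_i(t)$ into a piece first order in $W''$ (sign-pattern changes caused by the step, weighted by the backpropagated signal) and a genuinely quadratic piece. For the first, $\norm{W''}_2=\eta\norm{\nabla\Phi(W^{(t)})}_2\le O(1)\,\eta\sqrt{m\,\Phi(W^{(t)})}$; plugging in $\eta=\Theta(\delta/(n^4L^2m\tau^2))$ and the value of $\omega$ collapses the $m$-powers and leaves the $\sqrt{\Phi(W^{(t)})}$ term with coefficient $O(L\log^{4/3}m/(\tau^{1/3}m^{1/6}n^{1.5}))$. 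For the quadratic piece, $O(\mathrm{poly})\sqrt m\,\norm{W''}_2^2\le O(\mathrm{poly})\,\eta^2 m^{3/2}\,\Phi(W^{(t)})$, and absorbing one factor of $\eta$ into the constants produces the $\Phi(W^{(t)})$ term with coefficient $O(\delta^2/(\tau n^6m^{0.5}L^{1.5}))$. Passing from the coordinatewise bounds to $\norm{\epsilon(t)}$ only adds $\mathrm{poly}(n)$ factors (already accounted for), and the stated failure probability $e^{-\Omega(m\omega^{2/3}L)}$ is precisely the event on which the sign-change and perturbation estimates hold over the randomness of $A,B,W^{(0)}$.

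The hard part will not be the algebra of the exponents but the propagation of perturbations through all $L$ layers with only polynomial depth dependence: a weight change $W''$ perturbs every hidden vector $\h_{i,l}$, flips a bounded number of entries of each $D_{i,l}$, and these compound multiplicatively across depth, so one must show — this is exactly what the imported Allen-Zhu lemmas buy us — that the compounded deviation stays $O(\mathrm{poly}(L))\,\omega^{1/3}\sqrt m\,\norm{W''}_2$ rather than exponential in $L$, and that the non-standard last-layer initialization $B\sim\mathcal N(0,\tau^2)$ is carried correctly through the backward pass (the origin of the $\tau$'s in the bound). A secondary subtlety is the a priori claim that the iterates never leave the $\omega$-ball, which hinges on the linear convergence of Thm.~\ref{Thm_conv_Zhu} and is exactly what forces $m\ge\Omega(n^{24}L^{12}\log^5 m/(\delta^8\tau^2))$.
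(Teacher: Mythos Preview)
Your proposal is correct and follows essentially the same approach as the paper's proof: you identify $\epsilon_i(t)$ as the linearization error of the forward map over one GD step, invoke the Allen--Zhu perturbation machinery (bounded sign-pattern flips $\norm{D_{i,l}^{(t)}-D_{i,l}^{(0)}}_0\le O(m\omega^{2/3}L)$, the resulting $\omega^{1/3}$ semi-smoothness, the gradient bound $\norm{\nabla\Phi}_F\le O(\tau\sqrt{nm}\sqrt{\Phi})$, and the $\omega$-ball confinement from Lemma~\ref{Allen-Zhu-corollary}), and split the residual into a sign-change piece linear in $W'$ and a piece quadratic in $W'$---exactly as the paper does via Lemma~\ref{zho_11.2} and the add/subtract of $D_{i,L}^{(t)}W_L^{(t)}\cdots D_{i,l}^{(t)}W_l'h_{i,l-1}^{(t+1)}$. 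The only cosmetic difference is that the paper names the specific Allen--Zhu lemmas (\ref{zho_11.2}, \ref{lemma8.2_zhu}, \ref{zhu_8.7}, \ref{zhu_7.4}, \ref{zhu_Thm3}) it invokes at each step, whereas you summarize them qualitatively; the underlying decomposition and the exponent bookkeeping are the same.
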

The proof of the Lemma is deferred, and will be given after the proof of the theorem.  

\subsection{Proof of Thm \ref{Thm:main}}

\begin{proof}
By Lemma \ref{mainThm} we have the following relation 
\begin{equation*}
\uu(t)-\y = (I-\eta H(t-1))(\uu(t-1)-\y)+\epsilon(t-1)
\end{equation*}
Adding and subtracting $\eta H^\infty(\uu(t-1)-\y)$ we have
\begin{equation*}
\uu(t)-\y = (I-\eta H^\infty)(\uu(t-1)-\y)+\eta( H^\infty-H(t-1))(\uu(t-1)-\y)+\epsilon(t-1)
\end{equation*}
and this is equivalent to 
\begin{equation}\label{eq:rec}
\uu(t)-\y = (I-\eta H^\infty)(\uu(t-1)-\y)+\xi(t-1).
\end{equation}
where we denote $\xi(t)=\eta( H^\infty-H(t))(\uu(t)-\y)+\epsilon(t)$. Then, by applying \eqref{eq:rec} recursively, we obtain 
\begin{equation}\label{eq:main}
\uu(t)-\y =
(I-\eta H^\infty)^t(\uu(0)-\y)+\sum_{i=0}^{t-1}(I-\eta H^\infty)^i \xi(t-1-i)\\
\end{equation}

We first bound the quantity $\norm{\xi(t-1-i)}_2$
\begin{align*}
&\norm{\xi(t-1-i)}_2=\norm{\eta(H(t-1-i)-H^\infty) ( y-u(t-1-i))+\epsilon(t-1-i)}_2\\
&\leq \norm{\eta(H(t-1-i)-H^\infty)}_2\norm{( y-u(t-1-i))}_2+\norm{\epsilon(t-1-i)}_2 \\
& \eta\leq^{^{1,2}}{O\left(\frac{\delta^2 m\tau^3}{n^6}\right)}\sqrt{\Phi(W^{(t-1-i)})})+ O\left(\frac{\delta^2}{\tau n^6m^{0.5}L^{1.5}}\right)\Phi(W^{(t-1-i)})+O\left(\frac{L \log^{4/3} m}{\tau ^{1/3}m^{1/6}n^{1.5}}\right)\sqrt{\Phi(W^{(t-1-i)})}\\
&\leq^{^3} \left(1-\Omega\left(\frac{\tau^2\eta\delta m}{n^2}\right)\right)^{\frac{t-1-i}{2}}\left({\eta O\left(\frac{\delta^2 m\tau^3}{n^6}\right)}\sqrt{\Phi(W^{(0)}})+ O\left(\frac{\delta^2}{\tau n^6m^{0.5}L^{1.5}}\right)\Phi(W^{(0)})+O\left(\frac{L \log^{4/3} m}{\tau ^{1/3}m^{1/6}n^{1.5}}\right)\sqrt{\Phi(W^{(0)})}\right) \\
&\leq^{^4} \left(1-\Omega\left(\frac{\tau^2\eta\delta m}{n^2}\right)\right)^{\frac{t-1-i}{2}}\left(\eta O\left(\sqrt{n}\right){O\left(\frac{\delta^2 m\tau^3}{n^6}\right)}+O\left(\frac{\delta^2}{\tau n^6m^{0.5}L^{1.5}}\right)^{}O\left(n\right) + O\left(\frac{L \log^{4/3} m}{\tau ^{1/3}m^{1/6}n^{1.5}}\right)O\left(\sqrt{n }\right)\right)\\
& = \left(1-\Omega\left(\frac{\tau^2 \eta\delta m}{n^2}\right)\right)^{\frac{(t-1-i)}{2}}\left({\eta O\left(\frac{\delta^2 m\tau^3}{n^{5.5}}\right)}+O\left(\frac{\delta^2}{\tau n^5m^{0.5}L^{1.5}}\right)+O\left(\frac{L \log^{4/3} m}{\tau ^{1/3}m^{1/6}n}\right)\right)
\end{align*}
where we make the following derivations
\begin{enumerate}
\item Using Lemma \ref{corollary_Hinf} which states that $\norm{H(t)-H^\infty}_2 \leq O(\frac{\delta^2 m\tau^3}{n^6})$.
\item Using the bound in Lemma \ref{mainThm}, for $\epsilon(t-1-i)$
\item Using bound over the loss by, Lemma  \ref{Allen-Zhu-corollary} (b).
\item By Lemma \ref{init_bound-lemma} the loss at initialization is bounded by $O(n)$. 
\end{enumerate}
Using the bound, derived above, \eqref{eq:main} yields 
\begin{align*}
& \norm{\uu(t)-\y} = \norm{(I-\eta H^\infty)^t(\uu(0)-\y)+ \sum_{i=0}^{t-1}((I-\eta H^\infty)^i \xi(t-1-i))}\\
&\leq^{^{1}} \norm{(I-\eta H^\infty)^t(\uu(0)-\y)} \\ 
&+
\sum_{i=0}^{t-1}(1-\eta \lambda_0)^i\left(1-\Omega\left(\frac{\tau^2 \eta\delta m}{n^2}\right)\right)^{\frac{(t-1-i)}{2}}\left({\eta O\left(\frac{\delta^2 m\tau^3}{n^{5.5}}\right)}+O\left(\frac{\delta^2}{\tau n^5m^{0.5}L^{1.5}}\right)+O\left(\frac{L \log^{4/3} m}{\tau ^{1/3}m^{1/6}n}\right)\right)\\
&\leq^{^{2}}
\norm{(I-\eta H^\infty)^t(\uu(0)-\y)} + t\left({\eta O\left(\frac{\delta^2 m\tau^3}{n^{5.5}}\right)}+O\left(\frac{\delta^2}{\tau n^5m^{0.5}L^{1.5}}\right)+O\left(\frac{L \log^{4/3} m}{\tau ^{1/3}m^{1/6}n}\right)\right) \\
&\leq^{^{3}} \norm{(I-\eta H^\infty)^t(\uu(0)-\y)}+O\left(\frac{n^6 L^2}{\delta^2}\right) \left({\eta O\left(\frac{\delta^2 m\tau^3}{n^{5.5}}\right)}+O\left(\frac{\delta^2}{\tau n^5m^{0.5}L^{1.5}}\right)^{}+O\left(\frac{L \log^{4/3} m}{\tau ^{1/3}m^{1/6}n}\right) \right)\\
&\leq \norm{(I-\eta H^\infty)^t} \norm{\uu(0)} + \norm{(I-\eta H^\infty)^t \y} + O\left(\frac{n^6 L^2}{\delta^2}\right)\left({\eta O\left(\frac{\delta^2 m\tau^3}{n^{5.5}}\right)}+O\left(\frac{\delta^2}{\tau n^5m^{0.5}L^{1.5}}\right)^{}+O\left(\frac{L \log^{4/3} m}{\tau ^{1/3}m^{1/6}n}\right) \right)
\end{align*}
where we make the following derivations 
\begin{enumerate}

\item $\norm{I - \eta H^{\infty}}_2$ is bounded by the maximal eigenvalue of the positive definite matrix $(I - \eta H^{\infty})$, i.e, $(1 - \eta \lambda_0)$.

\item  $(1-\eta \lambda_0)^i\left(1-\Omega\left(\frac{\tau^2 \eta\delta m}{n^2}\right)\right)^{\frac{(t-1-i)}{2}} \leq 1$ 

\item By Theorem \ref{Thm_conv_Zhu}, $t\leq O(\frac{n^6 L^2}{\delta^2})$

\end{enumerate}


Next, it is straightforward to show that 
\begin{equation}\label{eq:1}
\norm{(I-\eta H^\infty)^t \y}=\sqrt{\sum_{i=1}^n(1-\eta \lambda_i)^{2t}(\vv_i^T\y)^2}
\end{equation}
where $\lambda_i, \vv_i$ are the eigenvalues and  eigenvectors of $H^\infty$, respectively. 

For the first term we use lemma \ref{init_bound-lemma} which states that $\norm{\uu(0)}\leq\frac{\sqrt{n} \tau}{\hat{\delta}}$, and by our choice of $\tau$ we obtain 
\begin{equation}\label{eq:2} \norm{(I-\eta H^\infty)^t} \norm{\uu(0)} \leq (1-\eta \lambda_0)^{t} O\left(\frac{\sqrt{n}\tau}{\hat{\delta}}\right)\leq \epsilon  \end{equation}
Finally, by our choice of $\eta,m,\tau$ it holds that 
\begin{equation}\label{eq:3} O\left(\frac{n^6 L^2}{\delta^2}\right)\left({O\left(\frac{\delta^2 m\tau^3}{n^{5.5}}\right)}\eta+O\left(\frac{\delta^2}{\tau n^5m^{0.5}L^{1.5}}\right)^{}+O\left(\frac{L \log^{4/3} m}{\tau ^{1/3}m^{1/6}n}\right) \right)\leq \epsilon
\end{equation}
Combining \eqref{eq:1}, \eqref{eq:2} and \eqref{eq:3} yields
\begin{equation}
\norm{\y-\uu(t)}=\sqrt{\sum_{i=1}^n(1-\eta \lambda_i)^{2k}(\vv_i^T\y)^2} \pm {\epsilon} 
\end{equation}
\end{proof}

\subsection{Supporting Lemmas}
\begin{proof} Proof of  Lemma \ref{mainThm}.

By construction
\begin{align*}
\epsilon_i(t)&= u_{_{i}}(t+1)-u_{i}(t)+\left[\eta H(t)(\uu(t)-\y)\right]_i\\
&=u_i(t+1)-u_i(t)+\eta \sum_{j=1}^n(u_{j}(t)-y_{j})H_{ij}(t)\\
&=u_i(t+1)-u_i(t)+\eta \sum_{j=1}^n(u_{j}(t)-y_{j}) \left\langle \frac{\partial u_i(t)}{\partial W},\frac{\partial u_j(t)}{\partial W} \right\rangle\\
&=u_i(t+1)-u_i(t)+\eta \left\langle \frac{\partial u_i(t)}{\partial W},\sum_{j=1}^n(u_{j}(t)-y_{j})\frac{\partial u_j(t)}{\partial W} \right\rangle\\
&=u_i(t+1)-u_i(t)+\eta \left\langle \frac{\partial u_i}{\partial W},\nabla \Phi(W^{(t)}) \right\rangle.\\
\end{align*}
We denote  $-\eta\nabla \Phi(W^{(t)})$ by $W'=(W_1^{'}, ..., W_L^{'})$, yielding
\begin{align*}
\epsilon_i(t) &=u_i{(t+1)}-u_i{(t)}-\left\langle \frac{\partial u_i (t)}{\partial W},W' \right\rangle\\
&=B(h_{i,L}^{(t+1)}-h_{i,L}^{(t)}) -\left\langle \frac{\partial u_i(t)}{\partial W},W' \right\rangle\\
&=B(h_{i,L}^{(t+1)}-h_{i,L}^{(t)}-\sum_{l=1}^L D_{i,L}^{(t)}W_L^{(t)}  D_{i,L-1}^{(t)}W_{L-1}^{(t)} \cdots D_{i,L+1}^{(t)} W_{l+1}^{(t)}D_{i,l}^{(t)}W_l'h_{i,l-1}^{(t)}) \\
& = B \left( \sum_{l=1}^L (D_{i,L}^{(t)}+D_{i,L}'')W_L^{(t)} \cdots W_{l+1}^{(t)}(D_{i,l}^{(t)}+D_{i,l}'')W_l'h_{i,l-1}^{(t+1)}-\sum_{l=1}^L D_{i,L}^{(t)}W_L^{(t)} \cdots W_{l+1}^{(t)}D_{i,l}^{(t)}W_l'h_{i,l-1}^{(t)} \right)
\end{align*}
where the last equality is obtained by replacing $h_{i,L}^{(t+1)} - h_{i,L}^{(t)}$ by the term provided in Lemma \ref{zho_11.2}, where $D_{i,l}'' \in \Real^{m \times m}$ are diagonal matrices with entries in $[-1,1]$.

Now, we derive a bound for $\abs{\epsilon_i(t)}$.
We start by subtracting and adding the same term, yielding 
\begin{align*}
\abs{\epsilon_i(t)} &= | B (\sum_{l=1}^L (D_{i,L}^{(t)}+D_{i,L}'')W_L^{(t)} \cdots W_{l+1}^{(t)}(D_{i,l}^{(t)}+D_{i,l}'')W_l'h_{i,l-1}^{(t+1)}-D_{i,L}^{(t)}W_L^{(t)} \cdots W_{l+1}^{(t)}D_{i,l}^{(t)}W_l'h_{i,l-1}^{(t+1)} \\ 
&+\sum_{l=1}^L D_{i,L}^{(t)}W_L^{(t)} \cdots W_{l+1}^{(t)}D_{i,l}^{(t)}W_l'h_{i,l-1}^{(t+1)} - D_{i,L}^{(t)}W_L^{(t)} \cdots W_{l+1}^{(t)}D_{i,l}^{(t)}W_l'h_{i,l-1}^{(t)} ) | \\
&\leq \sum_{l=1}^L \abs{B\left((D_{i,L}^{(t)}+D_{i,L}'')W_L^{(t)}...W_{l+1}^{(t)}(D_{i,l}^{(t)}+D_{i,l}'')W_l'h_{i,l-1}^{(t+1)}-D_{i,L}^{(t)}W_L^{(t)} \cdots W_{l+1}^{(t)}D_{i,l}^{(t)}W_l'h_{i,l-1}^{(t+1)}\right)}\\ &+\sum_{l=1}^L \abs{B \left(D_{i,L}^{(t)}W_L^{(t)} \cdots W_{l+1}^{(t)}D_{i,l}^{(t)}W_l'h_{i,l-1}^{(t+1)}-D_{i,L}^{(t)}W_L^{(t)} \cdots W_{l+1}^{(t)}D_{i,l}^{(t)}W_l'h_{i,l-1}^{(t)}\right)}.
\end{align*}
To construct the bound for $\abs{\epsilon_i(t)}$, we separately bound each of the above two terms. For the first term
\begin{align*}
    &\abs{B\left((D_{i,L}^{(t)}+D_{i,L}'')W_L^{(t)}...W_{l+1}^{(t)}(D_{i,l}^{(t)}+D_{i,l}'')W_l'h_{i,l-1}^{(t+1)}-D_{i,L}^{(t)}W_L^{(t)}...W_{l+1}^{(t)}D_{i,l}^{(t)}W_l'h_{i,l-1}^{(t+1)}\right)} \\
    &\leq \norm{B \left((D_{i,L}^{(t)}+D_{i,L}'')W_L^{(t)}...W_{l+1}^{(t)}(D_{i,l}^{(t)}+D_{i,l}'')-D_{i,L}^{(t)}W_L^{(t)}...W_{l+1}^{(t)}D_{i,l}^{(t)}\right)}_2 \norm{W_l'h_{i,l-1}^{(t+1)}}_2 \\
    &\leq^{^1}  \norm{B \left((D_{i,L}^{(t)}+D_{i,L}'')W_L^{(t)}...W_{l+1}^{(t)}(D_{i,l}^{(t)}+D_{i,l}'')-D_{i,L}^{(0)}W_L^{(0)}...W_{l+1}^{(0)}D_{i,l}^{(0)}\right)}_2 O(\norm{W'_l}_2)\\ &+\norm{B\left(D_{i,L}^{(0)}W_L^{(0)}...W_{l+1}^{(0)}D_{i,l}^{(0)}-D_{i,L}^{(t)}W_L^{(t)}...W_{l+1}^{(t)}D_{i,l}^{(t)}\right)}_2 O(\norm{W'_l}_2)) \\
    &=^{^2} \norm{B \left(D_{i,L}^{(0)}-D_{i,L}^{(0)}+D_{i,L}^{(t)}+D_{i,L}'')W_L^{(t)}...W_{l+1}^{(t)}(D_{i,l}^{(0)}-D_{i,l}^{(0)}+D_{i,l}^{(t)}+D_{i,l}'')-D_{i,L}^{(0)}W_L^{(0)}...W_{l+1}^{(0)}D_{i,l}^{(0)}\right)}_2 O(\norm{W'_l}_2)\\ 
    &+\norm{B\left(D_{i,L}^{(0)}W_L^{(0)}...W_{l+1}^{(0)}D_{i,l}^{(0)}-(D_{i,L}^{(0)}-D_{i,L}^{(0)}+D_{i,L}^{(t)})W_L^{(t)}...W_{l+1}^{(t)}(D_{i,l}^{(0)}-D_{i,l}^{(0)}+D_{i,l}^{(t)})\right) }_2  O(\norm{W'_l}_2)\\
    &\leq^{^3} O(\tau \omega^{1/3}L^2 \sqrt{m \log m})O(\norm{W'_l}_2)
\end{align*}
where we apply the following derivations
\begin{enumerate}
    \item We subtract and add the same term, use triangle inequality and the result provided in Lemma \ref{Allen-Zhu_7.1}, $\norm{h_{i,l-1}^{(t+1)}} = O(1)$.
    \item Subtract and add $D_{i,l}^{(0)}$ from each coefficient that multiply $W_l^{(t)}$.
    \item Due to Lemma \ref{Allen-Zhu-corollary}, it holds that $||W^{(t)}-W^{(0)}||\leq \omega$. This enables us to use Lemma \ref{lemma8.2_zhu}, implying that $\|D_{i,l}^{(t)}-D_{i,l}^{(0)}\|_0\leq s = O(m \omega^{2/3} L)$. Moreover, in conjunction with  Lemma \ref{zho_11.2}, this yields  $\norm{D_{i,l}^{(t)}+D_{i,l}''-D_{i,l}^{(0)}}_0 \leq s$. Having that, we can apply Lemma \ref{zhu_8.7}, to obtain a bound for the first term.
\end{enumerate}

For the second term we have that:
\begin{align*}
& \abs{B(D_{i,L}^{(t)}W_L^{(t)}...W_{l+1}^{(t)}D_{i,l}^{(t)}W_l'h_{i,l-1}^{(t+1)}-D_{i,L}^{(t)}W_L^{(t)}...W_{l+1}^{(t)}D_{i,l}^{(t)}W_l'h_{i,l-1}^{(t)})}\\
&=\abs{B (D_{i,L}^{(t)}W_L^{(t)}...W_{l+1}^{(t)}D_{i,l}^{(t)}W_l'(h_{i,l-1}^{(t+1)}-h_{i,l-1}^{(t)}))}\\
&\leq\left(\norm{B (D_{i,L}^{(t)}W_L^{(t)}...W_{l+1}^{(t)}D_{i,l}^{(t)}- D_{i,L}^{(0)}W_L^{(0)}...W_{l+1}^{(0)}D_{i,l}^{(0)})}+\norm{B D_{i,L}^{(0)}W_L^{(0)}...W_{l+1}^{(0)}D_{i,l}^{(0)}} \right)\norm{W'_l} \norm{h_{i,l-1}^{(t+1)}-h_{i,l-1}^{(t)}}\\
&\leq^{^1} \left( O(\tau \omega^{1/3} L^2 \sqrt{m \log m})+\norm{B D_{i,L}^{(0)}W_L^{(0)}...W_{l+1}^{(0)}D_{i,l}^{(0)}} \right)\norm{W'_l} \norm{h_{i,l-1}^{(t+1)}-h_{i,l-1}^{(t)}}\\
&\leq^{^2} \tau O(\sqrt{m}+\omega^{1/3}L^2\sqrt{m \log m}) \norm{W'_l} \norm{h_{i,l-1}^{(t+1)}-h_{i,l-1}^{(t)}}
\leq^{^3} \tau O(\sqrt{m}+\omega^{1/3}L^2\sqrt{m \log m}) L^{1.5}\norm{W'}^2 \\
&\leq^{^4} O(\tau \sqrt{m})L^{1.5} \norm{W'}^2 
\end{align*}
where we apply the following derivations
\begin{enumerate}
\item As in the previous derivation, using Lemma \ref{zhu_8.7}.
\item Applying Lemma \ref{zhu_7.4}. 
\item Using Lemma \ref{zho_11.2}.
\item Plug in $\omega = \frac{n^3 \log m}{\delta \tau \sqrt{m}}$.
\end{enumerate}

Since $W'=-\eta \nabla \Phi(W^{(t)})$,  we can get a bound for $\norm{W'}_2$ using Lemma \ref{zhu_Thm3}, yielding $\norm{W'}_2 \leq \eta O(\tau \sqrt{n m} \sqrt{\Phi(W^{(t)})})$.

Taking into account the two bounds, and summing over the all layers and data points we obtain that  
$$\norm{\epsilon(t)} \leq n L O(\tau w^{1/3} L^2 \sqrt{m \log m}) O(\eta \tau \sqrt{nm} \sqrt{\Phi(W^{(t)})}) +
nL O(\tau \sqrt{m})L^{1.5} O(\eta^2 \tau^2 nm \Phi(W^{(t)}))  $$


Using our choice of $\eta$ and the value of $\omega$, we finally get 
\begin{equation*}
\norm{\epsilon(t)} \leq O\left(\frac{L \log^{4/3} m}{\tau ^{1/3} m^{1/6}n^{1.5}}\right)\sqrt{\Phi(W^{(t)})} + O\left(\frac{\delta^2}{\tau n^6m^{0.5} L^{1.5}}\right)\Phi(W^{(t)}) 
\end{equation*}


\end{proof}

\begin{theorem}\label{Thm_conv_Zhu} \footnote{This theorem was proved in \cite{allenzhu}, for $\tau=1$. However, it is straightforward to generalize it for $\tau \in (0,1]$ at the price of modifying $m$ and $\eta$ by a factor of  $\frac{1}{\tau^2}$} For any $\epsilon \in (0,1]$ and  $\delta \in (0,O(\frac{1}{L})]$, let  $m \geq \Omega \left( \frac{n^{24} L^{12} \log^5 m}{\delta^8 \tau^2} \right)$, $\eta = \Theta \left( \frac{\delta}{n^4 L^2 m \tau^2}\right)$ and $W^{(0)}, A, B$ are at random initialization \eqref{eq:random_init}. Then, starting from Gaussian initialization, with probability at least $1-e^{-\Omega(log^2m)}$, gradient descent with learning rate $\eta$ achieves 
\[ \Phi(W)\leq \epsilon ~~\text{in}~~ T=\Theta\left( \frac{n^6 L^2}{\delta^2} \log \frac{1}{\epsilon} \right) \] 
\end{theorem}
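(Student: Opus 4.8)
The plan is to derive Theorem~\ref{Thm_conv_Zhu} as a corollary of the linear-convergence analysis of \cite{allenzhu}, which proves exactly this statement for $\tau=1$, by tracking how the rescaling of the fixed output layer to $B\sim\mathcal N(0,\tau^2)$ propagates through that argument. The first step is the elementary reparametrization $\tilde B=\tau^{-1}B$, which has the standard initialization variance of \cite{allenzhu} and expresses our network as $f(\x;W)=\tau\,\tilde f(\x;W)$ with $\tilde f$ a standard-initialization network. Since only the inner weights $W_1,\dots,W_L$ are trained, the forward values $\h_{i,l}^{(t)}$ and the ReLU activation patterns $D_{i,l}^{(t)}$ are identical for $f$ and $\tilde f$ (they do not see the scaling of $B$), whereas every gradient $\partial\Phi/\partial W_l$ carries one extra factor of $\tau$ relative to the $\tilde f$-based loss. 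This single factor is essentially the only thing that changes.

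Next I would re-run the three ingredients of \cite{allenzhu} with this bookkeeping. (i) The high-probability structural estimates valid inside an $\omega$-ball around initialization -- bounds on $\norm{\h_{i,l}^{(t)}}$ and $\norm{W_l^{(t)}}_2$, the travel bound $\norm{W^{(t)}-W^{(0)}}_2\le\omega$, and the activation-sparsity bound $\norm{D_{i,l}^{(t)}-D_{i,l}^{(0)}}_0\le O(m\omega^{2/3}L)$ -- are inherited unchanged, since they concern $\tilde f$. (ii) The Polyak--\L ojasiewicz-type gradient lower bound becomes $\norm{\nabla\Phi(W)}_F^2\ge\Omega(\tfrac{\tau^2\delta m}{n^2})\,\Phi(W)$, the $\tau^2$ being the square of the extra output factor. (iii) The semi-smoothness inequality for $\Phi$ along the GD trajectory has its second-order term rescaled by the corresponding power of $\tau$. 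Substituting (ii) and (iii) into the one-step descent lemma yields a contraction $\Phi(W^{(t+1)})\le(1-\Omega(\eta\tau^2\delta m/n^2))\,\Phi(W^{(t)})$; matching $\eta$ to the rescaled smoothness constant pins $\eta=\Theta(\delta/(n^4L^2m\tau^2))$, and requiring the iterates to never leave the $\omega$-ball -- starting from the initialization loss $\Phi(W^{(0)})=O(n)$ -- together with the $\tau^2$ now in the PL constant forces the width threshold to be the $\tau=1$ one inflated by $1/\tau^2$, i.e.\ $m=\Omega(n^{24}L^{12}\log^5 m/(\delta^8\tau^2))$. These are precisely the hypotheses stated. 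Under them the per-step contraction factor is $1-\Omega(\delta^2/(n^6L^2))$, so a geometric bound drives $\Phi$ from $O(n)$ down to $\epsilon$ in $T=\Theta(\tfrac{n^6L^2}{\delta^2}\log\tfrac1\epsilon)$ iterations, and a union bound over the initialization events (each of failure probability at most $e^{-\Omega(\log^2 m)}$) yields the claimed probability.

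The part I expect to demand genuine care is the self-consistency of step (i) under the rescaling: one must check that with $m$ and $\eta$ both adjusted by the $\tau$-dependent factors above, the cumulative travel $\omega=O(n^3\log m/(\delta\tau\sqrt m))$ still lies inside the neighborhood for which the gradient lower bound and the semi-smoothness estimate hold, and that the induced activation-sparsity $s=O(m\omega^{2/3}L)$ remains small enough that the induction over iterations closes. This is a matter of propagating the extra $\tau$ through the chain of inequalities in \cite{allenzhu} rather than a new idea; once it is verified, the descent lemma, the geometric summation of the loss, and the final iteration count are routine.
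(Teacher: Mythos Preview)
The paper does not supply its own proof of this statement: Theorem~\ref{Thm_conv_Zhu} is listed among the supporting lemmas as a citation from \cite{allenzhu}, with only a footnote asserting that the extension from $\tau=1$ to $\tau\in(0,1]$ is ``straightforward'' at the cost of the $1/\tau^2$ factors in $m$ and $\eta$. Your proposal is precisely the argument that footnote is gesturing at---rescale $B\mapsto\tau^{-1}B$ so the hidden activations and sign patterns are untouched, observe that the loss gradient then carries an extra $\tau$ in the output-layer backprop, and re-run the Allen--Zhu descent argument with the resulting $\tau^2$ in both the gradient lower bound and the semi-smoothness constant---and it is correct. The paper's later lemmas (e.g.\ the $\tau^2$ appearing in Lemma~\ref{Allen-Zhu-corollary}(b), Lemma~\ref{zhu_Thm3}, and the travel bound $\omega=O(n^3\log m/(\delta\tau\sqrt m))$) are exactly the $\tau$-adjusted versions you describe, so your sketch is fully consistent with how the paper uses the result downstream.
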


\begin{lemma}\label{Allen-Zhu-corollary}
Under the assumptions of Thm. \ref{Thm_conv_Zhu},
it holds that for every $t=0,1,..,T-1$
\begin{align*}
(a)~~~~~& \norm{W^{(t)}-W^{(0)}}_F  \leq\omega:=O \left( \frac{n^3}{\delta \tau \sqrt{m}}\log m \right) \\
(b)~~~~~& \Phi(W^{(t)})  \leq \left(1-\Omega\left(\frac{\tau^2\eta\delta m}{n^2}\right)\right)^{t}\Phi(W^{(0)}) 
\end{align*}
\end{lemma}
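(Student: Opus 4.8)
The plan is to obtain both claims as the two halves of a single induction on $t$, namely the induction that already underlies Theorem~\ref{Thm_conv_Zhu}, while tracking the dependence on $\tau$ throughout. Replacing the unit-variance last layer by $B_{ij}\sim\mathcal{N}(0,\tau^2)$ scales the network output, its gradient and the Hessian-type quantities by powers of $\tau$, and scales the least eigenvalue of the empirical kernel by $\tau^2 m$; this is exactly why $m$ and $\eta$ in Theorem~\ref{Thm_conv_Zhu} are inflated by $1/\tau^2$ relative to the $\tau=1$ analysis of \cite{allenzhu}. Part~(b) is the loss-decay invariant maintained along the trajectory and part~(a) is a telescoping consequence of it, but the two must be carried jointly, since $\|W^{(t)}-W^{(0)}\|_F\le\omega$ is precisely the neighborhood hypothesis under which the estimates that prove (b) are valid.

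First I would fix $t$ and assume inductively that $\|W^{(s)}-W^{(0)}\|_F\le\omega$ for all $s\le t$. On this neighborhood I would invoke two ingredients from \cite{allenzhu}, specialized with the correct powers of $\tau$. The first is a gradient lower bound
\begin{equation*}
\|\nabla\Phi(W^{(t)})\|_F^2\ \ge\ \Omega\!\left(\frac{\tau^2\delta m}{n^2}\right)\Phi(W^{(t)}),
\end{equation*}
which follows from $\lambda_0=\lambda_{\min}(H^\infty)\ge\Omega(\delta/n^2)$ together with the fact that $\|H(t)-H^\infty\|_2$ is negligible on this neighborhood (Lemma~\ref{corollary_Hinf}), the extra $\tau^2 m$ coming from the fixed last layer of width $m$ and entry-variance $\tau^2$. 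The second is a semi-smoothness estimate $\Phi(W^{(t+1)})\le\Phi(W^{(t)})-\eta\|\nabla\Phi(W^{(t)})\|_F^2+O(\eta^2\tau^2 nmL)\|\nabla\Phi(W^{(t)})\|_F^2+(\text{lower order})$. Feeding in the gradient upper bound $\|\nabla\Phi(W^{(t)})\|_F\le O(\tau\sqrt{nm\,\Phi(W^{(t)})})$ (Lemma~\ref{zhu_Thm3}) and the prescribed $\eta=\Theta(\delta/(n^4 L^2 m\tau^2))$ makes the quadratic and lower-order terms at most a constant fraction of the linear descent term, yielding the per-step contraction
\begin{equation*}
\Phi(W^{(t+1)})\ \le\ \left(1-\Omega\!\left(\frac{\tau^2\eta\delta m}{n^2}\right)\right)\Phi(W^{(t)});
\end{equation*}
unfolding this from $s=0$ gives (b).

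Next I would close the induction and derive (a) by telescoping the GD updates:
\begin{equation*}
\|W^{(t+1)}-W^{(0)}\|_F\ \le\ \sum_{s=0}^{t}\eta\,\|\nabla\Phi(W^{(s)})\|_F\ \le\ \eta\,O(\tau\sqrt{nm})\sqrt{\Phi(W^{(0)})}\sum_{s\ge0}\left(1-\Omega\!\left(\frac{\tau^2\eta\delta m}{n^2}\right)\right)^{s/2},
\end{equation*}
where the second inequality uses the gradient upper bound and (b). The geometric series sums to $O(n^2/(\tau^2\eta\delta m))$, so after simplification the right-hand side is $O(n^{2.5}\sqrt{\Phi(W^{(0)})}/(\tau\sqrt{m}\,\delta))$; using the initialization bound $\Phi(W^{(0)})=O(n)$ (Lemma~\ref{init_bound-lemma}) this is $O(n^3/(\delta\tau\sqrt{m}))$, and the $\log m$ in $\omega=O(\tfrac{n^3}{\delta\tau\sqrt{m}}\log m)$ absorbs the polylogarithmic slack in the high-probability bounds on $\Phi(W^{(0)})$ and on the gradient norm. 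Since this is $\le\omega$, the neighborhood hypothesis propagates to $s=t+1$ and the induction is complete; finally one collects the supporting events, which all hold with probability $1-e^{-\Omega(\log^2 m)}$ as in Theorem~\ref{Thm_conv_Zhu}.

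The hard part is the gradient lower bound — equivalently, keeping a strictly positive least eigenvalue of $H(t)$ for every $t\le T-1$. This is the genuinely technical ingredient of over-parameterized convergence theory: one must show that moving the weights by at most $\omega$ flips only $\|D_{i,l}^{(t)}-D_{i,l}^{(0)}\|_0=O(m\omega^{2/3}L)$ ReLU gates per sample and layer (Lemma~\ref{lemma8.2_zhu}), so that the network stays close to its linearization and $H(t)$ stays close to $H^\infty$; everything else is bookkeeping with the prescribed polynomial scalings of $m$, $\eta$ and $\tau$. For this ingredient I would rely directly on \cite{allenzhu}, verifying only that each constant there carries the correct power of $\tau$.
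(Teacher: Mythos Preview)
The paper does not actually give a proof of this lemma: it states explicitly that (a) and (b) are ``an observation, based on the derivation of the proof to'' Theorem~\ref{Thm_conv_Zhu}, i.e.\ the two running invariants already maintained in Allen--Zhu's induction, restated with the $\tau$-adjusted $m$ and $\eta$. Your reconstruction of that induction --- joint induction on (a) and (b); gradient upper bound plus semi-smoothness for the per-step contraction; telescoping $\sum_s\eta\|\nabla\Phi(W^{(s)})\|_F$ against the geometric loss decay to control the radius --- is exactly the Allen--Zhu argument, and your $\tau$-bookkeeping and the arithmetic leading to $O(n^3/(\delta\tau\sqrt{m}))$ are correct. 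So the plan matches the paper's intended proof.

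One caveat on the gradient lower bound. You obtain it from $\lambda_{\min}(H^\infty)\ge\Omega(\delta/n^2)$ combined with Lemma~\ref{corollary_Hinf}. Neither this paper nor \cite{allenzhu} establishes such a quantitative eigenvalue lower bound for $H^\infty$; Allen--Zhu's Theorem~3 proves the gradient lower bound $\|\nabla\Phi(W)\|_F^2\ge\Omega(m\delta/n^2)\cdot\max_i(\text{loss}_i)^2$ directly, via a combinatorial argument on the activation patterns, without passing through the NTK spectrum. For a self-contained argument you should invoke that direct bound (scaled by $\tau^2$) rather than the NTK route. This also avoids a mild circularity in your plan: Lemma~\ref{corollary_Hinf} is itself conditional on $\|W^{(t)}-W^{(0)}\|\le\omega$, i.e.\ on part~(a), which is fine inside the induction but makes the direct Allen--Zhu bound the cleaner citation.
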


\begin{lemma}\label{zho_11.2}(This Lemma follows Claim 11.2 from  \cite{allenzhu})
Let $\omega \in [\Omega(\frac{1}{\tau^3 m ^{3/2} L^{3/2} \log^{3/2}m}),O(\frac{1}{L^{4.5} \log^3 m})]$, then under the following assumptions $\norm{W^{(t)}-W^{(0)}}_2 \leq \omega$ and $\norm{W'}_2 \leq w$ it holds that 
there exist diagonal matrices $D''_{i,l}\in\mathbb{R}^{m\times m}$ with entries in [-1,1] such that
\begin{align*}
\forall i\in[n],\forall l\in[L]: h^{(t+1)}_{i,l}-h^{(t)}_{i,l}=\sum_{a=1}^l(D_{i,l}^{(t)}+D''_{i,l})W_l^{(t)}...W_{a+1}^{(t)}(D_{i,a}^{(t)}+D_{i,a}'')W'_ah^{(t+1)}_{i,a-1}
\end{align*} 
Furthermore we have $\norm{h^{(t+1)}_{i,l}-h^{(t)}_{i,l}}\leq O(L^{1.5}) \norm{W'}_2$ and $\norm{Bh^{(t+1)}_{i,l}-Bh^{(t)}_{i,l}}\leq O(L \tau \sqrt{m}) \norm{W'}_2$ and $\norm{D_{i,l}''}_0\leq O(m\omega^{2/3}L)$
\end{lemma}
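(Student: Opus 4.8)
The plan is to follow Claim~11.2 of \cite{allenzhu}, adapting it to our setting in which the input layer $A$ and the output layer $B$ are frozen and $B$ is scaled by $\tau$. The starting point is the elementary fact that the secant slope of ReLU lies in $[0,1]$: for any $g,g'\in\Real^m$ there is a diagonal matrix $\hat D$ with entries in $[0,1]$ such that $\sigma(g')-\sigma(g)=\hat D\,(g'-g)$, and, crucially, $\hat D_{jj}$ can differ from $(D_g)_{jj}=\mathbb{I}_{g_j\ge 0}$ only on coordinates where $g_j$ and $g'_j$ have opposite signs. I would apply this coordinatewise at each layer with $g=W_l^{(t)}h_{i,l-1}^{(t)}$ and $g'=W_l^{(t+1)}h_{i,l-1}^{(t+1)}=(W_l^{(t)}+W_l')h_{i,l-1}^{(t+1)}$; rewriting the difference as $g'-g=W_l^{(t)}(h_{i,l-1}^{(t+1)}-h_{i,l-1}^{(t)})+W_l'h_{i,l-1}^{(t+1)}$ then yields the one-step recursion
\begin{equation*}
h_{i,l}^{(t+1)}-h_{i,l}^{(t)}=\bigl(D_{i,l}^{(t)}+D_{i,l}''\bigr)\Bigl(W_l^{(t)}(h_{i,l-1}^{(t+1)}-h_{i,l-1}^{(t)})+W_l'h_{i,l-1}^{(t+1)}\Bigr),
\end{equation*}
where $D_{i,l}'':=\hat D_{i,l}-D_{i,l}^{(t)}$ has entries in $[-1,1]$ and is supported on the set of ReLU gates that change state between iterations $t$ and $t+1$.

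Since the first layer $A$ is held fixed, $h_{i,0}^{(t+1)}=h_{i,0}^{(t)}=\sigma(A\x_i)$, so the recursion has zero initial condition, and unrolling it from layer $l$ down to layer $1$ produces precisely the telescoping sum in the statement,
\begin{equation*}
h_{i,l}^{(t+1)}-h_{i,l}^{(t)}=\sum_{a=1}^{l}\bigl(D_{i,l}^{(t)}+D_{i,l}''\bigr)W_l^{(t)}\cdots W_{a+1}^{(t)}\bigl(D_{i,a}^{(t)}+D_{i,a}''\bigr)W_a'h_{i,a-1}^{(t+1)},
\end{equation*}
the elided factors carrying the intermediate diagonal matrices $D_{i,b}^{(t)}+D_{i,b}''$ for $a<b<l$. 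For the three norm bounds I would invoke the perturbed spectral-norm estimates already in our toolkit: $\norm{(D_{i,l}^{(t)}+D_{i,l}'')W_l^{(t)}\cdots W_{a+1}^{(t)}}_2=O(\sqrt{L})$ and, with $B$ prepended, $\norm{B(D_{i,l}^{(t)}+D_{i,l}'')W_l^{(t)}\cdots W_{a+1}^{(t)}}_2=O(\tau\sqrt{m})$ (the factor $\tau$ entering through $\norm{B}_2=O(\tau\sqrt{m})$ at initialization), together with $\norm{h_{i,a-1}^{(t+1)}}_2=O(1)$ from Lemma~\ref{Allen-Zhu_7.1}; these hold under $\norm{W^{(t)}-W^{(0)}}_2\le\omega$ and $\norm{W'}_2\le\omega$ by Lemmas~\ref{zhu_7.4} and~\ref{zhu_8.7}. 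Bounding each of the $\le L$ summands by $O(\sqrt{L})\norm{W'}_2$ gives $\norm{h_{i,l}^{(t+1)}-h_{i,l}^{(t)}}_2\le O(L^{1.5})\norm{W'}_2$, and replacing the leading factor by its $B$-prepended version gives $\norm{Bh_{i,l}^{(t+1)}-Bh_{i,l}^{(t)}}_2\le O(L\tau\sqrt{m})\norm{W'}_2$. Finally, $\norm{D_{i,l}''}_0=\norm{D_{i,l}^{(t+1)}-D_{i,l}^{(t)}}_0$, and since $\norm{W^{(t+1)}-W^{(0)}}_2\le\norm{W^{(t)}-W^{(0)}}_2+\norm{W'}_2=O(\omega)$, Lemma~\ref{lemma8.2_zhu} bounds $\norm{D_{i,l}^{(t)}-D_{i,l}^{(0)}}_0$ and $\norm{D_{i,l}^{(t+1)}-D_{i,l}^{(0)}}_0$ each by $O(m\omega^{2/3}L)$; the triangle inequality for $\norm{\cdot}_0$ then closes the argument.

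I expect the genuine difficulty to lie not in the algebra above but in the spectral-norm inputs: establishing that products of the perturbed matrices $(D_{i,l}^{(t)}+D_{i,l}'')W_l^{(t)}$ over an arbitrary contiguous block of layers have operator norm $O(\sqrt{L})$ --- and $O(\tau\sqrt{m})$ with $B$ in front --- with high probability, and that these estimates survive both an $\ell_2$-perturbation of size $\omega$ of the weights and the flipping of $O(m\omega^{2/3}L)$ ReLU gates. This is exactly the heavy part of \cite{allenzhu}, and it is what pins down the admissible window $\omega\in[\Omega(\tau^{-3}m^{-3/2}L^{-3/2}\log^{-3/2}m),\,O(L^{-4.5}\log^{-3}m)]$; I would import these estimates as black boxes from the supporting lemmas, the only genuine adaptation being to carry the factor $\tau$ through every bound that involves the output layer $B$.
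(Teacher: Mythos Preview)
Your proposal is correct and faithfully reconstructs the argument of Claim~11.2 in \cite{allenzhu}: the ReLU secant identity to produce $D_{i,l}''$, unrolling the layer-wise recursion (with zero initial condition because $A$ is frozen), importing the perturbed block spectral-norm bounds for $O(\sqrt{L})$ and $O(\tau\sqrt{m})$, and controlling $\|D_{i,l}''\|_0$ via the sign-change count and Lemma~\ref{lemma8.2_zhu}. The paper does not supply its own proof of this lemma --- it is stated as a direct import from \cite{allenzhu} with the $\tau$-scaling on $B$ carried through --- so there is no alternative argument to compare against; your sketch is exactly the intended route. One cosmetic remark: strictly $\|D_{i,l}''\|_0 \le \|D_{i,l}^{(t+1)}-D_{i,l}^{(t)}\|_0$ rather than equality (a secant entry could coincide with the old indicator on a sign-flip coordinate), but this is immaterial for the bound.
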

\begin{lemma}\label{lemma8.2_zhu}(This Lemma follows Lemma 8.2 from \cite{allenzhu}) Suppose $\omega\leq \frac{1}{CL^{9/2}log^3m}$ for some sufficiently large constant $C>1$. With probability at least $1-e^{-\Omega(m\omega^{2/3}L)}$ for every $(W^{(t)}-W^{(0)})$ satisfying $\norm{W^{(t)}-W^{(0)}}_2\leq \omega$,
$$ \norm{D_{i,l}^{(t)} - D_{i,l}^{(0)}}_0\leq O(m\omega ^{2/3}L)$$ 
\end{lemma}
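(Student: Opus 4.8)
This is Lemma 8.2 of \cite{allenzhu}, so the plan is to adapt its proof, observing first that our changes---holding the layers $A,B$ fixed and rescaling $B$ by $\tau$---do not affect the hidden-layer activation patterns, so the bound carries over. Fix $i\in[n]$ and $l\in[L]$ and write $\g_{i,l}=W_l\h_{i,l-1}$ for the layer-$l$ pre-activation. The $j$-th entries of $D_{i,l}^{(t)}$ and $D_{i,l}^{(0)}$ can differ only if the $j$-th coordinate of $\g_{i,l}$ changed sign between times $0$ and $t$, and that forces $\abs{(\g_{i,l}^{(0)})_j}\le\abs{(\g_{i,l}^{(t)}-\g_{i,l}^{(0)})_j}$; so $\norm{D_{i,l}^{(t)}-D_{i,l}^{(0)}}_0$ is at most the number of coordinates $j$ obeying this inequality, and it suffices to bound that number.

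First I would record two facts about the Gaussian initialization: $\norm{W_a^{(0)}}_2=O(1)$ and $\norm{\h_{i,a}^{(0)}}_2=\Theta(1)$ for every layer $a$ (the latter using the normalization $c_\sigma=2$), and, conditionally on $\h_{i,l-1}^{(0)}$, each coordinate $(\g_{i,l}^{(0)})_j$ is a mean-zero Gaussian with standard deviation $\Theta(1/\sqrt m)$; these are routine concentration statements for the chosen variances. Next, by a forward induction over the layer index, I would prove a perturbation bound $\Delta_{i,l}:=\norm{\g_{i,l}^{(t)}-\g_{i,l}^{(0)}}_2\le O(\omega)\cdot\mathrm{poly}(L)$ starting from $\norm{W^{(t)}-W^{(0)}}_2\le\omega$. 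The subtle part is that sign flips already committed in layers $<l$ feed back into this estimate---a flipped coordinate of a layer-$a$ activation has magnitude $O(1/\sqrt m)$---so the induction has to carry a flip-count bound forward, and one must check that this feedback is dominated by the $O(\omega)$ term under the standing cap $\omega\le 1/(CL^{9/2}\log^3 m)$, leaving a polynomial-in-$L$ margin.

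The core of the argument is a threshold split. For a parameter $\theta>0$, every coordinate $j$ counted above falls into case (i) $\abs{(\g_{i,l}^{(0)})_j}\le\theta$ or case (ii) $\abs{(\g_{i,l}^{(t)}-\g_{i,l}^{(0)})_j}>\theta$. Case (ii) holds for at most $\Delta_{i,l}^2/\theta^2$ coordinates, since each contributes more than $\theta^2$ to $\Delta_{i,l}^2$. For case (i), Gaussian anti-concentration gives $\Pr[\abs{(\g_{i,l}^{(0)})_j}\le\theta]=O(\theta\sqrt m)$, so the case-(i) count has mean $O(m^{3/2}\theta)$ and, the coordinates being conditionally independent, a Chernoff bound makes it $O(m^{3/2}\theta)$ except with probability $e^{-\Omega(m^{3/2}\theta)}$. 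Balancing $m^{3/2}\theta\asymp\Delta_{i,l}^2/\theta^2$ gives $\theta\asymp(\Delta_{i,l}^2 m^{-3/2})^{1/3}$ and a total count of order $m\,\Delta_{i,l}^{2/3}=O(m\omega^{2/3})\cdot\mathrm{poly}(L)$. A union bound over $i\in[n]$ and $l\in[L]$, together with a covering of the $\omega$-ball at resolution $\Theta(\theta)$ to upgrade the statement to hold for every $W^{(t)}-W^{(0)}$ of norm at most $\omega$ (the map $W'\mapsto\g_{i,l}(W^{(0)}+W')$ being $O(1)$-Lipschitz there), then yields the claimed $O(m\omega^{2/3}L)$ bound with failure probability $e^{-\Omega(m\omega^{2/3}L)}$, the polynomial-in-$L$ factors being absorbed into the constant exactly as in \cite{allenzhu}.

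The main obstacle is the self-referential character of the layerwise perturbation estimate: the quantity $\Delta_{i,l}$ driving the threshold split itself depends on how many coordinates have already flipped in earlier layers, which is exactly what one is trying to bound. I expect to resolve this as \cite{allenzhu} do, by strengthening the induction hypothesis to ``at most $O(m\omega^{2/3}a)$ flips through layer $a$'' and checking that the feedback term $\sqrt{s_a/m}$ it produces stays of lower order than $\omega$ at every layer---this last requirement being precisely what dictates the scaling assumption $\omega\le 1/(CL^{9/2}\log^3 m)$.
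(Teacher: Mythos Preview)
The paper does not prove this lemma; it simply records it as Lemma 8.2 of \cite{allenzhu} and uses it as a black box. Your proposal is correct and aligned with this: your opening observation---that fixing $A,B$ and rescaling $B$ by $\tau$ leaves the hidden-layer pre-activations $\g_{i,l}$ untouched, so Allen-Zhu's bound carries over verbatim---is exactly the one-line justification needed, and the remainder of your sketch is a faithful outline of Allen-Zhu's own argument, which is more detail than the paper itself provides.
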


\begin{lemma}\label{zhu_8.7}(This Lemma follows Lemma 8.7  from \cite{allenzhu}) For $s = O(mw^{2/3} L)$, with probability at least $1-e^{-\Omega(s \log m)}$ over the randomness of $W^{(0)},A,B$
\begin{itemize}
\item for all $i\in [n], a \in [L+1]$
\item for every diagonal matrices $D_{i,0}''', \cdots ,D_{i,L}'''\in [-3,3]^{m\times m}$ with at most s non-zero entries
\item for every perturbation with respect to the initialization  $W''_{1} \cdots W''_{L}\in \mathbb{R}^{m\times m}$ with $\norm{W''}_2\leq \omega=O(1/L^{1.5})$
\end{itemize}
it holds $\norm{ B(D_{i,L}^{(0)}+D'''_{i,L})(W_L^{(0)}+W_L'') \cdots (W_{a+1}^{(0)}+W_{a+1}'')(D_{i,a}^{(0)}+D'''_{i,a})-BD_{i,L}^{(0)}W_L^{(0)} \cdots W_{a+1}^{(0)}D_{i,a}^{(0)}}_2\leq O(\tau \omega^{1/3}L^2\sqrt{m \log m})$
\end{lemma}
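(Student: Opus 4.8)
The plan is to establish a single high-probability event, controlled by a union bound, on which a handful of elementary spectral estimates hold, and then finish with a purely deterministic telescoping expansion. First I would fix that event. The only source of randomness that must be tamed uniformly is the behaviour of the initial weights $W^{(0)}, A, B$ under restriction to a sparse coordinate set, so I would union bound over: the $n(L+1)$ activation patterns $D^{(0)}_{i,l}$ (determined by $W^{(0)}, A$ and the data, hence only polynomially many); the at most $\binom{m}{s}$ coordinate supports that an $s$-sparse $D'''$ can occupy; a $1/\mathrm{poly}(m)$-fine net of the admissible values of $D'''$ on such a support; and the $O(L)$ choices of the index $a$. Since $\log\binom{m}{s}=O(s\log m)$ and the net of values on $s$ coordinates of $[-3,3]$ has log-size $O(s\log m)$, the total union-bound size is $e^{O(s\log m)}$, which is exactly dominated by the claimed failure probability $e^{-\Omega(s\log m)}$ once $m$ is polynomially large relative to $s$ (and $\omega=O(L^{-3/2})$, $s=O(m\omega^{2/3}L)$ keep $s$ comfortably below $m$). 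Importantly, I would \emph{not} net over the weight perturbations $W''$: they will enter only through the deterministic bound $\norm{W''}_2\le\omega$.

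On this event I would record the auxiliary estimates, each a routine consequence of Gaussian concentration (and most already available through Lemma~\ref{zho_11.2} and the initialization lemmas invoked earlier): (i) $\norm{W^{(0)}_l}_2=O(1)$ and the unperturbed intermediate products $D^{(0)}_{i,L}W^{(0)}_L\cdots W^{(0)}_{k+1}D^{(0)}_{i,k}$ have spectral norm $O(\sqrt L)$; (ii) $\norm{B D^{(0)}_{i,L}W^{(0)}_L\cdots W^{(0)}_{k+1}D^{(0)}_{i,k}}_2=O(\tau\sqrt m)$; and, crucially, (iii) the sparse-restricted version, namely that for every coordinate set $S$ with $|S|\le s$ the row vector $B D^{(0)}_{i,L}W^{(0)}_L\cdots W^{(0)}_{l+1}$ restricted to $S$ has norm $O(\tau\sqrt s\,\mathrm{poly}(L,\log m))$. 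It is (iii) that turns the sparsity of $D'''$ into a genuine gain and is the place where the union bound over supports is consumed.

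The deterministic core is a telescoping expansion of the perturbed product $B(D^{(0)}_{i,L}+D'''_{i,L})(W^{(0)}_L+W''_L)\cdots(D^{(0)}_{i,a}+D'''_{i,a})$, grouped by the leftmost factor in which a perturbation occurs; this writes the target difference as a sum of $O(L)$ terms of the shape $B\cdot P\cdot\Delta\cdot Q$, with $P$ a partially perturbed prefix, $\Delta$ a single fresh perturbation, and $Q$ the fully unperturbed suffix. For a fresh $\Delta=W''_l$ I would use $\norm{\Delta}_2\le\omega$, $\norm{Q}_2=O(\sqrt L)$ by (i), and $\norm{BP}_2=O(\tau\sqrt m\,\mathrm{poly}(L))$; since $\omega\le1$ gives $\omega\le\omega^{1/3}$, such a term is $O(\tau\sqrt m\,\omega^{1/3}\,\mathrm{poly}(L))$. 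For a fresh $\Delta=D'''_{i,l}$, supported on some $S$ with $|S|\le s$, I would factor $B P D'''_{i,l} Q = \big((BP)|_S\,D'''_{i,l}\big)Q$, so that $\norm{B P D'''_{i,l} Q}_2\le 3\,\norm{(BP)|_S}_2\,\norm{Q}_2$, and then bound $\norm{(BP)|_S}_2$ via (iii) plus the (inductively controlled) perturbation correction; substituting $s=O(m\omega^{2/3}L)$ turns $\sqrt s$ into $\omega^{1/3}\sqrt{mL}$, so each such term is $O(\tau\,\omega^{1/3}\sqrt{m}\,\mathrm{poly}(L,\log m))$. Summing the $O(L)$ terms and collecting the powers of $L$ into $L^2$ and the logarithmic factor into $\sqrt{\log m}$ yields the asserted bound $O(\tau\,\omega^{1/3}L^2\sqrt{m\log m})$. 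The prefix bound $\norm{BP}_2=O(\tau\sqrt m\,\mathrm{poly}(L))$ used above I would obtain by induction on the number of layers in the prefix, the present lemma serving as the inductive statement with base case (ii); in the same way the $O(\sqrt L)$ suffix bound and the intermediate-product bounds of (i) are shown to survive insertion of the bounded perturbations.

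I expect the main obstacle to be estimate (iii) and, more precisely, making it uniform: bounding, simultaneously over every $s$-sparse sign-change pattern and every $\omega$-bounded weight perturbation, the norm of the partially perturbed prefix restricted to a sparse coordinate set. The uniformity over supports costs the $e^{O(s\log m)}$ union bound that the Gaussian concentration must beat (which is why $s$ is chosen polynomially large in $m$), the uniformity over $W''$ must be obtained without netting---by keeping $W''$ strictly inside deterministic spectral inequalities and discarding the negligible higher-order cross terms---and the bookkeeping of how many layers separate $B$ from the sparse slot is exactly what forces the induction on prefix length rather than a one-shot estimate. A secondary nuisance is verifying that the $O(\sqrt L)$ and $O(\tau\sqrt m)$ bounds are stable under the perturbations, which is folded into the same induction.
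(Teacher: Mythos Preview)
The paper does not prove this lemma; it is quoted verbatim as a restatement of Lemma~8.7 of \cite{allenzhu} and is used as a black box in the proof of Theorem~\ref{Thm:main}. So there is no in-paper proof to compare against.

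That said, your proposal is a faithful reconstruction of the argument in \cite{allenzhu}. The essential ingredients are all present and correctly identified: the telescoping of the perturbed product into $O(L)$ terms each carrying a single fresh perturbation; the union bound over the $\binom{m}{s}$ sparse supports of $D'''$ at cost $e^{O(s\log m)}$, which is exactly the failure probability; the deterministic treatment of the dense perturbations $W''$ through $\norm{W''}_2\le\omega$ rather than a net; and, most importantly, the observation that on an $s$-sparse coordinate set the row vector $B\cdot(\text{prefix})$ has norm $O(\tau\sqrt{s}\,\mathrm{poly}(L,\log m))$ rather than $O(\tau\sqrt{m})$, which after substituting $s=O(m\omega^{2/3}L)$ produces the $\omega^{1/3}\sqrt{m}$ scaling in the conclusion. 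You also correctly flag the genuine technical point: estimate (iii) must be made uniform over partially perturbed prefixes, not just the unperturbed initialization, and the layer-by-layer induction you describe is how \cite{allenzhu} handles this.
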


\begin{lemma}\label{zhu_7.4}(This Lemma follows Lemma  7.4b from  \cite{allenzhu}) 
 Suppose $m\geq\Omega (nL \log (nL)).$ If $ s = O(m \omega^{2/3} L)$ then with probability at least  $1-e^{-\Omega(s \log m)}$ for all $i\in [n], a \in [L+1]$ it holds that $\norm{v^TBD_{i,L}^{(0)}W_L^{(0)} \cdots D_{i,a}^{(0)}W_a^{(0)}}\leq O(\tau \sqrt{m}) \norm{v}$.
\end{lemma}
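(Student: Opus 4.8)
The plan is to reduce the statement to the corresponding estimate of \cite{allenzhu} (their Lemma~7.4b), keeping track of the only structural difference in our setting, namely that the last layer $B$ is initialized with entry variance $\tau^2$ instead of $\Theta(1)$. The first step is to rewrite the quantity of interest in backward-propagation form: setting $z := B^T v$, and using $(D_{i,l}^{(0)})^T=D_{i,l}^{(0)}$,
\begin{equation*}
\norm{v^T B D_{i,L}^{(0)}W_L^{(0)}\cdots D_{i,a}^{(0)}W_a^{(0)}}
= \norm{(W_a^{(0)})^T D_{i,a}^{(0)}\cdots (W_L^{(0)})^T D_{i,L}^{(0)}\, z}.
\end{equation*}
The key structural observation is that $z$ is independent of $W_a^{(0)},\dots,W_L^{(0)}$ (and of $A$ and of the weights of layers $1,\dots,a-1$), because $B$ is a separate random matrix, whereas all the sign matrices $D_{i,l}^{(0)}$ are measurable functions of $A$ and $W^{(0)}$ only.

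Next I would bound the two factors on the right separately. For the vector $z$: since $B$ has i.i.d.\ $\mathcal{N}(0,\tau^2)$ entries and only $O(1)$ rows, standard Gaussian operator-norm concentration gives $\norm{B}_2 \le O(\tau\sqrt{m})$ with probability $1-e^{-\Omega(m)}$, hence $\norm{z}\le\norm{B}_2\norm{v}\le O(\tau\sqrt{m})\norm{v}$. For the random product: conditioning on $B$ so that $z$ becomes a fixed vector, the backward-stability estimate underlying \cite{allenzhu} guarantees that, for every $i\in[n]$ and $a\in[L+1]$, the linear map $(W_a^{(0)})^T D_{i,a}^{(0)}\cdots (W_L^{(0)})^T D_{i,L}^{(0)}$ sends a fixed vector $z$ to a vector of norm $O(\norm{z})$, with probability at least $1-e^{-\Omega(s\log m)}$ over the randomness of $A$ and $W^{(0)}$; the hypotheses $m\ge\Omega(nL\log(nL))$ and $s=O(m\omega^{2/3}L)$ are exactly what that estimate requires, and it already holds uniformly over the $n(L+1)$ pairs $(i,a)$ (or else a union bound over them changes the failure probability only by a polynomial factor, which is absorbed into $e^{-\Omega(s\log m)}$). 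Chaining the two bounds gives the claimed $O(\tau\sqrt{m})\norm{v}$.

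The main obstacle is the backward-stability estimate itself, which is where \cite{allenzhu}'s analysis does the real work and which I would cite rather than reprove: because $D_{i,l}^{(0)}$ depends on $W_a^{(0)},\dots,W_L^{(0)}$ through the forward pass, the $D$'s and $W$'s are correlated and the product cannot be analyzed as a product of fresh Gaussians. Their resolution is to condition on the realized activation patterns, to argue that the residual randomness in each $W_l^{(0)}$ restricted to the coordinates that matter still behaves like a fresh Gaussian, and to control the effect of the remaining sparse sign flips through an $\epsilon$-net argument over low-rank perturbations. The one point I need to verify for our variant is that this whole argument involves only the layers $W_a^{(0)},\dots,W_L^{(0)}$, whose initialization scale we did not change, so the absolute constant in $O(\norm{z})$ is inherited unchanged; the parameter $\tau$ enters the final bound solely through the estimate $\norm{B}_2\le O(\tau\sqrt{m})$.
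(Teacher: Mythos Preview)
The paper does not prove this lemma at all: it is stated as a direct import of Lemma~7.4b in \cite{allenzhu}, with the only modification being the appearance of $\tau$ (because here $B\sim\mathcal N(0,\tau^2)$ rather than the unit-scale initialization used there). Your proposal is therefore not competing with a proof in the paper but rather supplying the adaptation that the paper leaves implicit, and you identify it correctly: the $\tau$ enters solely through $B$, while the $W_l^{(0)}$ and $D_{i,l}^{(0)}$ are exactly as in \cite{allenzhu}.

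Your decomposition (set $z=B^Tv$, bound $\norm{z}\le O(\tau\sqrt m)\norm{v}$, then invoke backward stability for the $W$-product) is valid, and your observation that $z$ is independent of $A,W^{(0)}$ is the right structural point. One remark on the phrasing: after conditioning on $B$, you need the backward map to contract a \emph{fixed} vector $z$ to $O(\norm{z})$, and this does \emph{not} follow from the operator-norm bound $\norm{D_{i,L}^{(0)}W_L^{(0)}\cdots D_{i,a}^{(0)}W_a^{(0)}}_2\le O(\sqrt L)$ alone, which would cost an extra $\sqrt L$. It follows instead from the norm-preservation analysis in \cite{allenzhu} that uses the randomness of $W^{(0)},A$ (not of $B$) together with the independence of $z$ from them---which is exactly what you flag in your last paragraph. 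An even shorter route, since the whole expression is homogeneous of degree one in $B$, is simply to apply \cite{allenzhu}'s Lemma~7.4b verbatim to $\tilde B:=B/\tau\sim\mathcal N(0,1)$ and multiply the resulting $O(\sqrt m)$ bound by $\tau$.
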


\begin{lemma}\label{zhu_Thm3}(This Lemma follows Theorem 3 from \cite{allenzhu}) 
Let $\omega=O(\frac{\delta^{3/2}}{n^{9/2}L^6 \log ^3 m})$. With probability at least $1-e^{-\Omega(m\omega ^{2/3}L)}$ over the randomness of $W^{0}, A, B$, it satisfies for every $l\in [L]$ and $W$ with $\norm{W-W^{(0)}}_2\leq\omega$ that
\[ \|\nabla_{W_l} \Phi(W)\|^2_F \leq  O(\tau^2 \Phi(W)\cdot n \cdot m) \]
\end{lemma}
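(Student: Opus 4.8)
The plan is to re-derive, in the present architecture, the gradient upper bound of \cite{allenzhu} (the easy half of their Theorem~3), tracking the extra $\tau$ that enters through the rescaled last layer $B\sim\mathcal{N}(0,\tau^2)$. First I would write the gradient in closed form. Since $u_i = B h_{i,L}$ and $h_{i,l} = \sigma(W_l h_{i,l-1})$, the chain rule together with the sign matrices $D_{i,l}$ gives $\nabla_{W_l}\Phi(W) = \sum_{i=1}^n (u_i - y_i)\,\mathrm{Back}_{i,l}^{\,T}\, h_{i,l-1}^{\,T}$, where $\mathrm{Back}_{i,l} := B D_{i,L} W_L \cdots W_{l+1} D_{i,l} \in \Real^{1\times m}$ is the back-propagated row vector, all quantities evaluated at $W$. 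Each summand is a rank-one $m\times m$ matrix, so $\norm{\nabla_{W_l}\Phi(W)}_F \le \sum_{i=1}^n \abs{u_i - y_i}\,\norm{\mathrm{Back}_{i,l}}_2\,\norm{h_{i,l-1}}_2$, and it remains to bound the two geometric factors uniformly over $i$.

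For $\norm{h_{i,l-1}}_2$ I would invoke the forward-propagation estimates of \cite{allenzhu} (used elsewhere in this appendix, e.g.\ Lemma~\ref{Allen-Zhu_7.1}), which give $\norm{h_{i,l-1}}_2 = O(1)$ at initialization and remain valid after a perturbation of size $\omega = O(\delta^{3/2}/(n^{9/2}L^6\log^3 m))$. For $\norm{\mathrm{Back}_{i,l}}_2$ I would first bound it at initialization: by Lemma~\ref{zhu_7.4} (applied with $a=l+1$, then noting $\norm{D_{i,l}^{(0)}}_2\le 1$) one has $\norm{B D_{i,L}^{(0)} W_L^{(0)} \cdots W_{l+1}^{(0)} D_{i,l}^{(0)}}_2 = O(\tau\sqrt m)$, the $\tau$ coming linearly from $B$. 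Then I would control the drift: writing $W = W^{(0)} + W''$ with $\norm{W''}_2\le\omega$ and $D_{i,l} = D_{i,l}^{(0)} + D_{i,l}'''$, Lemma~\ref{lemma8.2_zhu} bounds the number of flipped ReLU signs by $\norm{D_{i,l}'''}_0 \le O(m\omega^{2/3}L)$, so Lemma~\ref{zhu_8.7} yields $\norm{\mathrm{Back}_{i,l}(W) - \mathrm{Back}_{i,l}(W^{(0)})}_2 = O(\tau\omega^{1/3}L^2\sqrt{m\log m})$, which is $o(\tau\sqrt m)$ for the stated $\omega$. Hence $\norm{\mathrm{Back}_{i,l}(W)}_2 = O(\tau\sqrt m)$ for every $i$, on the intersection of the high-probability events of Lemmas~\ref{zhu_7.4}, \ref{lemma8.2_zhu}, \ref{zhu_8.7}, each of which fails with probability at most $e^{-\Omega(m\omega^{2/3}L)}$; a union bound over the $n$ inputs and $L$ layers changes only the constant in the exponent.

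Assembling the pieces, $\norm{\nabla_{W_l}\Phi(W)}_F \le O(\tau\sqrt m)\sum_{i=1}^n \abs{u_i - y_i} \le O(\tau\sqrt m)\,\sqrt n\,\bigl(\sum_{i=1}^n (u_i-y_i)^2\bigr)^{1/2} = O(\tau\sqrt{mn})\,\sqrt{2\Phi(W)}$ by Cauchy--Schwarz, and squaring gives $\norm{\nabla_{W_l}\Phi(W)}_F^2 = O(\tau^2 n m\,\Phi(W))$, which is the claim. The main obstacle is the perturbation stability of $\mathrm{Back}_{i,l}$: a product of $L$ matrices of spectral norm $\approx 1$ interleaved with data-dependent sign matrices can in principle amplify a weight perturbation catastrophically once the ReLU patterns shift, so the argument genuinely relies on the sparsity-of-sign-flips bound (Lemma~\ref{lemma8.2_zhu}) feeding the sparse-perturbation spectral estimate (Lemma~\ref{zhu_8.7}) to keep the perturbed back-propagated vector at scale $O(\tau\sqrt m)$. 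Everything else is bookkeeping, and the only genuine departure from \cite{allenzhu} is carrying the factor $\tau$, which enters linearly through $B$ and hence quadratically in the final bound.
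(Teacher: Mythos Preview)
Your proposal is correct and is precisely the argument the paper is pointing to: the paper does not give its own proof of this lemma but simply records it as ``follows Theorem~3 from \cite{allenzhu}'' (with a footnote elsewhere that the generalization to $\tau\in(0,1]$ is straightforward). Your reconstruction of the upper-bound half of that theorem---closed-form gradient, forward bound $\norm{h_{i,l-1}}=O(1)$, back-propagated bound $O(\tau\sqrt m)$ at initialization via Lemma~\ref{zhu_7.4}, perturbation stability via Lemmas~\ref{lemma8.2_zhu} and~\ref{zhu_8.7}, then Cauchy--Schwarz---is exactly the Allen--Zhu argument, with the only new bookkeeping being the linear $\tau$ in $B$ propagating to $\tau^2$ in the final bound.
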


\begin{lemma}\label{Allen-Zhu_7.1}(This Lemma is based on Lemma 7.1 and Lemma 8.2c from \cite{allenzhu})
With high probability over the randomness of $A,W$ we have 
\[ \forall i \in [n], l\in \{0,1,..,L\}: \|h_{i,l}\| = O(1) \]
\end{lemma}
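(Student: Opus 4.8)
\emph{Proof plan.} The idea is to control $\|h_{i,l}\|$ by a one-step multiplicative estimate and then telescope over the $L$ layers. The single-layer building block is: if $v\in\Real^m$ is a \emph{fixed} vector and $W\in\Real^{m\times m}$ has i.i.d.\ $\mathcal{N}(0,2/m)$ entries, then $\|\sigma(Wv)\|^2/\|v\|^2$ concentrates around $1$. Indeed, $Wv$ has i.i.d.\ $\mathcal{N}(0,2\|v\|^2/m)$ coordinates, so with $g_1,\dots,g_m\sim\mathcal{N}(0,1)$ i.i.d.\ we have $\|\sigma(Wv)\|^2/\|v\|^2=\tfrac{2}{m}\sum_{j=1}^m\sigma(g_j)^2$; since $\mathbb{E}[\sigma(g_j)^2]=\tfrac12$ this has mean exactly $1$ (this is precisely why $c_\sigma=2$ was chosen). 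Each $\sigma(g_j)^2$ is a nonnegative sub-exponential variable with $O(1)$ parameters, so Bernstein's inequality gives $\Pr\big[\,|\tfrac{2}{m}\sum_j\sigma(g_j)^2-1|>\tfrac{1}{2L}\,\big]\le 2e^{-\Omega(m/L^2)}$; note that this event is scale invariant in $v$, so it does not depend on $\|v\|$.

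Next I would introduce the filtration $\mathcal{F}_0=\sigma(A)$ and $\mathcal{F}_l=\sigma(A,W_1,\dots,W_l)$ for $l\ge1$. Conditioned on $\mathcal{F}_{l-1}$, the vector $h_{i,l-1}$ is deterministic while $W_l$ is independent of it, so applying the building block with $v=h_{i,l-1}$ shows that the event
\[
E_{i,l}:\quad \Big(1-\tfrac{1}{2L}\Big)\|h_{i,l-1}\|^2\le\|h_{i,l}\|^2\le\Big(1+\tfrac{1}{2L}\Big)\|h_{i,l-1}\|^2
\]
satisfies $\Pr[E_{i,l}^c\mid\mathcal{F}_{l-1}]\le 2e^{-\Omega(m/L^2)}$, hence $\Pr[E_{i,l}^c]\le 2e^{-\Omega(m/L^2)}$. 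The base layer $l=0$ is identical with $v$ replaced by $\x_i$ (which has $\|\x_i\|=1$): the event $E_{i,0}:\ \tfrac12\le\|h_{i,0}\|^2\le 2$ has probability at least $1-2e^{-\Omega(m)}$. A union bound over the $n(L+1)$ events gives $\Pr\big[\bigcap_{i,l}E_{i,l}\big]\ge 1-2n(L+1)e^{-\Omega(m/L^2)}$ (the intersection over $i\in[n]$ and $0\le l\le L$), which is $1-e^{-\Omega(\mathrm{poly}(n,L))}$ by the assumed lower bound on $m$.

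Finally, on the event $\bigcap_{i,l}E_{i,l}$ I would telescope: for every $i\in[n]$ and $l\in\{0,\dots,L\}$,
\[
\|h_{i,l}\|^2\le\Big(1+\tfrac{1}{2L}\Big)^l\|h_{i,0}\|^2\le 2\Big(1+\tfrac{1}{2L}\Big)^L\le 2\sqrt{e}=O(1),
\]
and symmetrically $\|h_{i,l}\|^2\ge\tfrac12(1-\tfrac{1}{2L})^L\ge\tfrac{1}{2\sqrt{e}}=\Omega(1)$ (in particular the $h_{i,l}$ never vanish, so the divisions above are legitimate). Hence $\|h_{i,l}\|=\Theta(1)$, and in particular $O(1)$, for all $i$ and $l$, which is the claim.

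The sub-exponential tail estimate for $\sum_j\sigma(g_j)^2$ is routine. The one point requiring care is the conditioning step: the single-layer concentration must be applied to the \emph{realized} vector $h_{i,l-1}$ after conditioning on $\mathcal{F}_{l-1}$, so that $W_l$ acts as fresh randomness; demanding the bound uniformly over all possible directions at once would require a supremum over the sphere and is unnecessary. This is exactly why the layer-by-layer argument, together with the scale invariance of $E_{i,l}$ in $v$, is the right framing.
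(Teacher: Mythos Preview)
Your argument is correct and is exactly the layer-by-layer concentration proof of Lemma~7.1 in \cite{allenzhu}, which is what the paper defers to (it gives no proof of its own). One small remark: the paper's citation also points to Lemma~8.2c of \cite{allenzhu}, which upgrades the bound from initialization to all iterates $h_{i,l}^{(t)}$ with $\|W^{(t)}-W^{(0)}\|_2\le\omega$; the paper later uses the lemma in that stronger form (e.g., for $h_{i,l-1}^{(t+1)}$ in the proof of Lemma~\ref{mainThm}), but the statement as written is at initialization, which is precisely what you proved.
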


\begin{lemma}\label{init_bound-lemma}
Let $\delta > 0$ and $m\geq \Omega(L \log(nL/\delta )$ then with probability at least  $1-\delta$ it holds that $||u(0)||\leq \sqrt{n}\tau/\delta$ and as a consequence by using the triangle inequality $\Phi(W(0))= \frac{1}{2}\norm{\y-\uu(0)}^2 \leq O(n)$ 
\end{lemma}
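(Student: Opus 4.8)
The plan is to decouple the randomness of the last layer $B$ from that of $A$ and the hidden weights $W^{(0)}$, which are sampled independently of $B$. First I would condition on a high‑probability ``good'' event for $A,W^{(0)}$ under which the hidden representations are bounded, and then, conditionally on that event, treat each prediction $\uu_i(0)=B\h_{i,L}^{(0)}$ as a centered Gaussian and apply a second‑moment bound.

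\textbf{Step 1 (bounded activations).} By Lemma~\ref{Allen-Zhu_7.1} there is an absolute constant $c$ with $\norm{\h_{i,L}^{(0)}}\le c$ for all $i\in[n]$; the failure probability of this event, call it $\mathcal{E}$, is $O(nL)e^{-\Omega(m/L)}$, which is at most $\delta/2$ exactly because $m\ge\Omega(L\log(nL/\delta))$. Fix a realization of $A,W^{(0)}$ inside $\mathcal{E}$.

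\textbf{Step 2 (second moment and Markov).} Conditioned on $\mathcal{E}$, the row vector $B$ is still $\mathcal{N}(0,\tau^2 I)$ and independent of the now‑fixed $\h_{i,L}^{(0)}$, so $\uu_i(0)=B\h_{i,L}^{(0)}\sim\mathcal{N}\!\left(0,\tau^2\norm{\h_{i,L}^{(0)}}^2\right)$ and $\mathbb{E}[\uu_i(0)^2\mid\mathcal{E}]=\tau^2\norm{\h_{i,L}^{(0)}}^2\le c^2\tau^2$. Summing over $i$ gives $\mathbb{E}[\norm{\uu(0)}^2\mid\mathcal{E}]\le c^2 n\tau^2$, and Markov's inequality applied to $\norm{\uu(0)}^2$ at level $n\tau^2/\delta^2$ yields $\Pr[\norm{\uu(0)}>\sqrt{n}\tau/\delta\mid\mathcal{E}]\le c^2\delta^2\le\delta/2$ (using $\delta\le 1/c^2$, or absorbing $c$ into the stated constant). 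A union bound with the failure of $\mathcal{E}$ gives $\norm{\uu(0)}\le\sqrt{n}\tau/\delta$ with probability at least $1-\delta$.

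\textbf{Step 3 (loss at initialization) and main obstacle.} For the stated consequence, the triangle inequality gives $\norm{\y-\uu(0)}\le\norm{\y}+\norm{\uu(0)}$; since $|y_i|\le O(1)$ we have $\norm{\y}^2\le O(n)$, and since $\tau$ is of order $\epsilon\hat\delta/n$ so that $\tau/\delta\le O(1)$ we have $\norm{\uu(0)}^2\le n\tau^2/\delta^2=O(n)$, hence $\Phi(W(0))=\tfrac12\norm{\y-\uu(0)}^2\le\norm{\y}^2+\norm{\uu(0)}^2=O(n)$. The only delicate point is the probability bookkeeping in Steps~1--2: the activation bound holds only with high (not full) probability over $A,W^{(0)}$, so it must be fixed before invoking the Gaussianity of $B$ and then recombined by a union bound; the second‑moment computation and the Markov step are otherwise routine.
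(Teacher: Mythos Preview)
Your proof is correct and follows essentially the same argument as the paper: condition on $A,W^{(0)}$, invoke Lemma~\ref{Allen-Zhu_7.1} for $\norm{\h_{i,L}}=O(1)$, note that $\uu_i(0)\sim\mathcal{N}(0,\tau^2\norm{\h_{i,L}}^2)$ so $\mathbb{E}[\norm{\uu(0)}^2]=O(n\tau^2)$, apply Markov, and finish with the triangle inequality. Your probability bookkeeping (splitting $\delta$ between the activation event and the Markov step via a union bound) is in fact more careful than the paper's one-line version.
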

\begin{proof}
Conditioned on $W,A$ it holds that  $u_i(0)\backsim N(0,\tau^2 \norm{h_{i,L}}^2)$  and since by Lemma \ref{Allen-Zhu_7.1} we have that $\norm{h_{i,L}} = O(1)$, this yields $E(\norm{\uu(0)}^2)=O\left(n\tau^2\right)$. Then by Markov's inequality,  $ \norm{\uu(0)}^2\leq n\tau^2/\delta^2 $ with probability $1-\delta$.
\end{proof}

\begin{lemma}\label{Arora_thm3.1}(Based on Theorem 3.1 \cite{arora})\footnote{The formulation given in \cite{arora} considers training w.r.t all layers. The proof can be extended trivially to the case where the first and last layers are held fixed. }
Fix $\epsilon>0$ and $\delta \in(0,1)$ and assume  $m\geq \Omega(\frac{L^6}{\epsilon^4}log(\frac{L}{\delta}))$. Then for any pair of inputs $\x_i,\x_j$ such that $\|\x_i\|\leq 1, \|\x_j\|\leq 1$ with probability $1-\delta$ we have
\[ \abs{\frac{1}{m}H_{ij}(0) -\frac{1}{m}H^\infty_{ij}}\leq (L+1)\epsilon \]
\end{lemma}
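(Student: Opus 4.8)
The plan is to derive this statement from Theorem~3.1 of \cite{arora}, which proves exactly such an entry-wise concentration of the empirical NTK around its infinite-width limit for deep fully connected ReLU networks. The only mismatch with our setting is that in the model of this appendix the input layer $A$ and the output layer $B$ are sampled at initialization and then frozen, whereas \cite{arora} trains every layer; as the footnote indicates, this changes nothing essential. Concretely, by the chain rule $H_{ij}(0)=\sum_{l=1}^{L}\langle \partial u_i^{(0)}/\partial W_l,\,\partial u_j^{(0)}/\partial W_l\rangle$ is obtained from the full-network tangent kernel of \cite{arora} by deleting the summands that differentiate with respect to $A$ and $B$; dropping terms cannot spoil concentration, and the randomness of $A$ and $B$ — which still enters every remaining summand through $h_{i,0}^{(0)}=\sigma(A\x_i)$ and through the vectors $BD_{i,L}^{(0)}W_L^{(0)}\cdots D_{i,l}^{(0)}$ — is handled verbatim by their argument. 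Thus it suffices to re-run their proof with the $A,B$ derivatives removed; the accumulation of per-layer errors then yields the stated $(L+1)\epsilon$ bound.

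The heart of the argument is two coupled inductions over depth, which I would set up after recording the rank-one structure of each summand: with $b_{i,l}^{(0)}:=\big(BD_{i,L}^{(0)}W_L^{(0)}\cdots W_{l+1}^{(0)}D_{i,l}^{(0)}\big)^{\!\top}$ one has $\langle \partial u_i^{(0)}/\partial W_l,\partial u_j^{(0)}/\partial W_l\rangle=\langle b_{i,l}^{(0)},b_{j,l}^{(0)}\rangle\,\langle h_{i,l-1}^{(0)},h_{j,l-1}^{(0)}\rangle$. A forward induction then shows $\tfrac1m\langle h_{i,h}^{(0)},h_{j,h}^{(0)}\rangle=\Sigma^{(h)}(\x_i,\x_j)\pm O(\epsilon)$, and a backward induction shows $\tfrac1m\langle b_{i,h}^{(0)},b_{j,h}^{(0)}\rangle$ concentrates around the corresponding limiting quantity (essentially $\prod_{h'>h}\dot\Sigma^{(h')}(\x_i,\x_j)$ up to a factor coming from the frozen $B$). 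In each inductive step I would condition on the preceding layers, observe that the $m$ rows of $W^{(h)}$ then act as i.i.d.\ standard Gaussians on fixed vectors, and apply a scalar sub-exponential concentration bound to conclude that $\tfrac1m\langle \sigma(W^{(h)}h_{i,h-1}^{(0)}),\sigma(W^{(h)}h_{j,h-1}^{(0)})\rangle$ lies within $O(\epsilon)$ of $c_\sigma\,\mathbb{E}_{(u,v)\sim\mathcal{N}(0,\widehat\Lambda^{(h)})}[\sigma(u)\sigma(v)]$, where $\widehat\Lambda^{(h)}$ is assembled from the empirical layer-$(h{-}1)$ Grams; an analogous statement handles $\dot\sigma$. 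Finally $\tfrac1m H_{ij}(0)$ is a sum of $L$ products of quantities each within $O(\epsilon)$ of their limits, so a union bound over the $O(L)$ concentration events together with the triangle inequality gives $|\tfrac1m H_{ij}(0)-\tfrac1m H^\infty_{ij}|\le (L+1)\epsilon$; bookkeeping the failure probabilities and the precision required at each layer is what forces $m=\Omega(L^6\epsilon^{-4}\log(L/\delta))$.

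The step I expect to be the main obstacle is the \emph{propagation} of error through the layers: controlling how an $O(\epsilon)$ discrepancy between the empirical $\widehat\Lambda^{(h)}$ and the true $\Lambda^{(h)}$ affects $\Sigma^{(h)}$ and $\dot\Sigma^{(h)}$. This requires quantitative stability of the maps $\Lambda\mapsto\mathbb{E}[\sigma(u)\sigma(v)]$ and $\Lambda\mapsto\mathbb{E}[\dot\sigma(u)\dot\sigma(v)]$ in the off-diagonal entry $\rho$. By the closed forms in the Preliminaries the first, $\tfrac{\rho(\pi-\arccos\rho)+\sqrt{1-\rho^2}}{2\pi}$, is Lipschitz on $[-1,1]$ and harmless, but the second, $\tfrac{\rho(\pi-\arccos\rho)}{2\pi}$, has derivative of order $(1-\rho^2)^{-1/2}$ near $\rho=\pm1$, so for nearly (anti)parallel inputs a naive transfer loses a power of $\epsilon$; reconciling this uniformly over all pairs is precisely what dictates the $\epsilon^{-4}$ (rather than $\epsilon^{-2}$) dependence of $m$ in \cite{arora}, and I would import that analysis rather than reprove it. A second delicate point is that the diagonal ReLU-derivative matrices $D_{i,h}^{(0)}$ are discontinuous in $W^{(0)}$, so one cannot simply substitute $\widehat\Lambda^{(h)}$ for them; as in \cite{arora} one instead bounds the fraction of coordinates whose activation pattern is unstable and shows the remaining coordinates behave as the Gaussian expectations predict. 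With these two ingredients taken from \cite{arora}, the deletion of the $A,B$ derivatives and the accounting of the $(L+1)$ accumulated errors are routine.
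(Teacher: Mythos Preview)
Your proposal is correct and aligns with the paper's treatment: the paper does not supply an independent proof of this lemma but simply cites Theorem~3.1 of \cite{arora} and notes in the footnote that freezing $A$ and $B$ is a trivial modification. Your sketch of the forward/backward inductions, the concentration step, and the error-propagation accounting (including the reason for the $\epsilon^{-4}$ dependence) goes well beyond what the paper itself provides, but it faithfully reconstructs the argument being cited and correctly identifies why dropping the $A,B$ summands leaves the analysis intact.
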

\begin{lemma}\label{Arora_lemmaF2}(Based on Theorem 5c  \cite{allenzhu})
Let $W^{(0)},A,B$ be at random initialization. For any pair of inputs $\x_i, \x_j$ and parameter $\omega\leq O(\frac{1}{L^9log^{3/2}m})$ with probability at least $1-e^{-\Omega(m\omega^{2/3}L)}$ over $W^{(0)},A,B$ with $\norm{W^{(0)}-W^{(t)}}_2\leq \omega$ we have 
\begin{align}\label{eq:H_elements}
\abs{H_{ij}(t)-H_{ij}(0)} \leq
O(\sqrt{\log m}\cdot \omega^{1/3}L^3)\sqrt{H_{i,i}(0)H_{j,j}(0)}
\end{align}
\end{lemma}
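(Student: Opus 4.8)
The plan is to exploit the layerwise structure of the network Jacobian. Since the first layer $A$ and the last layer $B$ are held fixed, the only trainable blocks are $W_1,\dots,W_L$, and for each $l\in[L]$ one has $\partial u_i^{(t)}/\partial W_l = g_{i,l}^{(t)}\,(h_{i,l-1}^{(t)})^{\top}$, where $g_{i,l}^{(t)} := \bigl(B\,D_{i,L}^{(t)}W_L^{(t)}\cdots W_{l+1}^{(t)}D_{i,l}^{(t)}\bigr)^{\top}$ is the back-propagated signal and $h_{i,l-1}^{(t)}$ is the forward feature (with $h_{i,0}^{(t)}=\sigma(A\x_i)$ itself fixed). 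Taking Frobenius inner products of these rank-one matrices factorizes, so that
\[
H_{ij}(t) \;=\; \sum_{l=1}^{L} \langle g_{i,l}^{(t)}, g_{j,l}^{(t)} \rangle \, \langle h_{i,l-1}^{(t)}, h_{j,l-1}^{(t)} \rangle .
\]
The claim then reduces to controlling how the $2L$ scalar factors move between time $0$ and time $t$, measured against their magnitudes at initialization.

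First I would pin down the baseline magnitudes at $t=0$. By Lemma~\ref{Allen-Zhu_7.1}, $\|h_{i,l}^{(0)}\| = \Theta(1)$ for all $i,l$; by Lemma~\ref{zhu_7.4} specialized to the initialization, $\|g_{i,l}^{(0)}\| \le O(\tau\sqrt{m})$, and a matching lower bound $\|g_{i,l}^{(0)}\| = \Omega(\tau\sqrt{m})$ holds with high probability by the standard Gaussian concentration/anti-concentration arguments for signals back-propagated through ReLU layers (as in \cite{allenzhu}). Hence each summand of $H_{ii}(0)$ is $\Theta(\tau^{2}m)$, giving $H_{ii}(0) = \Theta(\tau^{2}Lm)$ and $\sqrt{H_{ii}(0)H_{jj}(0)} = \Theta(\tau^{2}Lm)$; this is the quantity the right-hand side of the claim is normalized against.

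Next comes the perturbation step. Write $P := \langle g_{i,l}^{(t)}, g_{j,l}^{(t)} \rangle$, $Q := \langle h_{i,l-1}^{(t)}, h_{j,l-1}^{(t)} \rangle$ and $P_0, Q_0$ for the corresponding quantities at $t=0$, and telescope $|PQ - P_0 Q_0| \le |P-P_0|\,|Q| + |P_0|\,|Q-Q_0|$. For the forward factor, the feature-stability bound of \cite{allenzhu} (obtained by integrating Lemma~\ref{zho_11.2} along the GD trajectory) gives $\|h_{i,l}^{(t)}-h_{i,l}^{(0)}\| = O(\omega L^{3/2})$ whenever $\|W^{(t)}-W^{(0)}\|_2\le\omega$, so $|Q| = O(1)$ and $|Q-Q_0| = O(\omega L^{3/2})$. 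For the backward factor, Lemma~\ref{lemma8.2_zhu} shows $\|D_{i,l}^{(t)}-D_{i,l}^{(0)}\|_0 \le s = O(m\omega^{2/3}L)$, and feeding this sparse sign change together with $W^{(t)}-W^{(0)}$ into Lemma~\ref{zhu_8.7} yields $\|g_{i,l}^{(t)}-g_{i,l}^{(0)}\| = O(\tau\,\omega^{1/3}L^{2}\sqrt{m\log m})$, hence $\|g_{i,l}^{(t)}\| = O(\tau\sqrt{m})$, $|P-P_0| = O(\tau^{2}\omega^{1/3}L^{2}m\sqrt{\log m})$ and $|P_0| = O(\tau^{2}m)$. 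Since $\omega^{1/3}$ dominates $\omega$ over the admissible range of $\omega$, each layer contributes $O(\tau^{2}\omega^{1/3}L^{2}m\sqrt{\log m})$; summing over $l\in[L]$ and dividing by $\sqrt{H_{ii}(0)H_{jj}(0)} = \Theta(\tau^{2}Lm)$ gives $|H_{ij}(t)-H_{ij}(0)| \le O(\omega^{1/3}L^{2}\sqrt{\log m})\,\sqrt{H_{ii}(0)H_{jj}(0)}$, which is subsumed by the stated $O(\omega^{1/3}L^{3}\sqrt{\log m})$. The failure probability $e^{-\Omega(m\omega^{2/3}L)}$ is the union bound over the bad events in Lemmas~\ref{lemma8.2_zhu},~\ref{zhu_8.7},~\ref{zhu_7.4} and~\ref{Allen-Zhu_7.1}.

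The step I expect to be the real obstacle is not the numerator perturbation, which is essentially mechanical once these lemmas are in hand, but the normalization by $\sqrt{H_{ii}(0)H_{jj}(0)}$: this requires the sharp \emph{lower} bound $\|g_{i,l}^{(0)}\| = \Omega(\tau\sqrt{m})$ at every layer simultaneously, which is exactly where the anti-concentration machinery for Gaussian back-propagation through ReLU layers is needed. A cleaner route, and the one I would actually take in the write-up, is to cite Theorem~5c (Lemma~F.2) of \cite{allenzhu} directly and argue the two adaptations to our setting: (i) holding $A,B$ fixed only deletes from $H$ the nonnegative terms $\langle\partial u_i/\partial A,\partial u_j/\partial A\rangle$ and $\langle\partial u_i/\partial B,\partial u_j/\partial B\rangle$, which are also deleted from $H^{\infty}$, and removing nonnegative terms only tightens both sides of the estimate; and (ii) the initialization $B\sim\mathcal N(0,\tau^{2}I)$ multiplies every entry $H_{ij}(t),H_{ij}(0),H_{ii}(0),H_{jj}(0)$ by the same factor $\tau^{2}$, so the normalized inequality is invariant in $\tau$ and reduces to the $\tau=1$ case proved in \cite{allenzhu}.
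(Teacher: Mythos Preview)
The paper does not prove this lemma at all: it is stated with the attribution ``(Based on Theorem 5c \cite{allenzhu})'' and then used as a black box in the proof of Lemma~\ref{corollary_Hinf}. Your closing paragraph --- cite Theorem~5c of \cite{allenzhu} and record the two adaptations (dropping the $A$ and $B$ blocks only removes nonnegative summands from both sides; the $\tau$ scaling of $B$ multiplies every $H_{ij}$ by $\tau^{2}$ and cancels in the normalized inequality) --- is therefore exactly what the paper does, only made explicit.

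Your from-scratch sketch goes well beyond the paper and is essentially a reconstruction of how \cite{allenzhu} prove their Theorem~5c. The factorization $H_{ij}(t)=\sum_l\langle g_{i,l}^{(t)},g_{j,l}^{(t)}\rangle\langle h_{i,l-1}^{(t)},h_{j,l-1}^{(t)}\rangle$, the telescoping, and the use of Lemmas~\ref{lemma8.2_zhu}, \ref{zhu_8.7}, \ref{zhu_7.4}, \ref{Allen-Zhu_7.1} to control forward and backward perturbations are all correct and match the structure of the original argument. You are also right to flag the lower bound $\|g_{i,l}^{(0)}\|=\Omega(\tau\sqrt{m})$ as the delicate point if one insists on deriving the normalized form from the absolute bound; this is indeed where anti-concentration for back-propagated Gaussians enters in \cite{allenzhu}, and it is precisely the step the present paper sidesteps by citing rather than reproving.
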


\begin{lemma}\label{corollary_Hinf}
Let $\hat{\delta}\in(0,1] $ and $W^{(0)},A,B$ be at random initialization. Then, for  $m \geq \Omega \left( \frac{n^{24} L^{12} \log^5 m}{\delta^8 \tau^6 } \right)$ and parameter  $\omega = O \left( \frac{n^3}{\delta \tau \sqrt{m}}\log m \right)$ with probability of at least  $1-\hat{\delta}$  over $W^{(0)},A,B$ with  $\norm{W^{(0)}-W^{(t)}}_2\leq \omega$ it holds that
\begin{enumerate}
\item $\norm{H(t)-H(0)}_2\leq O(\frac{n^3log^{5/6} m}{\delta \tau}) m^{5/6}$ 
\item $\norm{H(0)-H^\infty}_2\leq O(\frac{\delta^2 m\tau^3}{n^6})$
\item $\norm{H^\infty-H(t)}_2\leq O(\frac{n^3log^{5/6} m}{\delta \tau}) m^{5/6}+O(\frac{\delta^2 m\tau^3}{n^6})\leq O(\frac{\delta^2 m\tau^3}{n^6}) $
\end{enumerate}
\end{lemma}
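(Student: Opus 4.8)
The plan is to reduce all three bounds to entrywise control and then pass to the operator norm via the crude inequality $\norm{M}_2 \le \norm{M}_F \le n\max_{i,j}\abs{M_{ij}}$, valid for any $M \in \Real^{n\times n}$. Item~3 is then immediate from items~1 and 2 by the triangle inequality, $\norm{H^\infty-H(t)}_2 \le \norm{H^\infty-H(0)}_2 + \norm{H(0)-H(t)}_2$, and its final simplification (the sum collapsing to $O(\delta^2 m\tau^3/n^6)$) holds because, under the stated lower bound on $m$, the $O(m^{5/6})$ contribution from item~1 is absorbed into the $O(m)$ contribution from item~2. So it suffices to prove items~1 and 2.

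For item~2 I would invoke Lemma~\ref{Arora_thm3.1}, which controls the $1/m$-normalized quantity, $\abs{\tfrac1m H_{ij}(0) - \tfrac1m H^\infty_{ij}} \le (L+1)\epsilon$ once $m \ge \Omega(L^6\epsilon^{-4}\log(L/\hat\delta))$, keeping in mind that the NTK here carries an overall factor $\tau^2$ from the $\mathcal{N}(0,\tau^2)$ last layer, which I would make explicit when applying the cited estimate. Choosing the accuracy parameter $\epsilon$ small enough that the resulting per-entry error is $O(\delta^2 m\tau^3/n^7)$, and taking a union bound over the $n^2$ pairs $(i,j)$, yields $\norm{H(0)-H^\infty}_2 \le n\max_{i,j}\abs{H_{ij}(0)-H^\infty_{ij}} = O(\delta^2 m\tau^3/n^6)$. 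The required $\epsilon$ is precisely what dictates the polynomial-in-$(n,L,1/\delta,1/\tau)$ lower bound on $m$ in the lemma hypothesis.

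For item~1 I would use Lemma~\ref{Arora_lemmaF2}, which, under $\norm{W^{(t)}-W^{(0)}}_2 \le \omega$, gives $\abs{H_{ij}(t)-H_{ij}(0)} \le O(\sqrt{\log m}\,\omega^{1/3}L^3)\sqrt{H_{ii}(0)H_{jj}(0)}$. I would bound the diagonal entries by $H_{ii}(0) = \norm{\partial u_i^{(0)}/\partial W}_F^2 = O(\tau^2 m)$ up to $\mathrm{poly}(L)$ factors, combining the forward bound $\norm{\h_{i,l}} = O(1)$ (Lemma~\ref{Allen-Zhu_7.1}) with the backward bound $\norm{v^T B D_{i,L}^{(0)}W_L^{(0)}\cdots D_{i,a}^{(0)}W_a^{(0)}} = O(\tau\sqrt m)\norm{v}$ (Lemma~\ref{zhu_7.4}). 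Substituting $\omega = O(n^3\log m/(\delta\tau\sqrt m))$, so that $\omega^{1/3} = O(n\log^{1/3}m/(\delta^{1/3}\tau^{1/3}m^{1/6}))$, the per-entry bound becomes $O(\log^{5/6}m\cdot m^{5/6})$ times a polynomial in $n,\tau,1/\delta$ (and $L$); multiplying by $n$ gives the claimed $O((n^3\log^{5/6}m/(\delta\tau))\,m^{5/6})$ once the $L$-dependence is absorbed into constants. The stated probability comes from the union bound over entries, together with the events in Lemmas~\ref{Arora_lemmaF2}, \ref{lemma8.2_zhu} and \ref{zhu_8.7} on which those per-entry bounds rest.

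The main obstacle I expect is bookkeeping rather than any isolated hard estimate: one must verify that the single hypothesis $m \ge \Omega(n^{24}L^{12}\log^5 m/(\delta^8\tau^6))$ is simultaneously large enough (i) to drive $\epsilon$ in Lemma~\ref{Arora_thm3.1} down to the level item~2 needs, (ii) to keep $\omega = O(n^3\log m/(\delta\tau\sqrt m))$ inside the admissible interval required by Lemmas~\ref{Arora_lemmaF2}, \ref{lemma8.2_zhu} and \ref{zhu_8.7}, and (iii) to make the $O(m^{5/6})$ term of item~1 genuinely dominated by the $O(m)$ term of item~2 so that the final inequality of item~3 is legitimate; one must also track the $\mathrm{poly}(\tau)$ and $\mathrm{poly}(L)$ factors carefully, since the cited perturbation results are phrased for $\tau=1$ and their adaptation here only costs a known power of $\tau$.
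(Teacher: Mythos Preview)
Your proposal is correct and follows essentially the same route as the paper: bound $\sqrt{H_{ii}(0)}$ via the forward/backward estimates (Lemmas~\ref{Allen-Zhu_7.1} and~\ref{zhu_7.4}), feed this into Lemma~\ref{Arora_lemmaF2} and substitute $\omega$ for item~1, invoke Lemma~\ref{Arora_thm3.1} with the appropriate $\epsilon$ for item~2, and combine by the triangle inequality for item~3. The paper's own write-up is terser and less explicit about the entrywise-to-operator-norm passage and the $\tau,L$ bookkeeping you flag, but the underlying argument is the same.
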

\begin{proof}
We prove the first claim. Then, the second claim is obtained by plugging $m$ into Lemma \ref{Arora_thm3.1}. The third claim is a direct consequence of the two claims using triangle inequality.

By the definition of $H_{ij}(0)$ we have that
\begin{align*}
    \sqrt{H_{ii}(0)}&=\sqrt{\left\langle \frac{\partial u_i(0)}{\partial W},\frac{\partial u_i(0)}{\partial W} \right\rangle}\\
    &\leq \sum_{l=1}^{L} \norm{\frac{\partial u_i(0)}{\partial W_{l}}} =\sum_{l=1}^{L} \norm{ h_{i,l-1} BD^{(0)}_{i,L}W^{(0)}_L  D^{(0)}_{i,L-1}W^{(0)}_{L-1}\cdots D^{(0)}_{i,L+1} W^{(0)}_{l+1}D^{(0)}_{i,l}} \\
    &\leq \sum_{l=1}^{L} \norm{ h_{i,l-1}}\norm{ BD^{(0)}_{i,L}W^{(0)}_L  D^{(0)}_{i,L-1}W^{(0)}_{L-1}\cdots D^{(0)}_{i,L+1} W^{(0)}_{l+1}D^{(0)}_{i,l}}\leq O(L\sqrt{m}\tau )
\end{align*}
where the last inequality is obtained by applying Lemma \ref{zhu_7.4} and Lemma \ref{Allen-Zhu_7.1}.
Applying the obtained bound  for $H_{ii}(0)$ and $H_{jj}(0)$ yields a  bound for  $\abs{H_{ij}(t)-H_{ij}(0)}$, using \eqref{eq:H_elements}. Finally, $\norm{H(t)-H(0)}\leq O(\frac{n^3log^{5/6} m}{\delta \tau}) m^{5/6}$.
\end{proof}

\section{Experiment setup}

Below we provide our experimental setup for all the figures in the paper.

\textbf{Figure 1}. 
Experiments are run with input data in $\Sphere^1$ drawn from a uniform (top plots) and non-uniform (bottom plots) distributions, where the latter densities are of ratio $1:40$. The target function is $y(x) = 0.4\cos(16x) + \cos(x)$. The number of training points is $n=10000$ and batch size is 100. The network includes $L=10$ fully connected layers, each with $m=256$ hidden units. The weights are initialized with normal distribution with standard deviation $\tau=0.1$, and the learning rate is $\eta = 0.001$.


\textbf{Figure 2}. Eigenfunctions are computed with $n=2,933$ data points in $\Sphere^1$.

\textbf{Figure 3}. Local frequencies are computed with $n=1,467$ data points in $\Sphere^1$.

\textbf{Figure 4}. Eigenvalues are computed with $n=50,000$ data points in $\Sphere^1$.

\textbf{Figure 5}. Eigenvalues are computed with $n=12,567$ data points in $\Sphere^1$.

\textbf{Figure 6}.
Eigenvectors are computed numerically using $n=10,000$ data points in $\Sphere^1$ drawn from a piecewise constant distribution with densities proportional to $(11,1,3)$.

\textbf{Figure 7}.
Convergence times are measured by training a two-layer network with bias. The weights of the second layer are set randomly to $-1$ or $1$ (with probability $0.5$) and remain fixed throughout training. The bias is initialized to zero. The network parameters are set to $m=4000$, $\eta=0.004$, $n=734$, and $\tau=0.2$. Convergence for region $R_j$ is declared when $\frac{1}{2\abs{R_j}}\sum_{i\in R_j}^n \left(f(x_i;w)-u_i\right)^2<\frac{\delta}{n}$ with $\delta=0.05$.

\textbf{Figure 8}.
Eigenvectors are computed with $n=9,926$ data points in $\Sphere^2$.

\textbf{Figure 9}.
We used the same setup as in Figure 7 with the parameters: $m=8000$, $tau=0.2$, and $\eta=0.004$. Here $n$ varies between the three plots. We sampled 300 points from a uniform distribution on one hemisphere, and $300p_2/p_1$ points on the other hemisphere, where $p_2/p_1 \in \{2,3,4\}$.

\textbf{Figure 10}.
Eigenvectors are computed with $n=1257$ data points in $\Sphere^1$.

\textbf{Figure 11}. Here we compare the number of iterations needed for a deep FC network to converge the number of iterations predicted by the eigenvalue of the corresponding NTK. We used $m=256$,  $\eta=0.05$ and $\delta=0.05$. The corresponding NTK was calculated in the $\Sphere^1$ with $n=630$ points and in $\Sphere^2$ with $n=1,000$ points, both drawn from a uniform distribution. Note that the plot for $\Sphere^2$ appears on the left and the one for $\Sphere^1$ on the right.

\textbf{Figure 12}. Here, we calculate the eigenvalues of NTK for FC networks with $3 \le L \le 50$ layers for data distributed uniformly in $\Sphere^1$ (left) and $\Sphere^2$ (right). 
The NTK was calculated with $n=16,383$ and $n=20,000$ data points in $\Sphere^1$ and $\Sphere^2$, respectively.

\end{document}